\newcommand*{\addFileDependency}[1]{
  \typeout{(#1)}
  \@addtofilelist{#1}
  \IfFileExists{#1}{}{\typeout{No file #1.}}
}
\renewcommand{\algorithmicrequire}{ \textbf{Input:}} 
\renewcommand{\algorithmicensure}{ \textbf{Output:}} 
\newtheorem{definition}{Definition}[section]
\newtheorem{remark}[Definition]{Remark}
\newtheorem{problem}[Definition]{Problem}
\newcommand{\wxBlue}[1]{{\color{blue}{#1}}}
\newcommand{\wxWhite}[1]{{\color{white}{#1}}}
\newcommand{\wxbigBracket}[1]{\left\{#1\right\}}
\newcommand{\wxsmallBracket}[1]{\left(#1\right)}
\newcommand{\wxmiddleBracket}[1]{\left[#1\right]}
\newcommand{\wxlOneNorm}[1]{\Vert#1\Vert_1}
\newcommand{\wxcalAOT}{\mathcal{AOT}}
\newcommand{\wxcalAWB}{\mathcal{AWB}}
\newcommand{\wxcalRWDz}{\mathcal{W}_z}
\newcommand{\wxcalRWB}{\mathcal{WB}}
\newcommand{\wxcalD}{\mathcal{D}}
\newcommand{\wxcalF}{\mathcal{F}}
\newcommand{\wxcalS}{\mathcal{S}}
\newcommand{\wxcalO}{\mathcal{O}}
\newcommand{\wxcalOTilde}{\widetilde{\mathcal{O}}}
\newcommand{\wxcalX}{\mathcal{X}}
\newcommand{\wxcalPX}{\mathcal{P(X)}}
\newcommand{\wxcalMXplus}{\mathcal{M_+(X)}}
\newcommand{\wxcalWz}{\mathcal{W}_z}
\newcommand{\wxcalWzz}{\mathcal{W}_z^z}
\newcommand{\wxcalQ}{\mathcal{Q}}
\newcommand{\wxbfaout}{\mathbf{a}_{\mathtt{out}}}
\newcommand{\wxaout}{a_{\mathtt{out}}}
\newcommand{\wxbfbout}{\mathbf{b}_{\mathtt{out}}}
\newcommand{\wxbout}{b_{\mathtt{out}}}
\newcommand{\wxmuout}{\mu_{\mathtt{out}}}
\newcommand{\wxnuout}{\nu_{\mathtt{out}}}
\newcommand{\wxbAug}{b_{\mathtt{aug}}}
\newcommand{\wxmuAug}{\mu_{\mathtt{aug}}}
\newcommand{\wxnuAug}{\nu_{\mathtt{aug}}}
\newcommand{\wxbfaAug}{\mathbf{a}_{\mathtt{aug}}}
\newcommand{\wxbfbAug}{\mathbf{b}_{\mathtt{aug}}}
\newcommand{\wxCAug}{C_{\mathtt{aug}}}
\newcommand{\wxbfCAug}{\mathbf{C}_{\mathtt{aug}}}
\newcommand{\wxPAug}{P_{\mathtt{aug}}}
\newcommand{\wxbfPAug}{\mathbf{P}_{\mathtt{aug}}}
\newcommand{\wxdist}{\mathrm{dist}}
\newcommand{\wxVP}{\mathtt{o}}
\newcommand{\wxindicatorFun}{\mathbbm{1}}
\newcommand{\wxbbRplus}{\mathbb{R}_+}
\newcommand{\wxbbE}{\mathbb{E}}
\newcommand{\wxbbR}{\mathbb{R}}
\newcommand{\wxbfOne}{\mathbf{1}}
\newcommand{\wxfixRWB}{{\small \textsf{fixed-RWB}}}
\newcommand{\wxfreeRWB}{{\small \textsf{free-RWB}}}
\newcommand{\wxfixAWB}{{\small \textsf{fixed-AWB}}}
\newcommand{\wxfreeAWB}{{\small \textsf{free-AWB}}}
\newcommand{\wxnuTilde}{\tilde{\nu}}
\newcommand{\wxbfC}{\mathbf{C}}
\newcommand{\wxbfP}{\mathbf{P}}
\newcommand{\wxbfa}{\mathbf{a}}
\newcommand{\wxbfb}{\mathbf{b}}
\newcommand{\wxdiag}{\mathrm{diag}}
\newcommand{\wxVert}[1]{\left\lvert #1 \right\rvert}
\newcommand{\wxOPT}{\mathrm{OPT}}
\newcommand{\wxRefEq}[1]{(\ref{#1})}
\newcommand{\wxRefProm}[1]{\text{Problem} \ref{#1}}
\newcommand{\wxAddErr}{\epsilon_+}
\newcommand{\wxBall}{\mathbb{B}}
\newcommand{\wxDdim}{\textsf{ddim}}
\newcommand{\wxComment}[1]{{\color{blue}{$\triangleright$ #1}}}
\begin{document}

\newcommand\relatedversion{}
\renewcommand\relatedversion{\thanks{The full version of the paper can be accessed at \protect\url{https://arxiv.org/abs/1902.09310}}} 

\title{\Large On Robust Wasserstein Barycenter: The Model and Algorithm}

\author{Xu Wang\thanks{School of Computer Science and Technology, University of Science and Technology of China, \ worm,hjw0330,yangqingyuan,zjp123\}@mail.ustc.edu.cn } \and Jiawei Huang\footnotemark[1] \and Qingyuan Yang\footnotemark[1] \and Jinpeng Zhang \footnotemark[1]}


\date{}

\maketitle







\begin{abstract} \small\baselineskip=11pt 
    
The \emph{Wasserstein barycenter problem} is to compute the average of $m$ given probability measures, which has been widely studied in many different areas;
however, real-world data sets are often noisy and huge, which impedes its application in practice. 
Hence, in this paper, we focus on improving the computational efficiency of two types of \emph{robust Wasserstein barycenter problem} (RWB): \emph{fixed-support} RWB (\wxfixRWB) and \emph{free-support} RWB (\wxfreeRWB); actually, the former is a subroutine of the latter. 
Firstly, we improve efficiency through model reduction; we reduce RWB as an \emph{augmented Wasserstein barycenter problem}, which works for both \wxfixRWB \ and \wxfreeRWB. 
Especially, \wxfixRWB \ can be computed within $\wxcalOTilde(\frac{mn^2}{\wxAddErr})$ time by using an off-the-shelf solver, where $\wxAddErr$ is the pre-specified additive error and $n$ is the size of locations of input measures. 
Then, for \wxfreeRWB, we leverage a quality guaranteed data compression technique, coreset, to accelerate computation by
reducing the data set size $m$. It shows that running algorithms
on the coreset is enough instead of on the original data set. 
Next, by combining the model reduction and coreset techniques above, we propose an algorithm for \wxfreeRWB \ by updating the weights and locations alternatively. 
Finally, our experiments demonstrate the efficiency of our techniques.

\end{abstract}
\textbf{Keywords:} Wasserstein barycenter, robust optimization, coreset.

\section{Introduction}
\label{Sec:intr}

{\em Wasserstein distance} \cite{villani2009optimal} is a popular metric
for quantifying the difference between two probability distributions by the cost of mass transporting. {\em Wasserstein barycenter problem } \cite{agueh2011barycenters} is to compute a measure $\nu$ that minimizes the sum of Wasserstein distances to the given $m$ probability measures $\wxbigBracket{\mu^1,\ldots,\mu^m} $.
Wasserstein barycenter is a popular model for integrating multi-source information, 
where it can efficiently capture the geometric properties of the given data
~\cite{simon2020barycenters, rabin2011wasserstein}.

However, real-world data sets can be noisy and contain outliers. 
The outliers can be located in space arbitrarily; thus, the solution of Wasserstein barycenter problem could be destroyed significantly even with a single outlier. 
To tackle its sensitivity to outliers, we focus on two types of \emph{robust Wasserstein barycenter problem} (RWB)\footnote{Throughout this paper, the robustness refers to the outlier robustness.
}:\emph{ fixed-support} RWB (\wxfixRWB) and \emph{free-support} RWB (\wxfreeRWB). For the former, the locations of barycenter are pre-specified, so we only need to optimize the weights. For the latter, we usually solve it by updating the weights and locations alternatively.

Indeed, the outlier sensitivity is mainly due to its strict marginal constraints.
Thus, to promote the robustness of the fixed-support Wasserstein barycenter,
Le et al. \cite{le2021robust} used a KL-divergence regularization term to alter the hard marginal constraints elegantly.
Nevertheless, only for a particular case $m=2$, 
this formulation can be solved efficiently in $\wxcalOTilde(m n^2/\wxAddErr)$ time,
where $\wxAddErr$ is the pre-specified additive error and $n$ is the size of locations of input measures;
moreover, it suffers from the numerical instability issues caused by the regularization term \cite{peyre2017computational}.
For the \wxfreeRWB, to our best knowledge, few work has been conducted on this.

Besides, the computational burden is also a big obstacle. The free-support Wasserstein barycenter is notoriously hard in computation \cite{altschuler2021hardness}, let alone its robust version. 
Usually, we can accelerate \wxfreeRWB \ by improving the computation at each iteration. Moreover, coreset~\cite{feldman2020core}, a popular data compression technique (defined in \Cref{Def:coreset}), can also accelerate the computation by reducing the data set size $m$.
Roughly speaking, 
a coreset is a small-size proxy of a large-scale input data set $\wxcalQ$ with respect to some objective function while preserving the cost. 
Therefore, one can run an existing optimization algorithm on the coreset $\wxcalS$ rather than on the whole data set $\wxcalQ$, and consequently the total complexity can be reduced.

\paragraph{Contributions:}
We improve the computational efficiency of both \wxfixRWB \ and \wxfreeRWB. In our algorithm, the former \ is a subroutine of the latter (as shown in \Cref{Alg:2}). Our main contributions are as follows:

\begin{itemize}
    \item 
    Firstly, we consider model reduction.
    The method in Nietert et al. \cite{nietert2022outlier} can obtain the value of robust Wasserstein distance efficiently. But it does not compute the coupling (i.e., $\wxbfP$ in \Cref{Def:RWD}), which impedes its application in RWB. 
    Thus, we reduce robust Wasserstein distance problem into an augmented \emph{optimal transport} (OT) problem, which can be solved within $\wxcalOTilde(\frac{n^2}{\wxAddErr})$ time by the off-the-shelf solver \cite{jambulapati2019direct}. Moreover, our method can obtain both value and coupling efficiently. Based on this, we reduce RWB as an \emph{augmented Wasserstein barycenter problem} (AWB); that is, \wxfreeRWB \ and \wxfixRWB \ are reduced as \emph{free-support} AWB (\wxfreeAWB) \ and \emph{fixed-support} AWB (\wxfixAWB) \ respectively. Especially, via this, \wxfixRWB \ can be solved within $\wxcalOTilde(\frac{mn^2}{\wxAddErr})$ time \cite{dvinskikh2021improved} (while Le et al. \cite{le2021robust} can achieve this time complexity only for $m=2$); moreover, the \wxfixAWB \ is an \emph{linear programming} (LP) problem, and thus bypasses the numerical instability issues. Besides, this is essentially a subroutine of \wxfreeRWB, thus helping to improve its computational efficiency at each iteration later.
    
    \item
    Then, for \wxfreeRWB, we accelerate computation by reducing the data set size $m$ via coreset technique.
    We first introduce a method to obtain an $\wxcalO(1)$-approximate solution $\wxnuTilde$ (in \autoref{Lem:approx solution}). Then, we take $\wxnuTilde$ as an anchor to construct an $\wxcalOTilde( \min \{ \wxDdim, \frac{\log n/\epsilon}{\epsilon^2} \} \cdot \frac{n}{\epsilon^2})$ size coreset, and analyze its quality (in \Cref{Them:coreset}). Finally, we propose an algorithm for solving \wxfreeRWB \ by combining model reduction and coreset technique. More specifically, we use the coreset technique to accelerate the computation by reducing the data set size $m$. Meanwhile, to solve \wxfreeRWB, we can compute its corresponding \wxfreeAWB \ by updating the weights and locations alternatively; updating weights is computed by invoking \wxfixAWB \ (as in \Cref{Alg:2}).

\end{itemize}

\subsection{Other Related Works}
\label{Subsec:other related works}

\paragraph{Fixed-support Wasserstein barycenter:}
In recent years, a number of efficient algorithms have been proposed for computing approximation solution with additive error $\wxAddErr$
for fixed-support Wasserstein barycenter.\footnote{Let $\mathsf{opt}$ be the optimal value of fixed-support Wasserstein barycenter problem. The value induced by its approximation solution with additive error $\wxAddErr$ is at most $\mathsf{opt} + \wxAddErr$.} For example,
the Iterative Bregman projections (IBP) algorithm takes $\wxcalOTilde(mn^2/\wxAddErr^2)$ time ~\cite{benamou2015iterative,kroshnin2019complexity};
the accelerated IBP needs $\wxcalOTilde(mn^{5/2}/\wxAddErr)$ time ~\cite{guminov2020accelerated}; the FastIBP requires $\wxcalOTilde(mn^{7/3}/\wxAddErr^{4/3})$ time \cite{lin2020fixed}.
Especially,
Dvinskikh and Tiapkin \cite{dvinskikh2021improved} achieved $\wxcalOTilde(mn^2/\wxAddErr)$ time complexity by using area-convexity property and dual extrapolation techniques.

\paragraph{Coresets:} As a popular data compression technique, coresets have been widely used in machine learning, such as clustering \cite{chen2009coresets,feldman2011unified,braverman2022power}, regression \cite{tukan2020coresets} and robust optimization \cite{huang2022near,wang2021robust}.
Especially, some coreset methods have been proposed to reduce the data set size $m$. For instance,
Ding et al. \cite{ding2018geometric} presented a coreset method for geometric prototype problem; Cohen-Addad et al. \cite{cohen2021improved} designed a coreset for power mean problem; both of them leverage the properties of metric, and can be used to accelerate existing Wasserstein barycenter algorithms. However, due to the fact that robust Wasserstein distance is not a metric, we need to develop new techniques for our \wxfreeRWB.

\section{Preliminaries}
\label{Sec:pre}

\paragraph{Notations:} We define $[n]:= \wxbigBracket{1,\ldots,n}$. We denote the $\ell_1$-norm of a vector by $\wxlOneNorm{\cdot}$.
Let $(\wxcalX,\wxdist)$ be a metric space.
Let $\wxbbRplus$ be the non-negative real number set.
We use $\wxcalMXplus$ to denote the positive measure space on $\wxcalX$, and $\wxcalPX$ the probability measure space.

We use capital boldface letters to denote matrices, such as $\wxbfC$; $C_{ij}$ is its element in the $i$-th row and $j$-th column. Similarly, vector is denoted by lowercase boldface letters, such as $\wxbfa:=(a_1,\ldots,a_d)^T\in\wxbbR^d$; $a_i$ being its $i$-th element; $\wxbfa\succeq \wxbfb$ means $a_i\geq b_i$ for all $i\in[d]$. We define $(\wxbfa;\zeta): = (a_1,\ldots,a_d,\zeta)^T$ for any $\zeta\in\wxbbR$.
We use $\wxdiag(\wxbfa)$ to denote a matrix with diagonal $\wxbfa$ and 0 otherwise.

In the following, we will introduce several fundamental concepts.

\begin{definition}[Wasserstein distance]
    \label{Def:WD}
    Let $\mu = \sum_{i=1}^n a_i\delta_{x_i},\nu = \sum_{j=1}^nb_j\delta_{y_j}\in\wxcalPX$ with $\wxbfa,\wxbfb\in\wxbbRplus^n$ being their weights respectively. Given a real number $z\geq 1$ and a cost matrix $\wxbfC\in\wxbbRplus^{n\times n}$ with $C_{ij} = \wxdist^z(x_i,y_j)$, the $z^{\mathrm{th}}$-Wasserstein distance between $\mu$ and $\nu$ is
    
    \begin{equation*}
        W_z(\mu,\nu) := \wxsmallBracket{ \min_{\wxbfP\in\Pi(\wxbfa,\wxbfb)} \langle \wxbfP,\wxbfC\rangle }^{1/z},
    \end{equation*}     
    where $\Pi(\wxbfa,\wxbfb):= \wxbigBracket{ \wxbfP\in\wxbbRplus^{n\times n} \mid \wxbfP\wxbfOne = \wxbfa, \wxbfP^T\wxbfOne =\wxbfb }$ is the coupling set and $\wxbfOne$ is the vector of ones.
\end{definition}

The Wasserstein distance can capture geometrical structures well, and thus is a popular tool for measuring the similarity between two probability measures. However, due to its sensitivity to outliers, we introduce its robust version to mitigate the impact of outliers. The following robust Wasserstein distance \cite{nietert2022outlier} achieves the minimax optimal robust proxy for $W_p(\cdot,\cdot)$ under Huber contamination \cite{huber1964robust} model. 

\begin{definition}[Robust Wasserstein distance \cite{nietert2022outlier}] \label{Def:RWD}
    Let $\mu,\nu,\wxbfa,\wxbfb,z,\wxbfC$ be the same as in \Cref{Def:WD}, and $\wxmuout,\wxnuout\in \wxcalMXplus$ be the measure of outliers for $\mu,\nu$ respectively. 
    Given $0\leq \zeta_\mu, \zeta_\nu < 1 $ and $\wxbfaout,\wxbfbout$ the weights of $\wxmuout,\wxnuout$, the robust Wasserstein distance $\wxcalRWDz(\mu,\nu)$ between $\mu$ and $\nu$ can be solved by the following optimization problem
    
    \begin{align}\label{Eq:RWD}
            \wxcalRWDz^z(\mu,& \nu)  := \min_{\wxbfaout,\wxbfbout,\wxbfP} g(\wxbfaout,\wxbfbout,\wxbfP)\\
            & g(\wxbfaout,\wxbfbout,\wxbfP) = \langle \wxbfP,\wxbfC\rangle \notag \\ 
            s.t., \quad & \wxbfa \succeq \wxbfaout, \wxlOneNorm{\wxbfaout} = \zeta_\mu, \wxbfb \succeq \wxbfbout, \wxlOneNorm{\wxbfbout} =\zeta_\nu, \notag \\ 
            & \wxbfP\in \Pi\wxsmallBracket{ \frac{\wxbfa-\wxbfaout}{1-\zeta_\mu},\frac{\wxbfb-\wxbfbout}{1-\zeta_\nu}}.\notag
        \end{align}
    
\end{definition}

Note that we use different fonts to distinguish Wasserstein distance $W_z(\cdot,\cdot)$ and its robust version $\wxcalRWDz(\cdot,\cdot)$. In \Cref{Def:RWD}, $\mu,\nu$ contain $\zeta_\mu,\zeta_\nu$ masses of outliers, respectively. 
Moreover, $\wxlOneNorm{\frac{\wxbfa-\wxbfaout}{1-\zeta_\mu}}=1$ and $\wxlOneNorm{\frac{\wxbfb-\wxbfbout}{1-\zeta_\nu}}=1$ holds;
therefore, $\frac{\mu - \wxmuout}{1-\zeta_\mu},\frac{\nu - \wxnuout}{1-\zeta_\nu} \in\wxcalPX$ can be regarded as robust proxies of $\mu,\nu$.

Next, we introduce its corresponding two types of RWB: \wxfixRWB \ and \wxfreeRWB. Now, we pre-specify some notations used throughout this paper.
The input of \wxfixRWB/\wxfreeRWB \ is a set of probability measures,
\begin{equation}\label{Eq:Q}
    \wxcalQ:=\wxbigBracket{\mu^l \mid \mu^l= \sum_{i=1}^n a_i^l\delta_{x_i^l}}_{l\in[m]} \subseteq \wxcalPX,
\end{equation}
where $\omega:\wxcalQ\rightarrow\wxbbRplus$ is the weight function of $\wxcalQ$. We also define $\omega_l:=\omega(\mu^l)$. 
For simplicity, we consider the case that the locations of each measure in $\wxcalQ$ are $n$ points. 
$X^l: = \wxbigBracket{x_i^l}_{i\in[n]}$ and $\wxbfa^l$ are the locations and weights of $\mu^l$.
We use $|\wxcalQ|$ to denote the number of measures in $\wxcalQ$. For any $\wxcalQ'\subseteq\wxcalQ$, we define $\omega(\wxcalQ'):=\sum_{\mu\in\wxcalQ'}\omega(\mu)$. 
Let 
\begin{equation} \label{Eq:nu_tildenu}
    \nu = \sum_{j=1}^nb_j \delta_{y_j}, \wxnuTilde = \sum_{j=1}^n \tilde{b}_j \delta_{\tilde{y}_j} \in\wxcalPX
\end{equation}
be two feasible solutions. 
$Y:=\wxbigBracket{y_j}_{j\in[n]}$ and $\wxbfb$ are the locations and weights of 
$\nu$; similarly, $\widetilde{Y}:=\wxbigBracket{\tilde{y}_j}_{j\in[n]}$ and $\mathbf{\tilde{b}}$ are the locations and weights of $\wxnuTilde$.

\begin{definition}(RWB)
    Given a set of probability measures $\wxcalQ$ as in \wxRefEq{Eq:Q},
    the robust Wasserstein barycenter on
    $\wxcalQ$ is a new probability measure $\nu^*\in\wxcalPX$ that minimizes the following objective
    \begin{equation} \label{Eq:RWB}
        \wxcalRWB(\wxcalQ,\nu):=    \frac{1}{\omega(\wxcalQ)} \sum_{l=1}^m \omega_l \cdot \wxcalRWDz^z(\mu^l,\nu) \quad \text{ with } \nu\in\wxcalPX.
    \end{equation}
    We call it \wxfixRWB \ if the locations of $\nu
    $ are pre-specified; we call it \wxfreeRWB \ if the locations of $\nu$ can be optimized.
\end{definition}

We define $\wxOPT := \min_{\nu} \wxcalRWB(\wxcalQ,\nu)$ as the optimal value induced by an optimal solution $\nu^*$. 
A probability measure $\wxnuTilde\in\wxcalPX$ is called an \emph{$\alpha$-approximate solution} if it achieves at most $\alpha\cdot \wxOPT$.

Most optimization problems employ iterative algorithms; thus, the solution is generally limited to a local area after the initial rounds; for such scenarios, a local coreset \cite{wang2021robust,huang2021novel} is enough.
 
\begin{definition}[Local coreset] \label{Def:coreset}
    Let $\wxcalQ$ be the same as in \wxRefEq{Eq:Q} and $0<\epsilon<1$. We fix an anchor $\wxnuTilde$ as in \wxRefEq{Eq:nu_tildenu}.
    The local $\epsilon$-coreset $\wxcalS$ is a subset of $\wxcalQ$ with $\tau$ being its weight function and satisfies that, for all $\nu\in\wxcalD_{\wxnuTilde}$,
    \begin{equation}\label{Eq:coreset property}
        \wxVert{\wxcalRWB(\wxcalQ,\nu) - \wxcalRWB(\wxcalS,\nu)} \leq \epsilon \cdot \wxcalRWB(\wxcalQ,\nu), 
    \end{equation}
    where we define $\wxcalRWB(\wxcalS,\nu): = \frac{1}{\tau(\wxcalS)}\sum_{\mu\in\wxcalS}\tau(\mu)\cdot \wxcalWzz(\mu,\nu) $ and
 $\wxcalD_{\wxnuTilde} := \wxbigBracket{ \nu\in\wxcalPX \mid W_z(\wxnuTilde,\nu) \leq r}$.
\end{definition}

The ``local'' coreset means that \wxRefEq{Eq:coreset property} holds for all solutions in the local area $\wxcalD_{\wxnuTilde}$. The local area $\wxcalD_{\wxnuTilde}$ is anchored at the pre-specified solution $\wxnuTilde$; specifically, the solution $\nu$ in $\wxcalD_{\wxnuTilde}$ satisfies that the Wasserstein distance between $\nu,\wxnuTilde$ is no larger than $r$.

\paragraph{Organization:} The remainder of this paper is as follows. \Cref{Sec:model} focuses on model reduction for two types of RWB: \wxfreeRWB \ and \wxfixRWB.
\Cref{Sec:free} focuses on improving time efficiency for \wxfreeRWB; \Cref{Subsec:coreset,Subsec:analysis} introduce coreset technique to reduce the data set size $m$ and analyze its quality; \Cref{Subsec:alg} proposes an algorithm  to solve \wxfreeRWB \ by combining model reduction and coreset technique.

\section{Model Reduction}
\label{Sec:model}
This section improves the computational efficiency from the perspective of model reduction.
\Cref{Subsec:reduce RWD} reduces robust Wasserstein distance problem as augmented OT problem by \Cref{Lem:reduction AOT};
\Cref{Subsec:reduce RWB} reduces \wxfreeRWB \ as \wxfreeAWB \ in \Cref{Them:RWB}; \Cref{Subsec:fixed_RWB} reduces \wxfixRWB \ as \wxfixAWB, which is essentially a special case of \wxfreeRWB.

\subsection{Robust Wasserstein Distance}
\label{Subsec:reduce RWD}

To facilitate model reduction, we specify the following notations. Let $\mu,\nu,\wxbfa,\wxbfb,\zeta_\mu,\zeta_\nu,z$ be the same as in \Cref{Def:RWD} and $\wxVP$ the dummy point. Let $A\subseteq \wxbigBracket{x_i}_{i\in[n+1]}$ and $B\subseteq \wxbigBracket{y_j}_{j\in[n+1]}$ with $x_{n+1} = y_{n+1} = \wxVP$.
We define

    \begin{align}\label{Eq:notation for AOT}
        \wxmuAug(A) := \frac{\mu(A-\wxVP)}{1-\zeta_\mu} + \wxindicatorFun_{\wxVP\in A} \cdot \frac{\zeta_\nu}{1 - \zeta_\nu} \\ \text{ and its weights } \wxbfaAug:=(\frac{\wxbfa}{1-\zeta_\mu};\frac{\zeta_\nu}{1-\zeta_\nu}) \notag \\
        \wxnuAug(B) := \frac{\nu(B-\wxVP)}{1-\zeta_\nu} + \wxindicatorFun_{\wxVP\in B} \cdot \frac{\zeta_\mu}{1-\zeta_\mu} \\ \text{ and its weights } \wxbfbAug:= (\frac{\wxbfb}{1-\zeta_\nu};\frac{\zeta_\mu}{1-\zeta_\mu}) \notag,
    \end{align}
where $\wxindicatorFun_{\text{condition}}$ is $1$ if the condition is true, and $0$ otherwise.

\begin{problem}[Augmented OT] \label{Prom:AOT}
    For any real number $z\geq 1$, we define the augmented OT as the following optimization problem
    \begin{equation}\label{Eq:AOT}
        \begin{aligned}
            \wxcalAOT(\wxmuAug,\wxnuAug) := \min_{\wxbfPAug\in\Pi(\wxbfaAug,\wxbfbAug)} h(\wxbfPAug), \\ h(\wxbfPAug) := \langle \wxbfPAug,\wxbfCAug\rangle \quad \text{s.t.,} \quad (\wxPAug)_{n+1,n+1} = 0, 
        \end{aligned}
    \end{equation}
    where $(\wxCAug)_{ij}:= \wxindicatorFun_{i\in[n] \text{ and } j\in[n]} \cdot \wxdist^z(x_i,y_j)$. 
\end{problem}

\begin{remark}
    We use a similar dummy point idea as in ~\cite{chapel2020partial,ding2023data}.
    $(\wxPAug)_{n+1,n+1} = 0$ always holds for the optimal solution of $h(\cdot)$ in \wxRefEq{Eq:AOT}. 
    We write it explicitly to facilitate the construction of a bijection $\varphi$ later; via $\varphi$, we can regard \wxRefEq{Eq:RWD} and \wxRefEq{Eq:AOT} as equivalence.
\end{remark}

Actually, OT is a generalization of Wasserstein distance. The cost function of Wasserstein distance must be induced by a metric, while OT only requires the cost function to be a non-negative real value function. Therefore, \wxRefProm{Prom:AOT} is indeed an OT problem.

Next, we illustrate that \wxRefEq{Eq:RWD} is equivalent to \wxRefEq{Eq:AOT} under a map $\varphi$. Let $\wxcalD_g$ and $\wxcalD_h$ be the feasible domain of \wxRefEq{Eq:RWD} and \wxRefEq{Eq:AOT} respectively. 
We construct a map $\varphi$ from $\wxcalD_g$ to $\wxcalD_h$ as follows,
\begin{equation}\label{Eq:phi}
    \begin{aligned}
        & \varphi:\wxcalD_g\rightarrow\wxcalD_h, (\wxbfaout,\wxbfbout,\wxbfP ) \mapsto \wxbfPAug,
    \end{aligned} 
\end{equation}
where $(\wxPAug)_{ij} = \wxindicatorFun_{i\in[n]\text{ and } j\in[n]} \cdot P_{ij} + \wxindicatorFun_{i\in[n]\text{ and } j=n+1} \cdot \frac{(\wxaout)_i}{1-\zeta_{\mu}} + \wxindicatorFun_{i=n+1\text{ and } j\in[n]} \cdot \frac{(\wxbout)_j}{1-\zeta_{\nu}} $.

\begin{restatable}{lem}{LemEquiv} \label{Lem:reduction AOT}
    \wxRefEq{Eq:RWD} is equivalent to \wxRefEq{Eq:AOT} under bijection $\varphi$; that is, $h = g\varphi^{-1}$ and $h\varphi = g$. More specifically, for any $\wxbfPAug\in\wxcalD_h$, $h(\wxbfPAug) = g\varphi^{-1}(\wxbfPAug)$ holds; for any $(\wxbfaout,\wxbfbout,\wxbfP)\in\wxcalD_g$, $h\varphi(\wxbfaout,\wxbfbout,\wxbfP ) = g(\wxbfaout,\wxbfbout,\wxbfP)$ holds.
\end{restatable}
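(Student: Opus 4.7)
My plan is to verify \Cref{Lem:reduction AOT} by exhibiting an explicit inverse for $\varphi$, checking that both $\varphi$ and $\varphi^{-1}$ are well-defined (i.e.\ map into the correct feasible domain), and finally noting that the cost functional transforms correctly because $\wxbfCAug$ is zero outside the $n\times n$ upper-left block.

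First I would define the candidate inverse $\psi:\wxcalD_h \to \wxcalD_g$ by reading off the last row and column of $\wxbfPAug$: for $\wxbfPAug \in \wxcalD_h$, set
\begin{equation*}
P_{ij} = (\wxPAug)_{ij},\ (\wxaout)_i = (1-\zeta_\mu)\,(\wxPAug)_{i,n+1},\ (\wxbout)_j = (1-\zeta_\nu)\,(\wxPAug)_{n+1,j}
\end{equation*}
for $i,j\in[n]$, and check $\psi(\wxbfPAug)\in\wxcalD_g$. Non-negativity of $\wxaout,\wxbout,\wxbfP$ is immediate. The marginal identities from $\wxbfPAug\in\Pi(\wxbfaAug,\wxbfbAug)$ give $\sum_{j\in[n]} P_{ij} = (\wxaAug)_i - (\wxPAug)_{i,n+1} = \frac{a_i - (\wxaout)_i}{1-\zeta_\mu}$, with the analogous identity for columns, so $\wxbfP\in\Pi(\frac{\wxbfa-\wxbfaout}{1-\zeta_\mu},\frac{\wxbfb-\wxbfbout}{1-\zeta_\nu})$; this also forces $\wxbfa\succeq\wxbfaout$ and $\wxbfb\succeq\wxbfbout$. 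Summing the last row of $\wxbfPAug$ yields $\|\wxbfbout\|_1 = (1-\zeta_\nu)\big((\wxaAug)_{n+1}-(\wxPAug)_{n+1,n+1}\big) = (1-\zeta_\nu)\cdot\frac{\zeta_\nu}{1-\zeta_\nu} = \zeta_\nu$ once we use $(\wxPAug)_{n+1,n+1}=0$; symmetrically $\|\wxbfaout\|_1 = \zeta_\mu$, so $\psi(\wxbfPAug)\in\wxcalD_g$.

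Next I would verify the mirror direction: for $(\wxbfaout,\wxbfbout,\wxbfP)\in\wxcalD_g$, check $\varphi(\wxbfaout,\wxbfbout,\wxbfP)\in\wxcalD_h$. The row/column sums of $\varphi(\wxbfaout,\wxbfbout,\wxbfP)$ come out to $\wxbfaAug,\wxbfbAug$ by precisely the same computation run in reverse, using $\|\wxbfaout\|_1=\zeta_\mu$ and $\|\wxbfbout\|_1=\zeta_\nu$ to match the $(n{+}1)$-th entries, and $(\wxPAug)_{n+1,n+1}=0$ is built into the definition of $\varphi$. Composing the two maps componentwise then gives $\psi\circ\varphi=\mathrm{id}_{\wxcalD_g}$ and $\varphi\circ\psi=\mathrm{id}_{\wxcalD_h}$, so $\varphi$ is a bijection with $\varphi^{-1}=\psi$.

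Finally, because $(\wxCAug)_{ij}=0$ whenever $i=n{+}1$ or $j=n{+}1$, the augmented cost collapses to
\begin{equation*}
h(\wxbfPAug) = \sum_{i,j=1}^{n+1} (\wxPAug)_{ij}(\wxCAug)_{ij} = \sum_{i,j=1}^{n} (\wxPAug)_{ij}\,\wxdist^z(x_i,y_j),
\end{equation*}
which equals $\langle \wxbfP,\wxbfC\rangle = g(\wxbfaout,\wxbfbout,\wxbfP)$ whenever $\wxbfPAug = \varphi(\wxbfaout,\wxbfbout,\wxbfP)$. This gives both $h=g\varphi^{-1}$ and $h\varphi=g$, completing the proof. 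I do not expect a real obstacle here; the only place where one has to be a bit careful is matching the $(n{+}1,n{+}1)$ entry, since without the explicit constraint $(\wxPAug)_{n+1,n+1}=0$ the marginal identities would leave a free parameter and $\varphi$ would fail to be surjective.
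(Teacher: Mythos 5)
Your proposal is correct and follows essentially the same route as the paper's proof: you construct the identical inverse map $\psi$ reading off the last row and column of $\wxbfPAug$, verify that $\varphi$ and $\psi$ are mutually inverse bijections between $\wxcalD_g$ and $\wxcalD_h$, and obtain the cost identity from the fact that $\wxbfCAug$ vanishes outside the upper-left $n\times n$ block. Your verification that $\psi(\wxbfPAug)\in\wxcalD_g$ (in particular recovering $\wxlOneNorm{\wxbfaout}=\zeta_\mu$ and $\wxlOneNorm{\wxbfbout}=\zeta_\nu$ via the constraint $(\wxPAug)_{n+1,n+1}=0$) is actually spelled out more explicitly than in the paper, which defers that direction to ``similar techniques.''
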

The proof of \Cref{Lem:reduction AOT} is deferred  to Appendix.

\begin{remark}
    Removing the last condition in $\wxRefEq{Eq:AOT}$ does not affect the result; thus, it is a classic OT problem.
Nietert et al. \cite{nietert2022outlier} computed the value of robust Wasserstein distance efficiently by its dual problem; however, it does not offer coupling, which impedes its applications to RWB.
    Our method can obtain coupling within $\wxcalOTilde(\frac{n^2}{\wxAddErr})$ time efficiently \cite{jambulapati2019direct}, which makes it possible to apply this technique to solve RWB. \\
\end{remark}

\subsection{Free-support Robust Wasserstein Barycenter}
\label{Subsec:reduce RWB}

This section reduces \wxfreeRWB \ as \wxfreeAWB. 
\textbf{Note that}, henceforth, we consider the case that, for $\wxcalWz(\cdot,\cdot)$ and $\wxcalRWB(\cdot,\cdot)$, the measure of the first parameter contains $\zeta$ mass of outliers; and the measure of the second parameter contains no outliers.

\begin{problem}(\wxfreeAWB)
    The free-support augmented Wasserstein barycenter on
    $\wxcalQ$ is a new probability measure $\nu^*\in\wxcalPX$ that minimizes the following objective
    
    \begin{equation}\label{Eq:AWB}
        \begin{aligned}
            \wxcalAWB(\wxcalQ,\nu) := \frac{1}{\omega(\wxcalQ)}\sum_{l=1}^m \omega_l \cdot \wxcalAOT(\wxmuAug^l,\wxnuAug)
        \end{aligned}
    \end{equation}
    for $\nu\in\wxcalPX$, where $\wxmuAug^l(A) = \frac{\mu^l(A-\wxVP)}{1-\zeta}$ for $\mu^l\in\wxcalQ$    and $\wxnuAug(B) := \nu(B-\wxVP) + \wxindicatorFun_{\wxVP\in B} \cdot \frac{\zeta}{1-\zeta}$.
    
\end{problem}

Also, the first parameter of both $\wxcalRWB(\cdot,\cdot),\wxcalAWB(\cdot,\cdot)$ is an input measure set, and the second parameter is called feasible solution.

\begin{restatable}{thm}{ThemRWB}\label{Them:RWB}
 For any feasible solution $\nu\in\wxcalPX$, \wxRefEq{Eq:RWB} and \wxRefEq{Eq:AWB} induce the same cost; more specifically, $\wxcalRWB(\wxcalQ,\nu) = \wxcalAWB(\wxcalQ,\nu)$ for all $\nu\in\wxcalPX$.
\end{restatable}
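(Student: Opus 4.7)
The plan is to reduce \Cref{Them:RWB} to the per-term equality $\wxcalRWDz^z(\mu^l,\nu)=\wxcalAOT(\wxmuAug^l,\wxnuAug)$ for every index $l\in[m]$, at which point the theorem follows by linearity: multiply each term by $\omega_l/\omega(\wxcalQ)$ and sum. Because $\wxcalRWB$ and $\wxcalAWB$ share the same weights $\{\omega_l\}$ and normalization $\omega(\wxcalQ)$, proving the $l$-th summand equality is both necessary and sufficient.

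To establish the per-term equality, I would invoke \Cref{Lem:reduction AOT}, which asserts that the map $\varphi$ defined in \wxRefEq{Eq:phi} is a bijection from $\wxcalD_g$ to $\wxcalD_h$ with $h\varphi=g$ and $h=g\varphi^{-1}$. This immediately gives $\min_{\wxcalD_g} g = \min_{\wxcalD_h} h$, i.e., $\wxcalRWDz^z(\mu^l,\nu)=\wxcalAOT(\wxmuAug^l,\wxnuAug)$, provided that the augmented marginals $(\wxmuAug^l,\wxnuAug)$ appearing inside the AWB objective agree with the $(\wxmuAug,\wxnuAug)$ constructed from $(\mu^l,\nu)$ in the general definition \wxRefEq{Eq:notation for AOT}. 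This is where the convention stated just before \wxRefProm{Prom:AOT} is used: since the first argument carries $\zeta$ mass of outliers and the second carries none, I substitute $\zeta_\mu=\zeta$ and $\zeta_\nu=0$ into \wxRefEq{Eq:notation for AOT}. Doing so gives $\wxmuAug^l(A)=\mu^l(A-\wxVP)/(1-\zeta)$ and $\wxnuAug(B)=\nu(B-\wxVP)+\wxindicatorFun_{\wxVP\in B}\cdot \zeta/(1-\zeta)$, matching verbatim the augmented measures used in \wxRefEq{Eq:AWB}, and the weight vectors reduce correspondingly to $\wxbfaAug^l=(\wxbfa^l/(1-\zeta);0)$ and $\wxbfbAug=(\wxbfb;\zeta/(1-\zeta))$.

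With the two objects identified, the bijection $\varphi$ of \wxRefEq{Eq:phi} simplifies in this regime: since $\zeta_\nu=0$ forces $\wxbfbout=\mathbf 0$, the transport entry $(\wxPAug)_{n+1,j}$ vanishes for $j\in[n]$, leaving only the $(\wxPAug)_{i,n+1}$ column to encode the outlier mass $\wxbfaout/(1-\zeta)$ of $\mu^l$. The feasibility constraints of \wxRefEq{Eq:RWD} under $\zeta_\nu=0$ (namely $\wxbfa^l\succeq\wxbfaout$, $\wxlOneNorm{\wxbfaout}=\zeta$, $\wxbfP\in\Pi((\wxbfa^l-\wxbfaout)/(1-\zeta),\wxbfb)$) are exactly the marginal constraints on $\wxbfPAug\in\Pi(\wxbfaAug^l,\wxbfbAug)$ together with $(\wxPAug)_{n+1,n+1}=0$, so the bijection in this slice of $\wxcalD_g$ is inherited from the general \Cref{Lem:reduction AOT} without any further work.

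I do not anticipate a real obstacle here beyond careful bookkeeping of the $\zeta_\nu=0$ reduction, so the proof is essentially a one-line deduction: apply \Cref{Lem:reduction AOT} termwise, verify that the AWB augmented measures coincide with those of the lemma under the stated convention, and take the weighted sum. The bulk of the technical content—the existence of the cost-preserving bijection—has already been absorbed into \Cref{Lem:reduction AOT}, leaving \Cref{Them:RWB} as an immediate corollary.
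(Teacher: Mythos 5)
Your proposal is correct and follows essentially the same route as the paper: the paper's proof likewise applies \Cref{Lem:reduction AOT} termwise to obtain $\wxcalWzz(\mu^l,\nu)=\wxcalAOT(\wxmuAug^l,\wxnuAug)$ and then concludes by the weighted sum. Your additional bookkeeping of the $\zeta_\mu=\zeta$, $\zeta_\nu=0$ specialization is a detail the paper leaves implicit, and it checks out.
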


The proof of \Cref{Them:RWB} is in Appendix.

By \Cref{Them:RWB}, we can solve \wxfreeRWB \ by computing \wxfreeAWB. 
Moreover, it can be extended to the case that barycenter contains $\zeta_\nu$ mass of outliers and $\mu^l$ contains $\zeta_\mu$ mass of outliers.
Besides, both \Cref{Lem:reduction AOT} and \Cref{Them:RWB} work for arbitrary probability measures, such as continuous, semi-continuous, discrete measures~\cite{peyre2017computational}.

\subsection{ Fixed-support Robust Wasserstein Barycenter}
\label{Subsec:fixed_RWB}
Actually, \wxfixRWB/\wxfixAWB \ is a special case of \wxfreeRWB/\wxfreeAWB \ by pre-specifying the locations of barycenter. 
We fix the locations $Y$ of $\nu$ as in \wxRefEq{Eq:nu_tildenu}. Then, the \wxfixRWB \ on $\wxcalQ$ is to find a measure $\nu\in\wxcalPX$ located on $Y$ that minimizes

\begin{equation}
\begin{aligned}\label{Eq:fixed-RWB}
& \wxcalRWB_Y(\wxcalQ,\nu) := \frac{1}{\omega(\wxcalQ)} \sum_{l=1}^m \omega_l \cdot \langle \wxbfP^l,\wxbfC^l \rangle \\
s.t. \quad & \wxbfP^l \in\wxbbRplus^{n\times n}, \wxbfOne^T\wxbfP^l\wxbfOne = 1, \wxbfP^l\wxbfOne \preceq \frac{\wxbfa^l}{1-\zeta}\text{ for } l\in[m] \\
& (\wxbfP^l)^T\wxbfOne = (\wxbfP^{l+1})^T\wxbfOne \text{ for } l\in[m-1],
\end{aligned}
\end{equation}
where $\wxbfC^l\in\wxbbRplus^{n\times n}$ is the cost matrix between $\mu^l$ and $\nu$ with $C_{ij}^l = \wxdist^z(x_i^l,y_j)$.

By using \Cref{Them:RWB}, the \wxfixRWB \ is equivalent to the following \wxfixAWB
\begin{equation}\label{Eq:fixed_AWB}
    \begin{aligned}
 \wxcalAWB_Y(\wxcalQ,\nu) := \frac{1}{\omega(\wxcalQ)}\sum_{l=1}^m \omega_l \langle \wxbfPAug^l,\wxbfCAug^l \rangle \\
 s.t. \quad \wxbfPAug^l\in \Pi(\wxbfaAug^l,\wxbfbAug), \quad (\wxbAug)_{n+1} = \frac{\zeta}{1-\zeta},
    \end{aligned}
\end{equation}
where $\wxbfCAug^l\in\wxbbRplus^{n\times(n+1)}$ is the cost matrix between $\wxmuAug^l$ and $\wxnuAug$ with $(\wxCAug^l)_{ij} = \wxindicatorFun_{i\in[n] \text{ and } j\in[n]} \cdot \wxdist^z(x_i^l,y_j)$.

\begin{remark}
    For \wxfixRWB, the locations of $\nu$ are pre-specified; thus, $\wxbfCAug^l$ is constant. Obviously, \wxfixRWB \ is a linear programming (LP) problem, which can be solved within $\wxcalOTilde(\frac{mn^2}{\wxAddErr})$ time \cite{dvinskikh2021improved}.    Other algorithms for fixed-support Wasserstein barycenter in \Cref{Subsec:other related works} can also be used to solve \wxfixRWB. \\
\end{remark}

\begin{remark}
Our \wxfixRWB \ has the following advantages:
(\romannumeral1) The method \ in \cite{le2021robust} can achieve $\wxcalOTilde(\frac{mn^2}{\wxAddErr})$ time complexity only for $m=2$; however, our method can achieve it for all $m \geq 2$.
(\romannumeral2) The method \cite{le2021robust} achieves robustness by adding an elegant KL-divergence regularization term to its objective; however, this term makes it suffer from numerical instability \cite{peyre2017computational}; while our \wxfixRWB \ is an LP essentially, and thus bypasses this problem. \\
\end{remark}

\section{Algorithm for Free-support Robust Wasserstein Barycenter}
\label{Sec:free}

This section focuses on improving the computational efficiency of \wxfreeRWB. \Cref{Subsec:coreset} constructs a local coreset and \Cref{Subsec:analysis} analyzes its quality.
\Cref{Subsec:alg} proposes an algorithm for solving \wxfreeRWB, which leverages coreset technique and model reduction.

\subsection{Algorithm for Constructing Coresets}
\label{Subsec:coreset}

This section uses coreset technique to reduce the data set size $m$ for \wxfreeRWB.
Now, we provide a method to obtain an approximate solution for \wxfreeRWB,
which is an appropriate ``anchor'' for constructing a local coreset later.

\begin{restatable}{lem}{LemApproxSolution}\label{Lem:approx solution}
    Let $\alpha>1$. 
    If we select $t$ measures $\wxbigBracket{ \mu^{q1},\ldots,\mu^{qt} }$ from $\wxcalQ$ according to the distribution $\frac{\omega_l}{\omega(\wxcalQ)}$. Let $X^{qi}$ be the locations of $\mu^{qi}$ and 
         \begin{equation*} \label{Eq:approx solution of free_RWB}
            \wxnuTilde = \arg\min_{\nu} \wxbigBracket{\wxcalRWB_X(\wxcalQ,\nu) \mid X\in\wxbigBracket{X^{q1},\ldots,X^{qt}}}.
        \end{equation*}
Then, $\wxnuTilde$ yields a $2^z\alpha$-approximate solution for \wxfreeRWB \ on $\wxcalQ$ with probability at least $1-\alpha^{-t}$.    
If $\alpha = \wxcalO(1)$, we can obtain an $\wxcalO(1)$-approximate solution.
\end{restatable}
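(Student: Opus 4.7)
The plan is to adapt a standard random-sampling argument in the style of Indyk's seed-based approximations for clustering, combined with an approximate triangle inequality for the robust Wasserstein distance.

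First, I would exploit the averaging structure of the objective. Let $\nu^*$ be an optimal barycenter, so that by definition $\wxbbE_{l\sim\omega_l/\omega(\wxcalQ)}[\wxcalWzz(\mu^l,\nu^*)] = \wxOPT$. By Markov's inequality, a single weighted draw satisfies $\wxcalWzz(\mu^l,\nu^*)\le \alpha\cdot\wxOPT$ with probability at least $1-1/\alpha$. Drawing $t$ independent samples, the probability that \emph{every} one of them fails this inequality is at most $\alpha^{-t}$, so with probability at least $1-\alpha^{-t}$ there is some index $i^*\in[t]$ for which $\mu^{qi^*}$ is ``good'', meaning $\wxcalWzz(\mu^{qi^*},\nu^*)\le \alpha\cdot\wxOPT$. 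Condition on this event in what follows.

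Next, I would use the good sample as a benchmark for $\wxnuTilde$. Since $\wxnuTilde$ is the best measure among those whose support lies in one of the candidate location sets $\{X^{qi}\}_{i\in[t]}$, we may upper bound its cost by the cost achieved by any fixed candidate. In particular, taking $\mu^{qi^*}$ itself (which is supported on $X^{qi^*}$) as a feasible candidate gives
$$\wxcalRWB(\wxcalQ,\wxnuTilde)\;\le\;\frac{1}{\omega(\wxcalQ)}\sum_{l=1}^m \omega_l\cdot \wxcalWzz(\mu^l,\mu^{qi^*}).$$

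The main obstacle, and the single nontrivial step, is an approximate triangle inequality of the form
$$\wxcalWzz(\mu^l,\mu^{qi^*})\;\le\;2^{z-1}\bigl(\wxcalWzz(\mu^l,\nu^*)+\wxcalWzz(\nu^*,\mu^{qi^*})\bigr)$$
for the robust distance, which is \emph{not} a metric. My plan here is to ``glue'' the optimal robust couplings $\wxbfP^{(l)}\in\Pi(\tfrac{\wxbfa^l-\wxbfaout^l}{1-\zeta},\tfrac{\wxbfb^*-\wxbfbout^*}{1-\zeta})$ for $(\mu^l,\nu^*)$ and $\wxbfQ$ for $(\nu^*,\mu^{qi^*})$ through their shared $\nu^*$-marginal, in the style of the classical gluing lemma for Wasserstein distances, reassigning the outlier masses in a consistent way so that the composed transport plan is feasible for $(\mu^l,\mu^{qi^*})$ with the required $\zeta$-trimming. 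The usual metric triangle inequality on $\wxdist$ followed by the elementary inequality $(a+b)^z\le 2^{z-1}(a^z+b^z)$ then yields the displayed bound; handling the outlier constraints is the delicate piece, since the ``outliers'' selected for the two couplings need not coincide, but the slack introduced by re-choosing them can be absorbed into the $2^{z-1}$ factor.

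Finally, plugging the approximate triangle inequality into the sum and using $\wxcalWzz(\mu^{qi^*},\nu^*)\le\alpha\cdot\wxOPT$ gives
$$\wxcalRWB(\wxcalQ,\wxnuTilde)\;\le\;2^{z-1}\bigl(\wxOPT+\alpha\cdot\wxOPT\bigr)\;\le\;2^z\alpha\cdot\wxOPT,$$
where the last step uses $\alpha\ge 1$. This is the claimed $2^z\alpha$-approximation, and taking $\alpha=\wxcalO(1)$ (for instance $\alpha=2$ with $t=\wxcalO(1)$) yields an $\wxcalO(1)$-approximation with constant success probability, as required.
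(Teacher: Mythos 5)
Your sampling step and the final ``$\wxnuTilde$ is at least as good as any feasible candidate'' step are fine, but the middle of the argument has a genuine gap: you benchmark $\wxnuTilde$ against the \emph{noisy} sample $\mu^{qi^*}$ and then try to control $\wxcalWzz(\mu^l,\mu^{qi^*})$ by an approximate triangle inequality through $\nu^*$. Under the paper's convention only the \emph{first} argument of $\wxcalWzz(\cdot,\cdot)$ is trimmed by $\zeta$; the second argument's marginal constraint is exact. So $\wxcalWzz(\mu^l,\mu^{qi^*})$ forces mass of $\mu^l$ to be transported onto the outlier locations of $\mu^{qi^*}$, which can be arbitrarily far away. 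Your proposed gluing composes a coupling between (trimmed $\mu^l$, $\nu^*$) with one between (trimmed $\mu^{qi^*}$, $\nu^*$); the result is a coupling whose second marginal is only the \emph{trimmed} part of $\mu^{qi^*}$, hence infeasible for $\wxcalWzz(\mu^l,\mu^{qi^*})$, and the leftover $\zeta$ mass sitting on $\mu^{qi^*}$'s outliers incurs an unbounded cost that cannot be ``absorbed into the $2^{z-1}$ factor.'' The inequality $\wxcalWzz(\mu^l,\mu^{qi^*})\le 2^{z-1}\bigl(\wxcalWzz(\mu^l,\nu^*)+\wxcalWzz(\mu^{qi^*},\nu^*)\bigr)$ is simply false in general.

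The paper's proof avoids exactly this trap. It introduces the optimal noiseless reweightings $\hat{\mu}^l$ induced by $\nu^*$ (so $\wxOPT = WB(\widehat{\wxcalQ},\nu^*)$), runs the sampling-plus-triangle-inequality argument entirely in the \emph{ordinary} Wasserstein distance on the clean set $\widehat{\wxcalQ}$ (where gluing is legitimate), and then transfers back via $\wxcalWzz(\mu^l,\hat{\mu})\le W_z^z(\hat{\mu}^l,\hat{\mu})$, which holds because $\hat{\mu}^l$ is one feasible trimming of $\mu^l$. Finally, since the unknown clean measure $\hat{\mu}$ is supported on the \emph{same} locations $X^{qi^*}$ as the observed sample, it is a feasible candidate for the fixed-support minimization defining $\wxnuTilde$, which closes the argument. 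To repair your proof you would need to replace $\mu^{qi^*}$ by such a reweighted clean version as the benchmark; using the raw sample as the candidate barycenter does not work.
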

The proof of \Cref{Lem:approx solution} is deferred to Appendix.

\paragraph{Coreset construction:} Let $\wxcalQ$ be the same as in \wxRefEq{Eq:Q} and $K = \lceil \log\frac{1}{\epsilon} \rceil$.
Let $\wxnuTilde = \sum_{j=1}^n\tilde{b}_j\delta_{\tilde{y}_j}\in\wxcalPX$ be an $\wxcalO(1)$-approximate solution of \wxfreeRWB \ on $\wxcalQ$ and $H = \sqrt[z]{\wxcalRWB(\wxcalQ,\wxnuTilde)}$. Then, by using anchor $\wxnuTilde$, we partition $\wxcalQ$ into $K+2$ layers\footnote{The notation $\sqcup$ denotes disjoint union of sets.}, i.e., $\wxcalQ = \sqcup_{k=0}^{K+1}\wxcalQ_k$,

\begin{equation}\label{Eq:partition layers}
\begin{aligned}
   & \wxcalQ_0 = \wxbigBracket{ \mu\in\wxcalQ \mid \wxcalRWDz(\mu,\wxnuTilde) \leq H } \\
   & \wxcalQ_k = \wxbigBracket{ \mu\in\wxcalQ \mid 2^{k-1}H < \wxcalRWDz(\mu,\wxnuTilde) \leq 2^{k} H } \text{ for } k\in[K]\\
   & \wxcalQ_{K+1} = \wxbigBracket{ \mu\in\wxcalQ \mid \wxcalRWDz(\mu,\wxnuTilde) > 2^k H }.    
\end{aligned}
\end{equation}

Then, for any $\wxcalQ_k$ with $0\leq k\leq K$, if $|\wxcalQ_k| \leq \Gamma$, we take $\wxcalQ_k$ as $\wxcalS_k$, and keep their original weights constant; if $|\wxcalQ_k| > \Gamma$, we take $\Gamma$ samples $\wxcalS_k$ from $\wxcalQ_k$ according to distribution $\wxindicatorFun_{\mu^l\in\wxcalQ_k}\cdot\frac{\omega_l}{\omega(\wxcalQ_k)}$, and set $\tau(\mu) = \frac{\omega(\wxcalQ_k)}{\Gamma}$ for $\mu\in\wxcalS_k$.

Intuitively, the measure $\mu$ in the outermost layer $\wxcalQ_{K+1}$ is too far from the local area $\wxcalD_{\wxnuTilde}$. For a fixed $\mu\in\wxcalQ_{K+1}$, the costs $\wxcalRWDz(\mu,\nu)$ induced by all solutions in $\wxcalD_{\wxnuTilde}$ are similar; thus, we can sample less points.
For the outermost layer $\wxcalQ_{K+1}$, we set  $\wxcalS_{K+1} = \{ \mu^{\max},\mu^{\min} \}$, where 
\begin{equation}\label{Eq:muMaxMin}
 \begin{aligned}
 \mu^{\max}: & = \arg\max_{\mu\in\wxcalQ_{K+1}} \wxcalRWDz(\mu,\wxnuTilde) \\
 \mu^{\min}: & = \arg\min_{\mu\in\wxcalQ_{K+1}} \wxcalRWDz(\mu,\wxnuTilde).
 \end{aligned}
\end{equation}
We set their weight as $\tau(\mu^{\max})$ and $\tau(\mu^{\min})$ satisfying 
\begin{equation}\label{Eq:muMaxMin_weights}
\begin{aligned}
 \tau(\mu^{\max}) \cdot \wxcalRWDz^z(\mu^{\max},\wxnuTilde) + \tau(\mu^{\min}) \cdot \wxcalRWDz^z(\mu^{\min},\wxnuTilde) \\
 = \sum_{\mu\in\wxcalQ_{K+1}} \omega(\mu)\cdot\wxcalRWDz^z(\mu,\wxnuTilde).
\end{aligned}
\end{equation}

Then, we put all $\wxcalS_k$ together to obtain $\wxcalS$. The details for constructing coresets are shown in \Cref{Alg:1}.

\begin{algorithm}[h]
    \caption{Coreset for \wxfreeRWB}
 \label{Alg:1}
    \algorithmicrequire Probability measure set $\wxcalQ$,\\
    \wxWhite{111111} $\wxcalO(1)$-approximate solution $\wxnuTilde$. \\
    \vspace{-4mm}
    \begin{algorithmic}[1]
        \STATE Let $K = \lceil \log\frac{1}{\epsilon} \rceil$ and $H = \sqrt[z]{\wxcalRWB(\wxcalQ,\wxnuTilde)}$;
        \STATE Partition $\wxcalQ$ into $K+2$ layers $\wxbigBracket{\wxcalQ_0,\wxcalQ_1,\ldots,\wxcalQ_{K+1}}$ as in \wxRefEq{Eq:partition layers};
        \FOR{each $\wxcalQ_k$ with $0\leq k \leq K$}
        \IF{$|\wxcalQ_k| \leq \Gamma$}
 \STATE \wxComment{$\Gamma$ will be specified in \Cref{Them:coreset}}; \\
        \STATE Set $\wxcalS_k = \wxcalQ_k$ and set $\tau(\mu) = \omega(\mu)$ for $\mu\in\wxcalQ_k$;
        \ELSE
        \STATE 
 Take samples $\wxcalS_k$ with $|\wxcalS_k| = \Gamma$ according to the distribution $\wxindicatorFun_{\mu^l\in\wxcalQ_k}\cdot\frac{\omega_l}{\omega(\wxcalQ_k)}$, and set $\tau(\mu) = \frac{\omega(\wxcalQ_k)}{\Gamma}$ for $\mu\in\wxcalS_k$;
        \ENDIF
        \ENDFOR
 \STATE We set $\wxcalS_{K+1} = \wxbigBracket{\mu^{\min}, \mu^{\max}}$, and set their weights as in \wxRefEq{Eq:muMaxMin} and \wxRefEq{Eq:muMaxMin_weights};
 \\
 \STATE $\wxcalS = \cup_{k=0}^{K+1}\wxcalS_k$;
    \end{algorithmic}
    \algorithmicensure the coreset $\wxcalS$ for $\wxcalQ$.
\end{algorithm}

\begin{remark}
 Our coreset method is inspired by the layered sampling in \cite{chen2009coresets,braverman2022power}, which is analyzed on metric space. However, our robust Wasserstein distance is not a metric, and thus new techniques are needed to analyze it.
\end{remark}

\subsection{Theoretical Analysis}
\label{Subsec:analysis}

In this section, we prove the quality guarantee of our \Cref{Alg:1} in \Cref{Them:coreset}.

\begin{restatable}{thm}{ThemCoreset} 
\label{Them:coreset}
We set $r=H$ in \Cref{Def:coreset}.
Suppose the diameter of $\wxcalX$ is $R$, i.e., $\max_{x,y\in\wxcalX} \wxdist(x,y) = R$ and the metric space has doubling dimension $\wxDdim$. Let $\wxnuTilde$ be an $\wxcalO(1)$-approximate solution of \wxfreeRWB \ on $\wxcalQ$ and $\Gamma = \wxcalOTilde( \min \{ \wxDdim, \frac{\log n/\epsilon}{\epsilon^2} \} \cdot \frac{n}{\epsilon^2})$. With probability at least $1-\eta$, \Cref{Alg:1} outputs a local coreset $\wxcalS$ for \wxfreeRWB \ on $\wxcalQ$.

\end{restatable}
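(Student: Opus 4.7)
The plan is to control the error $|\wxcalRWB(\wxcalQ,\nu) - \wxcalRWB(\wxcalS,\nu)|$ layer by layer and then sum the contributions. The first step is to establish a weak (approximate) triangle-type inequality for $\wxcalRWDz$: using the definition via optimal transport to a robust proxy and the relaxed triangle inequality $(x+y)^z \le 2^{z-1}(x^z+y^z)$, I would show that for any $\mu$ and any $\nu\in\wxcalD_{\wxnuTilde}$ (i.e.\ $W_z(\wxnuTilde,\nu)\le H$),
\begin{equation*}
\bigl|\wxcalRWDz^z(\mu,\nu)-\wxcalRWDz^z(\mu,\wxnuTilde)\bigr| \le c_z\bigl(\wxcalRWDz^{z-1}(\mu,\wxnuTilde)\cdot H + H^z\bigr)
\end{equation*}
for an absolute constant $c_z$ depending only on $z$. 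This is the key structural lemma that replaces the metric triangle inequality used in \cite{chen2009coresets,braverman2022power}, and verifying it carefully (without a true triangle inequality on $\wxcalRWDz$) will be the technical crux.

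Next I would analyse the inner layers $\wxcalQ_0,\ldots,\wxcalQ_K$ one by one. For a layer $\wxcalQ_k$ with $|\wxcalQ_k|\le\Gamma$ there is zero error since $\wxcalS_k=\wxcalQ_k$ with unchanged weights. For a layer that is subsampled, every $\mu\in\wxcalQ_k$ has $\wxcalRWDz(\mu,\wxnuTilde)\le 2^k H$, so by the structural lemma above, the random variable $f_\nu(\mu):=\wxcalRWDz^z(\mu,\nu)$ is bounded in an interval of width $O((2^k H)^z)$ for every fixed $\nu\in\wxcalD_{\wxnuTilde}$. A Hoeffding bound on the importance-weighted sample mean then yields, with probability $1-\eta'$,
\begin{equation*}
\Bigl|\tfrac{1}{\omega(\wxcalQ)}\!\!\sum_{\mu\in\wxcalS_k}\!\tau(\mu)f_\nu(\mu)\;-\;\tfrac{1}{\omega(\wxcalQ)}\!\!\sum_{\mu\in\wxcalQ_k}\!\omega(\mu)f_\nu(\mu)\Bigr| \le O\!\Bigl(\tfrac{\omega(\wxcalQ_k)}{\omega(\wxcalQ)}(2^kH)^z\sqrt{\tfrac{\log(1/\eta')}{\Gamma}}\Bigr).
\end{equation*}
Summing over the $O(\log(1/\epsilon))$ inner layers and rescaling so that each layer's deviation is at most an $\epsilon/\log(1/\epsilon)$-fraction of the corresponding cost contribution gives the stated $\Gamma = \widetilde O(n/\epsilon^2)$-style dependence.

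The subtle part is making this concentration uniform over all $\nu\in\wxcalD_{\wxnuTilde}$. For this I would build an $\epsilon'$-net $\wxcalN$ of $\wxcalD_{\wxnuTilde}$ (with respect to $W_z$, noting $r=H$) and use a union bound over $\wxcalN$; the structural lemma then extends the estimate from the net to all of $\wxcalD_{\wxnuTilde}$. Two different covering bounds yield the two terms in $\min\{\wxDdim,\frac{\log(n/\epsilon)}{\epsilon^2}\}$: the doubling-dimension bound gives $\log|\wxcalN|=\widetilde O(\wxDdim)$, while parameterising $\nu$ by its $n$ weights on a suitable net of $\wxcalX$ and applying a dimension-free uniform convergence argument (Maurey/empirical process style) gives the $\log(n/\epsilon)/\epsilon^2$ factor. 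Plugging $\log|\wxcalN|$ into the Hoeffding bound determines $\Gamma$.

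Finally, for the outermost layer $\wxcalQ_{K+1}$ (where $\wxcalRWDz(\mu,\wxnuTilde)>2^K H$ and so $H/\wxcalRWDz(\mu,\wxnuTilde) < 2\epsilon$), the structural lemma yields $\wxcalRWDz^z(\mu,\nu) = \wxcalRWDz^z(\mu,\wxnuTilde)(1\pm O(\epsilon))$ for every $\nu\in\wxcalD_{\wxnuTilde}$. Hence it is sufficient to preserve only $\sum_{\mu\in\wxcalQ_{K+1}}\omega(\mu)\wxcalRWDz^z(\mu,\wxnuTilde)$; the two-point reweighting in \eqref{Eq:muMaxMin}--\eqref{Eq:muMaxMin_weights} does this exactly, and bounding each term between $\wxcalRWDz^z(\mu^{\min},\wxnuTilde)$ and $\wxcalRWDz^z(\mu^{\max},\wxnuTilde)$ guarantees $\tau(\mu^{\max}),\tau(\mu^{\min})\ge 0$. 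The resulting outermost-layer error is at most $\epsilon\cdot\wxcalRWB(\wxcalS_{K+1},\nu)$ and hence absorbable. Combining all layer errors, using $\wxcalRWB(\wxcalQ,\wxnuTilde)=H^z$ and the fact that $\wxnuTilde$ is an $O(1)$-approximation to lower-bound $\wxcalRWB(\wxcalQ,\nu)$ by $\Omega(H^z)$, yields the required relative bound $|\wxcalRWB(\wxcalQ,\nu)-\wxcalRWB(\wxcalS,\nu)|\le\epsilon\cdot\wxcalRWB(\wxcalQ,\nu)$ with probability at least $1-\eta$ after a final union bound over layers and net elements. The hardest step will be the weak triangle inequality and, closely tied to it, arguing that the two alternative covering bounds can be combined through $\min\{\cdot,\cdot\}$ without losing further factors.
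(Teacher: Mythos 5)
Your plan is correct and follows essentially the same route as the paper: a weak triangle inequality for $\wxcalRWDz^z$ (the paper's \Cref{Lem:bounded by r}, proved via the noiseless proxies and the generalized triangle inequality), per-layer Hoeffding concentration for a fixed $\nu$ with the $(2^kH)^z$ range bound (the paper's \Cref{Lem:fix a solution}), exact preservation of the anchored cost on the outermost layer via the two-point reweighting together with the $(1\pm\wxcalO(\epsilon))$ multiplicative stability there, and finally a union bound over a discretization of $\wxcalD_{\wxnuTilde}$ followed by a discretization-error bound. Two points of divergence are worth noting. First, your accounting of the factor $n$ is off: it does not come from summing the $\wxcalO(\log(1/\epsilon))$ layers (for a single fixed $\nu$ the paper needs only $\Gamma=\wxcalO(\log(1/\eta)/\epsilon^2)$, since $\sum_k\omega(\wxcalQ_k)(2^kH)^z=\wxcalO(1)\cdot\omega(\wxcalQ)\cdot\wxcalRWB(\wxcalQ,\wxnuTilde)$ without any per-layer rescaling), but rather from the union bound over the net: a candidate barycenter has $n$ free locations and $n$ free weights, so the paper's grid $\wxcalF$ has $\log|\wxcalF|=\wxcalO\bigl(n(\wxDdim\log\frac{R}{\epsilon H}+\log n)\bigr)$, i.e.\ $\widetilde{\wxcalO}(n\cdot\wxDdim)$, not $\widetilde{\wxcalO}(\wxDdim)$; with the correct count your Hoeffding step lands exactly on the stated $\Gamma$. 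Second, for the $\frac{\log(n/\epsilon)}{\epsilon^2}$ term inside the $\min$ the paper does not run a separate Maurey/empirical-process argument; it simply invokes terminal Johnson--Lindenstrauss and iterative size reduction to assert that one may take $\wxDdim=\wxcalOTilde(\frac{\log (n/\epsilon)}{\epsilon^2})$ and reuses the same doubling-dimension net bound. Your empirical-process route could plausibly be made to work but is an additional, unverified component, whereas the paper's reduction is a one-line citation; otherwise the two arguments coincide.
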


\begin{remark}
 (\romannumeral1) The doubling dimension\footnote{The doubling dimension is defined in Appendix.} is a measure for describing the growth rate of the data set $\wxcalX$ with respect to the metric $\wxdist(\cdot,\cdot)$;
 (\romannumeral2) The coreset size in \Cref{Them:coreset} is dependent of the dataset set size $m$;
 (\romannumeral3) If $\nu$ is an $\alpha$-approximate solution on $\wxcalS$, then it is also a $\frac{1+\epsilon}{1-\epsilon}\alpha$-approximate solution on $\wxcalQ$ with probability at least $1-\eta$; thus, the approximate solution on $\wxcalS$ can imitate the approximate solution on $\wxcalQ$ well.
 
\end{remark}

From the iterative size reduction \cite{narayanan2019optimal} and the terminal version of Johnson-Lindenstrauss Lemma \cite{braverman2021coresets}, we can regard $\wxDdim = \wxcalOTilde(\frac{\log n/\epsilon}{\epsilon^2}$). Therefore, it is sufficient to obtain a coreset with size $\wxcalOTilde(\frac{n \cdot \wxDdim}{\epsilon^2})$.

\begin{restatable}{lem}{boundByr}\label{Lem:bounded by r}
    Given $r,\wxnuTilde,\wxcalD_{\wxnuTilde}$ as in \Cref{Def:coreset} and $0< s\leq 1$,    we have $\wxcalRWDz^z(\mu,\nu) \leq (1+s)^{z-1} \cdot \wxcalRWDz^z(\mu,\wxnuTilde) + (1+\frac{1}{s})^{z-1} \cdot r^z$ for any $\mu\in\wxcalQ, \nu\in\wxcalD_{\wxnuTilde}$.
\end{restatable}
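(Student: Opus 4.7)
The plan is to prove this weak triangle inequality by reducing to the standard triangle inequality for the (non-robust) Wasserstein distance $W_z$, after fixing the outlier assignment optimally for $\wxcalRWDz(\mu,\wxnuTilde)$, and then splitting the $z$-th power using an elementary Jensen-type inequality.

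First, I would exploit the structure described right before Problem~\ref{Prom:AOT}: since only the first argument carries mass of outliers, $\wxcalRWDz^z(\mu,\nu)$ equals $\min_{\wxaout}\, W_z^z\!\bigl(\tfrac{\mu-\wxmuout}{1-\zeta},\,\nu\bigr)$ over admissible outlier measures $\wxmuout \preceq \mu$ with $\|\wxbfaout\|_1 = \zeta$. Let $\wxmuout^*$ denote an optimizer for $\wxcalRWDz(\mu,\wxnuTilde)$, so that
\begin{equation*}
\wxcalRWDz^z(\mu,\wxnuTilde) \;=\; W_z^z\!\Bigl(\tfrac{\mu-\wxmuout^*}{1-\zeta},\,\wxnuTilde\Bigr).
\end{equation*}
The key observation is that $\wxmuout^*$ is still a \emph{feasible} (though generally suboptimal) outlier choice for $\wxcalRWDz(\mu,\nu)$, giving the one-sided bound
\begin{equation*}
\wxcalRWDz^z(\mu,\nu) \;\leq\; W_z^z\!\Bigl(\tfrac{\mu-\wxmuout^*}{1-\zeta},\,\nu\Bigr).
\end{equation*}

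Next, since $\tfrac{\mu-\wxmuout^*}{1-\zeta}$, $\wxnuTilde$, and $\nu$ all lie in $\wxcalPX$, I can apply the genuine triangle inequality for $W_z$ (which \emph{is} a metric on $\wxcalPX$) to obtain
\begin{equation*}
W_z\!\Bigl(\tfrac{\mu-\wxmuout^*}{1-\zeta},\,\nu\Bigr) \;\leq\; W_z\!\Bigl(\tfrac{\mu-\wxmuout^*}{1-\zeta},\,\wxnuTilde\Bigr) + W_z(\wxnuTilde,\nu).
\end{equation*}
Using $\nu \in \wxcalD_{\wxnuTilde}$ to bound $W_z(\wxnuTilde,\nu)\leq r$, and combining with the preceding inequality, I get
\begin{equation*}
\wxcalRWDz(\mu,\nu) \;\leq\; \wxcalRWDz(\mu,\wxnuTilde) + r.
\end{equation*}

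Finally, to recover the stated $z$-th power bound I would invoke the elementary inequality $(a+b)^z \leq (1+s)^{z-1} a^z + (1+\tfrac{1}{s})^{z-1} b^z$ valid for $a,b\geq 0$, $z\geq 1$, $s>0$. This follows from Jensen's inequality applied to the convex function $t\mapsto t^z$: writing $a+b = \tfrac{s}{1+s}\cdot \tfrac{a(1+s)}{s} + \tfrac{1}{1+s}\cdot b(1+s)$ and taking the $z$-th power yields exactly this inequality. Setting $a = \wxcalRWDz(\mu,\wxnuTilde)$ and $b = r$ delivers the claim.

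The only subtlety is justifying the reuse of $\wxmuout^*$ across two different second arguments; this is legitimate precisely because the robust Wasserstein distance in our setting treats outliers asymmetrically (only the first argument has them), so the feasibility constraint on $\wxmuout$ depends only on $\mu$ and $\zeta$. I do not foresee a technical obstacle beyond carefully stating this reduction; the triangle inequality for $W_z$ on $\wxcalPX$ is classical and the power-of-sum inequality is standard.
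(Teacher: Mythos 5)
Your proof is correct and follows essentially the same route as the paper's: both take the optimal noiseless proxy $\mu^1=\frac{\mu-\wxmuout^*}{1-\zeta}$ realizing $\wxcalRWDz(\mu,\wxnuTilde)$, reuse it as a feasible (suboptimal) proxy to get $\wxcalRWDz^z(\mu,\nu)\leq W_z^z(\mu^1,\nu)$, and then combine the triangle inequality for $W_z$ with the power-of-sum inequality (packaged in the paper as \Cref{Lem:Generalized triangle inequality}) and the bound $W_z(\wxnuTilde,\nu)\leq r$. The only nit is that your convex split $a+b=\tfrac{s}{1+s}\cdot\tfrac{a(1+s)}{s}+\tfrac{1}{1+s}\cdot b(1+s)$ actually yields $(1+\tfrac{1}{s})^{z-1}a^z+(1+s)^{z-1}b^z$ (coefficients swapped); use $a+b=\tfrac{1}{1+s}\cdot(1+s)a+\tfrac{s}{1+s}\cdot\tfrac{(1+s)b}{s}$ to land on the stated form.
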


\Cref{Lem:bounded by r} indicates that, for any fixed $\mu$, the robust Wasserstein distance between $\mu$ and all the solutions in $\wxcalD_{\wxnuTilde}$ can be bounded. Based on this, we can obtain the following lemma. (The proofs of \Cref{Lem:bounded by r} and \Cref{Lem:fix a solution} are in Appendix.)

\begin{restatable}{lem}{LemLocalCoreset} \label{Lem:fix a solution}
 We set $r=H$ in \Cref{Def:coreset}.    Let $\wxnuTilde$ be an $\wxcalO(1)$-approximate solution of \wxfreeRWB \ on $\wxcalQ$.
 For a fixed solution $\nu\in\wxcalD_{\wxnuTilde}$, if we set $\Gamma = \wxcalO(\frac{\log 1/\eta}{\epsilon^2})$ in \Cref{Alg:1}, with probability at least $1-\eta$, we have 
 \begin{equation*}
 \wxVert{\wxcalRWB(\wxcalQ,\nu) - \wxcalRWB(\wxcalS,\nu)} \leq \wxcalO(\epsilon) \cdot \wxcalRWB(\wxcalQ,\nu).
 \end{equation*}
\end{restatable}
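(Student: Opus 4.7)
The plan is to decompose the error by layers and bound each layer's contribution separately. Write
\begin{equation*}
\wxcalRWB(\wxcalQ, \nu) - \wxcalRWB(\wxcalS, \nu) = \frac{1}{\omega(\wxcalQ)} \sum_{k=0}^{K+1} E_k,
\end{equation*}
where $E_k := \sum_{\mu \in \wxcalQ_k} \omega(\mu) \wxcalRWDz^z(\mu, \nu) - \sum_{\mu \in \wxcalS_k} \tau(\mu) \wxcalRWDz^z(\mu, \nu)$. By construction $E_k = 0$ whenever $|\wxcalQ_k| \le \Gamma$, so only the genuinely sampled layers ($k \le K$ with $|\wxcalQ_k| > \Gamma$) and the outermost layer ($k = K+1$) need attention. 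The target is to show $|E_k|/\omega(\wxcalQ) \le \wxcalO(\epsilon/K) \cdot \wxcalRWB(\wxcalQ, \nu)$ for each such layer and then invoke a union bound.

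For a sampled layer with $k \le K$, I would apply Hoeffding's inequality to the i.i.d.\ random variables $Y_i = \tfrac{\omega(\wxcalQ_k)}{\omega(\wxcalQ)\, \Gamma} \, \wxcalRWDz^z(\mu_i, \nu)$, where $\mu_i$ is drawn according to $\omega_l/\omega(\wxcalQ_k)$ on $\wxcalQ_k$. Applying \Cref{Lem:bounded by r} with $s = 1$ and using $\wxcalRWDz^z(\mu, \wxnuTilde) \le 2^{kz} H^z$ for $\mu \in \wxcalQ_k$ gives the range bound $|Y_i| = \wxcalO\bigl(\omega(\wxcalQ_k) \cdot 2^{kz} H^z / (\omega(\wxcalQ)\,\Gamma)\bigr)$. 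To convert Hoeffding into the desired \emph{multiplicative} form, I need $\omega(\wxcalQ_k) \cdot 2^{kz} H^z \lesssim \omega(\wxcalQ)\, \wxcalRWB(\wxcalQ, \nu)$. The key step here is to apply \Cref{Lem:bounded by r} in the reverse direction, swapping $\nu$ and $\wxnuTilde$: this yields $\wxcalRWDz^z(\mu, \nu) \gtrsim 2^{(k-1)z} H^z - H^z$ for $\mu \in \wxcalQ_k$, so that $\omega(\wxcalQ_k) \cdot 2^{kz} H^z \lesssim \sum_{\mu \in \wxcalQ_k} \omega(\mu)\, \wxcalRWDz^z(\mu, \nu) \le \omega(\wxcalQ)\, \wxcalRWB(\wxcalQ, \nu)$ once $k$ is moderately large; the small-$k$ case uses the trivial bound $\omega(\wxcalQ_k) \le \omega(\wxcalQ)$ together with $\wxcalRWB(\wxcalQ, \nu) \ge \wxOPT \gtrsim H^z$, which follows because $\wxnuTilde$ is an $\wxcalO(1)$-approximate solution so $H^z = \wxcalRWB(\wxcalQ, \wxnuTilde) \lesssim \wxOPT$. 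Substituting these estimates into Hoeffding's inequality produces the required per-layer guarantee when $\Gamma = \wxcalOTilde(\log(1/\eta)/\epsilon^2)$.

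The outermost layer needs a different argument because only two representative points are retained. Here the crucial observation is that for $\mu \in \wxcalQ_{K+1}$ one has $\wxcalRWDz^z(\mu, \wxnuTilde) > 2^{Kz} H^z \ge H^z/\epsilon^z$, so the additive term $(1+1/s)^{z-1} r^z$ in \Cref{Lem:bounded by r} is negligible compared with $\wxcalRWDz^z(\mu, \wxnuTilde)$ when we pick $s = \epsilon$. This delivers the sharp multiplicative estimate $\wxcalRWDz^z(\mu, \nu) = (1 \pm \wxcalO(\epsilon)) \, \wxcalRWDz^z(\mu, \wxnuTilde)$ uniformly on $\wxcalQ_{K+1}$. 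Combining this with the weight-matching identity \wxRefEq{Eq:muMaxMin_weights}, which by construction makes the estimator \emph{exact} at the anchor $\wxnuTilde$, yields $|E_{K+1}|/\omega(\wxcalQ) \le \wxcalO(\epsilon) \cdot \wxcalRWB(\wxcalQ, \wxnuTilde) \le \wxcalO(\epsilon) \cdot \wxcalRWB(\wxcalQ, \nu)$.

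A union bound over the $K + 2 = \wxcalO(\log 1/\epsilon)$ layers, with failure probability $\eta/(K+2)$ per layer, then completes the proof. The main obstacle is the absence of a triangle inequality for $\wxcalRWDz$: the classical layered-sampling analyses in \cite{chen2009coresets, braverman2022power} rely on metric-space triangle inequalities to control $\wxcalRWDz^z(\mu, \nu)$ from both sides in terms of $\wxcalRWDz^z(\mu, \wxnuTilde)$. \Cref{Lem:bounded by r} serves as the quantitative substitute, but its multiplicative factor $(1+s)^{z-1}$ and additive term $(1+1/s)^{z-1} r^z$ must be balanced differently for the bulk layers (where $s = 1$ is enough) and for the far layer (where $s = \epsilon$ is needed to suppress the additive term), and the lemma must moreover be invoked in both directions to extract a matching \emph{lower} bound on $\wxcalRWDz^z(\mu, \nu)$.
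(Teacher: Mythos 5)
Your proposal is correct and follows essentially the same route as the paper's proof: the same layer decomposition, a per-layer Hoeffding bound of the form $|E_k| \leq \wxcalO(\epsilon)\cdot\omega(\wxcalQ_k)\cdot(2^kH)^z$ for the sampled layers, the multiplicative $(1\pm\wxcalO(\epsilon))$ closeness of $\wxcalRWDz^z(\mu,\nu)$ to $\wxcalRWDz^z(\mu,\wxnuTilde)$ together with the exactness of the two-point estimator at the anchor for $\wxcalQ_{K+1}$, and a union bound over the $\wxcalO(\log\frac{1}{\epsilon})$ layers. The only substantive difference is where the additive-to-multiplicative conversion happens: the paper sums the per-layer additive errors into $\wxcalO(\epsilon)\cdot\wxcalRWB(\wxcalQ,\wxnuTilde)$ (using that the partition is anchored at $\wxnuTilde$) and converts once at the end via $\wxcalRWB(\wxcalQ,\wxnuTilde)\leq\wxcalO(1)\cdot\wxOPT\leq\wxcalO(1)\cdot\wxcalRWB(\wxcalQ,\nu)$, whereas you convert per layer using a reverse application of \Cref{Lem:bounded by r} and a case split on $k$ --- this is valid but more laborious, and your per-layer accuracy target of $\epsilon/K$ is what forces the extra polylogarithmic factor hidden in your $\wxcalOTilde(\cdot)$ for $\Gamma$, which the paper's global summation avoids.
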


 However, \Cref{Lem:fix a solution} only works for a single solution $\nu\in\wxcalD_{\wxnuTilde}$. To ensure \wxRefEq{Eq:coreset property} holds for all
 the solutions in $\wxcalD_{\wxnuTilde}$, we take two steps:(\expandafter{\romannumeral1})
discrete the solution space $\wxcalD_{\wxnuTilde}$ by grid $\wxcalF$ (defined in \wxRefEq{Eq:grid}), and make sure it holds for all solutions in $\wxcalF$; (\expandafter{\romannumeral2}) bound the discretization error and make sure it holds for all $\nu\in\wxcalD_{\wxnuTilde}$. (The above two steps are exactly the roadmap for proving \Cref{Them:coreset}.)

\begin{proof}
\textbf{Discrete the solution space $\wxcalD_{\wxnuTilde}$:}
 Suppose $R$ is the diameter of $\wxcalX$.
 We define $\wxBall(x,R): = \wxbigBracket{ y\mid \wxdist(x,y) \leq R }$.

Let $\wxBall_j^{\epsilon H}$ be an $\epsilon H$-net of the ball $\wxBall(\tilde{y}_j,R)$. 
Since the metric space $(\wxcalX,\wxdist)$ has doubling dimension $\wxDdim$, we have $|\wxBall_j^{\epsilon H}| = (\frac{R}{\epsilon H})^{\wxcalO(\wxDdim)}$. 
Let $e = \lceil\frac{nR^z}{\epsilon^z H^z}\rceil$ and $E = \wxbigBracket{ \frac{i}{e} \mid i\in [e] }$ be a $\frac{\epsilon^z H^z }{n R^z}$-net of the interval $[0,1]$.
Then, we consider a grid $\wxcalF$ as follows.

\begin{equation}\label{Eq:grid}
    \wxcalF := \wxbigBracket{ \bar{\nu} = \sum_{j=1}^n\bar{b}_j\delta_{\bar{y}_j}\in\wxcalPX \mid \bar{b}_j\in E, \bar{y}_j\in\wxBall_j^{\epsilon H} } 
\end{equation}
        
    Thus, we obtain $|\wxcalF|\leq (\frac{R}{\epsilon H})^{\wxcalO(n\cdot\wxDdim)}\cdot (\frac{nR}{\epsilon H})^{\wxcalO(n)}$. 
 By using union bound for $|\wxcalF|$ solutions, it yields that \wxRefEq{Eq:coreset property} holds for all the solutions in $\wxcalF$ with probability at least $1-\eta$ if we set $\Gamma = \wxcalO( \frac{n \log 1/\eta}{\epsilon^2} (\wxDdim \cdot \log\frac{R}{\epsilon H} + \log n) )$. \\

\textbf{Bound discretization error:}
For any $\nu = \sum_{j=1}^n b_j\delta_{y_j}\in\wxcalD_{\wxnuTilde}$, we can find a $\bar{\nu}' = \sum_{j=1}^n b_j\delta_{\bar{y}_j}$ satisfying $\wxdist(y_j,\bar{y}_j)\leq \wxcalO(\epsilon H)$. For $W_z(\nu,\bar{\nu}' )$, we can construct a feasible coupling by assigning all the mass of $y_j$ to $\bar{y}_j$ for all $j\in[n]$; that is, the coupling is a diagonal matrix $\wxdiag(\wxbfb)$.
 Then, the cost induced by the feasible coupling $\wxdiag(\wxbfb)$ is at most $\wxcalO(\epsilon^z H^z)$. Since $W_z^z(\nu,\bar{\nu}')$ is induced by the optimal coupling, we have $W_z^z(\nu,\bar{\nu}') \leq \wxcalO(\epsilon^z H^z)$.

Meanwhile, for $\bar{\nu}'$, we can find a $\bar{\nu} = \sum_{j=1}^n\bar{b}_j\delta_{\bar{y}_j} \in\wxcalF$ satisfying $b_j - \bar{b}_j \leq \frac{\epsilon^z H^z}{nR^z}$. 
For $W_z(\bar{\nu},\bar{\nu}')$, we can construct a feasible coupling by keeping $b_j - \frac{\lfloor b_j \cdot e \rfloor}{e}$ mass constant; thus, we need to assign at most $\wxcalO(\frac{\epsilon^z H^z}{R^z})$ mass in total; obviously, the cost caused by this feasible coupling is at most $\wxcalO(\frac{\epsilon^z H^z}{R^z})\cdot R^z$. Therefore, we have $W_z^z(\bar{\nu},\bar{\nu}') \leq \epsilon^z H^z$.
Finally, for any $\nu\in\wxcalD_{\wxnuTilde}$, we can find a grid point $\bar{\nu}$ satisfying $W_z^z(\bar{\nu},\nu) \leq \wxcalO(\epsilon^z)\cdot H^z$.

Similar to \Cref{Lem:bounded by r}, we have $\wxcalRWDz^z(\mu,\nu) \leq (1+\epsilon)^{z-1}\cdot\wxcalRWDz^z(\mu,\bar{\nu}) + (1+\frac{1}{\epsilon})^{z-1} \cdot W_z^z(\nu,\bar{\nu})$
and 
$\wxcalRWDz^z(\mu,\nu) \geq (1-\wxcalO(\epsilon)) \cdot \wxcalRWDz^z(\mu,\bar{\nu}) - \wxcalO(\frac{1}{\epsilon})^{z-1} \cdot W_z^z(\nu,\bar{\nu})$. Then, 

 \begin{align*}
 & \wxVert{ \wxcalRWB(\wxcalQ,\nu) - \wxcalRWB(\wxcalS,\nu) } \\
 = & \frac{1}{\omega(\wxcalQ)} \wxVert{ \sum_{\mu\in\wxcalQ} \omega(\mu) \cdot \wxcalRWDz^z(\mu,\nu) - \sum_{\mu\in\wxcalS} \tau(\mu) \cdot \wxcalRWDz^z(\mu,\nu) } \\
 \leq & \frac{1}{\omega(\wxcalQ)} \vert \sum_{\mu\in\wxcalQ} \omega(\mu) \cdot (1+\wxcalO(\epsilon)) \cdot \wxcalRWDz^z(\mu,\bar{\nu}) - \\
  & \sum_{\mu\in\wxcalS} \tau(\mu) \cdot (1-\wxcalO(\epsilon)) \cdot \wxcalRWDz^z(\mu,\bar{\nu}) \vert + \wxcalO(\frac{1}{\epsilon^{z-1}}) \cdot W_z^z(\nu,\bar{\nu}) \\
 = & \wxVert{ (1+\wxcalO(\epsilon)) \wxcalRWB(\wxcalQ,\bar{\nu}) - (1-\wxcalO(\epsilon)) \wxcalRWB(\wxcalS,\bar{\nu}) } + \wxcalO(\epsilon H^z)\\
 \leq & \wxcalO(\epsilon) \cdot \wxcalRWB(\wxcalQ,\bar{\nu}) + \wxcalO(\epsilon H^z) = \wxcalO(\epsilon) \cdot \wxcalRWB(\wxcalQ,\bar{\nu}),
 \end{align*}
where the first equality and the second equality come from the definition of $\wxcalRWB(\cdot,\cdot)$, the first inequality follows from generalized triangle inequality, and the last equality is due to the fact that $H^z = \wxcalO(1) \cdot \wxOPT \leq \wxcalO(1) \cdot \wxcalRWB(\wxcalQ,\bar{\nu})$. 
By using \Cref{Lem:bounded by r} again, we have $\wxcalRWB(\wxcalQ,\bar{\nu}) \leq (1+\wxcalO(\epsilon))\cdot \wxcalRWB(\wxcalQ,\nu)) + \wxcalO(\epsilon H^z)$; then, we obtain $\wxVert{ \wxcalRWB(\wxcalQ,\nu) - \wxcalRWB(\wxcalS,\nu) } \leq \wxcalO(\epsilon) \cdot \wxcalRWB(\wxcalQ,\nu)$.
\end{proof}

\subsection{Algorithm}
\label{Subsec:alg}
\Cref{Alg:2} aims to compute a solution for \wxfreeRWB \ on the input probability measure set $\wxcalQ$ (as in $\wxRefEq{Eq:Q}$), and
it leverages model reduction and coreset technique to accelerate the computation.

By model reduction, to solve \wxfreeRWB, we can compute \wxfreeAWB \ by updating the weights and locations alternatively. We pre-specify the number of iterations as $\mathsf{Iter}$.
At each iteration, updating weights is finished by invoking \wxfixAWB.
Updating locations is a power mean problem; for a special case $z=2$, it is a geometric mean problem \cite{cohen2021improved}.

By coreset technique, we can construct a small proxy $\wxcalS$ (returned by \Cref{Alg:1}) anchored at a $\wxcalO(1)$-approximate solution. For the $q$-th iteration, we assume 
 $\wxcalS = \wxbigBracket{\mu^{ql}}_{l\in[|\wxcalS|]}$,
where $X^{ql}:=\{x_i^{ql}\}_{i\in[n]}$ is the locations of $\mu^{ql}$. 
We can run algorithms on $\wxcalS$ instead of on the original data set $\wxcalQ$, which can be very large. Thus, it can improve efficiency by reducing the data set size $m$. 
Besides, since coreset $\wxcalS$ only works for local area $\wxcalD_{\wxnuTilde}$, we reconstruct it if $\nu'$ is out of $\wxcalD_{\wxnuTilde}$.

\begin{algorithm}[h]
    \caption{Algorithm for \wxfreeRWB}
 \label{Alg:2}
    \algorithmicrequire Probability measure set $\wxcalQ$, \\ \wxWhite{111111} $\wxcalO(1)$-approximate solution $\wxnuTilde$.\\
    \vspace{-4mm}
    \begin{algorithmic}[1]
 \STATE $\nu = \wxnuTilde$, $q = 0$;
        \STATE Construct a local coreset $\wxcalS$ anchored at $\nu$ for \wxfreeRWB \ on $\wxcalQ$ by \Cref{Alg:1}; \label{Alg2:line01}
        \FOR{$q < \mathsf{Iter}$}
 \STATE $q = q + 1$;
        \STATE     Compute the coupling set $\wxbigBracket{\wxbfPAug^l}_{l\in[m]}$ of the \wxfixAWB \ on $\wxcalQ$ by invoking \wxRefEq{Eq:fixed_AWB}; \label{Alg2:line03}
        \STATE $\wxbfb' = (\wxbfPAug^1)^T \wxbfOne \in\wxbbRplus^{n+1}$;
        
        \STATE $\nu' = \sum_{j=1}^n b_j' \delta_{y_j}$;
        \IF{ $\nu'\in\wxcalD_{\wxnuTilde}$ }
        \STATE $b_j = b_j'$ for $j\in[n]$; \ \wxComment{update the weights of $\nu$};
        \ELSE
        \STATE $\nu = \sum_{j=1}^nb_j\delta_{y_j}$ and turn to Line \ref{Alg2:line01}; \label{Alg2:line05}
        \ENDIF
        \STATE Compute $y_j' = \arg\min_{y} \sum_{l=1}^{|\wxcalS|} \tau(\mu^{ql}) \sum_{i=1}^n P_{ij}^{ql} \cdot\wxdist^z(x_i^{ql},y)$ for all $j\in[n]$; \label{Alg2:line07}
 \STATE $\nu' = \sum_{j=1}^n b_j \delta_{y_j'}$;
        \IF{ $\nu'\in\wxcalD_{\wxnuTilde}$ }
        \STATE $y_j = y_j'$ for $j\in[n]$;  \wxComment{update the locations of $\nu$};
        \ELSE
        \STATE $\nu = \sum_{j=1}^nb_j\delta_{y_j}$ and turn to Line \ref{Alg2:line01}; \label{Alg2:line09}
        \ENDIF
    \ENDFOR
    \end{algorithmic}
    \algorithmicensure $\nu = \sum_{j=1}^nb_j\delta_{y_j}$.
\end{algorithm}

\begin{remark}
 Actually, \Cref{Alg:2} offers a framework for  solving \wxfreeRWB. We can use the techniques in Frank–Wolfe algorithm \cite{jaggi2013revisiting,jorge2006numerical} to improve it in the future.
\end{remark}

\begin{table*}[h]
\caption{Comparisons of our \wxfreeRWB \ and the original WB algorithm under different noise intensity. We use $\zeta$ to denote the total mass of outliers, and the noise distribution (N.D.) is Gaussian distribution $\mathcal{N}(\cdot,\cdot)$.}
\label{Tab:free}
\begin{tabular}{llllllll}
 \toprule
 \multirow{2}*{$\zeta$} & \multirow{2}*{N.D.} & \multicolumn{3}{c}{Our \wxfreeRWB} & \multicolumn{2}{c}{WB} & \\
 \cmidrule(lr){3-5}\cmidrule(lr){6-8}
 & & runtime ($\downarrow$) & \textsf{WD} ($\downarrow$) & \textsf{cost} ($\downarrow$) & runtime ($\downarrow$) & \textsf{WD} ($\downarrow$) & \textsf{cost} ($\downarrow$) \\
 \midrule
 \multirow{3}*{$0.1$} & $\mathcal{N}(20,20^2)$ &  $406.26_{\pm 101.29}$ & $\textbf{0.56  }_{\pm 0.02 }$ & $\textbf{57.98 }_{\pm 0.02 }$ & $ 429.72_{\pm 125.86}$ & $ 0.57   _{\pm 0.02 }$ & $  57.99  _{\pm 0.02 }$   \\
                      & $\mathcal{N}(40,40^2)$ &  $374.55_{\pm 111.53}$ & $\textbf{2.86  }_{\pm 0.11 }$ & $\textbf{59.46 }_{\pm 0.11 }$ & $ 371.07_{\pm 109.45}$ & $ 91.27  _{\pm 31.51}$ & $  114.97 _{\pm 27.88}$   \\
                      & $\mathcal{N}(60,60^2)$ &  $299.06_{\pm 29.56 }$ & $\textbf{0.85  }_{\pm 0.04 }$ & $\textbf{58.27 }_{\pm 0.04 }$ & $ 313.12_{\pm 42.11 }$ & $ 473.66 _{\pm 7.86 }$ & $  444.91 _{\pm 6.86 }$   \\ \hline
 \multirow{3}*{$0.2$} & $\mathcal{N}(20,20^2)$ &  $434.46_{\pm 71.85 }$ & $\textbf{2.50  }_{\pm 0.09 }$ & $\textbf{59.47 }_{\pm 0.09 }$ & $ 465.89_{\pm 105.96}$ & $ 2.56   _{\pm 0.09 }$ & $  59.52  _{\pm 0.09 }$   \\
                      & $\mathcal{N}(40,40^2)$ &  $462.63_{\pm 109.26}$ & $\textbf{15.78 }_{\pm 3.13 }$ & $\textbf{59.81 }_{\pm 1.01 }$ & $ 448.58_{\pm 67.21 }$ & $ 382.56 _{\pm 6.81 }$ & $  335.22 _{\pm 6.47 }$   \\
                      & $\mathcal{N}(60,60^2)$ &  $460.43_{\pm 66.94 }$ & $\textbf{3.71  }_{\pm 0.12 }$ & $\textbf{59.22 }_{\pm 0.26 }$ & $ 481.08_{\pm 56.36 }$ & $ 1201.98_{\pm 8.23 }$ & $  1054.11_{\pm 7.66 }$   \\ \hline
 \multirow{3}*{$0.3$} & $\mathcal{N}(20,20^2)$ &  $477.99_{\pm 74.03 }$ & $\textbf{5.84  }_{\pm 0.11 }$ & $\textbf{61.41 }_{\pm 0.20 }$ & $ 509.11_{\pm 48.46 }$ & $ 5.99   _{\pm 0.11 }$ & $  61.51  _{\pm 0.21 }$   \\
                      & $\mathcal{N}(40,40^2)$ &  $306.84_{\pm 39.58 }$ & $\textbf{198.63}_{\pm 74.20}$ & $\textbf{174.58}_{\pm 62.32}$ & $ 312.85_{\pm 35.12 }$ & $ 661.14 _{\pm 7.50 }$ & $  540.72 _{\pm 7.90 }$   \\ 
                      & $\mathcal{N}(60,60^2)$ &  $444.65_{\pm 110.59}$ & $\textbf{10.86 }_{\pm 1.14 }$ & $\textbf{56.03 }_{\pm 0.76 }$ & $ 519.71_{\pm 76.13 }$ & $ 2093.29_{\pm 10.88}$ & $  1811.10_{\pm 9.60 }$   \\
 \bottomrule\end{tabular}
\end{table*}

\begin{figure*}
    \centering
    \includegraphics[width=0.9\linewidth]{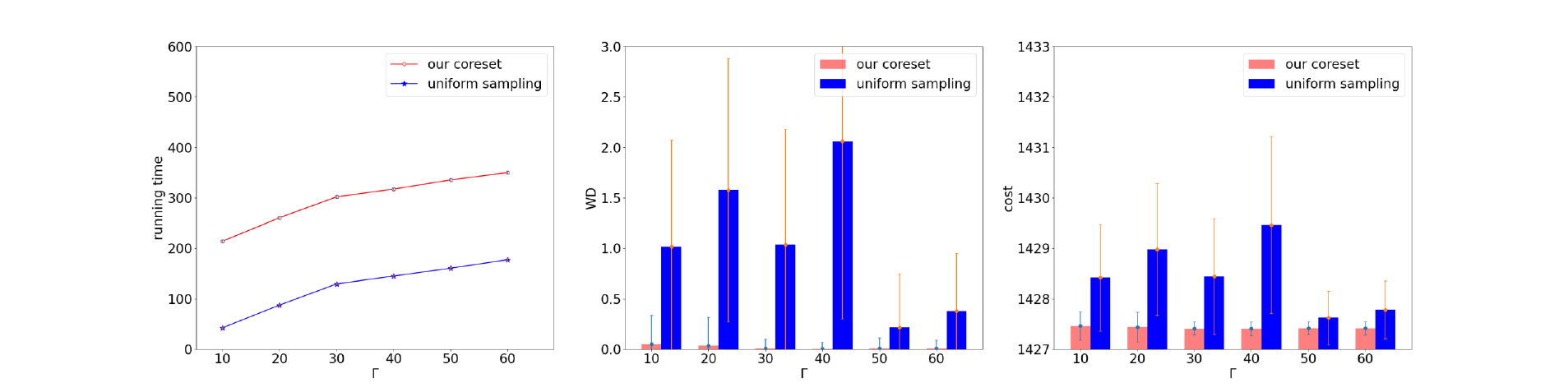}
    \caption{Comparisons of our coreset and uniform sampling. $\Gamma$ is the sample size for each layer in our method.}
\label{Fig:free}
\end{figure*}

\section{Experiments}
\label{Sec:exp}

This section demonstrates the effectiveness of the \wxfreeRWB \ and the efficiency of the coreset technique. 
All the experiments were conducted on a server equipped with 2.40GHz Intel CPU, 128GB main memory, and Python 3.8.
Due to space limitations, we only show part of the experimental results here. (More experiments can be found in Appendix.)

We evaluate our method on the MNIST \cite{lecun1998gradient}dataset,
which is a popular handwritten benchmark with digits from 0 to 9. 
We select $3000$ images. For the $l$-th image, we represent it by a measure $\mu^l = \sum_{i=1}^{60}a_i^l\delta_{x_i^l} \in\wxcalPX$ via $k$-means clustering. 
More specifically, we take $28 \times 28$ pixels as the input of clustering, and obtain 60 cluster centers $X^l = \wxbigBracket{x_i^l}_{i\in[60]}$ as the locations of $\mu^l$; the weight $a_i^l$ is 
proportional to the total pixel number of this cluster. Till now, we obtain a clean data set $\wxcalQ = \wxbigBracket{\mu^l}_{l\in[3000]}$. We assume $\wxcalQ'$ is its corresponding noisy data set, which will be constructed later. 

The method proposed by Le et al. \cite{le2021robust} suffers from numerical instability caused by the KL-divergence regularization term. In almost all the instances here, it failed to produce results.
Thus, we only compare our method with the original Wasserstein barycenter (WB) algorithm here.\footnote{The codes and full experiments (including the results on the other data sets and contrast experiments on the numerical instability issues 
\cite{le2021robust}) are available at  \wxBlue{\url{https://github.com/little-worm/iiccllrr2023/blob/main/Full_experiments.pdf}}.}

To measure the performance of our methods, we consider three criteria: (\romannumeral1) running time: the CPU time consumed by algorithms; (\romannumeral2) \textsf{WD}: the Wasserstein distance between the barycenter computed on the noisy data set $\wxcalQ'$ and the WB on the clean data set $\wxcalQ$; (\romannumeral3) \textsf{cost}: Let $\nu$ be the barycenter computed by some algorithm on the noisy data set $\wxcalQ'$. We define its \textsf{cost} as $WB(\wxcalQ,\nu)$. (Note that its cost is evaluated on the clean data set $\wxcalQ$.)
We run $10$ trials and record the average results.

First, to show the efficiency of our \wxfreeRWB, we add $\zeta$ mass of Gaussian noise to each measure $\mu^l\in\wxcalQ$ to obtain a noisy data set $\wxcalQ'$. Then, we compute barycenter by the original WB algorithm and our \wxfreeRWB \ on the noisy dataset respectively. 
The results in \Cref{Tab:free} show that our \wxfreeRWB \ can tackle outliers effectively under different noise intensity.

Then, we show the efficiency of our coreset technique in \Cref{Fig:free}. In this scenario, to obtain a noisy data set $\wxcalQ'$, we first add $0.1$ mass of Gaussian noise from $\mathcal{N}(40,40)$ to each measure $\mu^l\in\wxcalQ$, and then shift the locations of 300 images randomly according to the distribution $\mathcal{N}(0,80)$.  
Throughout our experiments, we ensure that the total sample size of the uniform sampling method equals to the coreset size of our method.
Our coreset method is much more time consuming. However, it performs well on the criteria \textsf{WD} and \textsf{cost}.
Moreover, our method is more stable.

\section{Conclusion and future work}
\label{Sec:conclusion}

In this paper, we study two types of RWB: \wxfixRWB\ and \wxfreeRWB. Our \wxfixRWB \ can be solved within $\wxcalO(\frac{mn^2}{\wxAddErr})$ time efficiently by reducing it as an LP;
for \wxfreeRWB, we use model reduction and coreset technique to accelerate it. 
Obviously, our \Cref{Alg:2} performs well in practice.
In theory, we can guarantee that the output is a constant approximate solution. In the future, how to compute a $(1+\epsilon)$-approximate solution for $0<\epsilon<1$ is worth studying.

\section*{Acknowledgment}

Thanks to Professor Ding for his help and instruction on this paper.
``Data were provided (in part) by the Human Connectome Project, WU-Minn Consortium (Principal Investigators: David Van Essen and Kamil Ugurbil; 1U54MH091657) funded by the 16 NIH Institutes and Centers that support the NIH Blueprint for Neuroscience Research; and by the McDonnell Center for Systems Neuroscience at Washington University.''

\vspace{9mm}

\appendix

\begin{center}
    {\Large \textbf{Appendix}}
\end{center}

\section{Full Experiments}

Here, we demonstrate the effectiveness of the \wxfixRWB/\wxfreeRWB \ and the efficiency of the coreset technique. 
All the experiments were conducted on a server equipped with 2.40GHz Intel CPU, 128GB main memory, and Python 3.8.

We evaluate our method on three datasets: MNIST \cite{lecun1998gradient}, ModelNet40 \cite{wu20153d} and Human Connectome Project (HCP) \cite{van2013wu}.

\paragraph{MNIST:} MNIST dataset \cite{lecun1998gradient} is a popular handwritten benchmark with digits from 0 to 9. We select $3000$ images. For the $l$-th image, we represent it by a measure $\mu^l = \sum_{i=1}^{60}a_i^l\delta_{x_i^l} \in\wxcalPX$ via $k$-means clustering. 
More specifically, we take $28 \times 28$ pixels as the input of clustering, and obtain 60 cluster centers $X^l = \wxbigBracket{x_i^l}_{i\in[60]}$ as the locations of $\mu^l$; the weight $a_i^l$ is 
proportional to the total pixel number of this cluster. Till now, we obtain a clean dataset $\wxcalQ = \wxbigBracket{\mu^l}_{l\in[3000]}$. We assume $\wxcalQ'$ is its corresponding noisy dataset, which will be constructed for each dataset later.

\paragraph{ModelNet40:} ModelNet40 \cite{wu20153d}  is a comprehensive clean collection of $3D$ CAD models. 
We choose $989$ CAD models of chair.
First, we convert these CAD models into point clouds. Then, each point cloud was grouped into $k = 60$ clusters; each cluster was represented by its cluster center;
the weight of each center is proportional to the total number of points of the cluster.
Then, we can obtain $\wxcalQ$ and $\wxcalQ'$ (as described in MNIST dataset).

\paragraph{Human Connectome Project (HCP):} Human Connectome Project (HCP)
\cite{van2013wu} is a dataset of high-quality neuroimaging data in over 1100 healthy young adults, aged 22–35. We took 3000 3D brain images. For each image, its voxels were  grouped into $k = 60$ clusters; each cluster was represented by its cluster center;
the weight of each center is proportional to the total number of points of the cluster.
Then, we can obtain $\wxcalQ$ and $\wxcalQ'$ (as described in MNIST dataset).

The method proposed by Le et al. \cite{le2021robust} suffers from numerical instability caused by the KL-divergence regularization term. 
(This is illustrated in \Cref{App:numerical}. )
In almost all the instances here, it failed to produce results.
Thus, we only compare our method with the original Wasserstein barycenter (WB) algorithm here.\footnote{The codes and full experiments (including the results on the other datasets and contrast experiments on the numerical instability issues 
\cite{le2021robust}) are available at  \wxBlue{\url{https://github.com/little-worm/iiccllrr2023/blob/main/Full_experiments.pdf}}.}

To measure the performance of our methods, we consider three criteria: (\romannumeral1) running time: the CPU time consumed by algorithms; (\romannumeral2) \textsf{WD}: the Wasserstein distance between the barycenter computed on the noisy dataset $\wxcalQ'$ and the WB on the clean dataset $\wxcalQ$; (\romannumeral3) \textsf{cost}: Let $\nu$ be the barycenter computed by some algorithm on the noisy dataset $\wxcalQ'$. We define its \textsf{cost} as $WB(\wxcalQ,\nu)$. (Note that its cost is evaluated on the clean dataset $\wxcalQ$.)
We run $10$ trials and record the average results.

\subsection{Experiments on MNIST Dataset}
\label{App:experiments}

This section shows the experimental result on MNIST dataset.
First, to show the efficiency of our \wxfixRWB/\wxfreeRWB, we add $\zeta$ mass of Gaussian noise to each measure $\mu^l\in\wxcalQ$ to obtain a noisy dataset $\wxcalQ'$. Then, we compute barycenter by the original fixed-support/free-support WB algorithm and our \wxfixRWB/\wxfreeRWB \ on the noisy dataset respectively. 
The results in \Cref{Tab:fixed}/\Cref{Tab:free} show that our \wxfixRWB/\wxfreeRWB \ can tackle outliers effectively under different noise intensity.

\begin{table}[htbp]
\caption{Comparisons of our \wxfixRWB \ and the original fixed-support WB algorithm under different noise intensity on MNIST. We use $\zeta$ to denote the total mass of outliers, and the noise distribution (N.D.) is Gaussian distribution $\mathcal{N}(\cdot,\cdot)$.}
\label{Tab:fixed}
\begin{tabular}{llllllll}
 \toprule
 \multirow{2}*{$\zeta$} & \multirow{2}*{N.D.} & \multicolumn{3}{c}{Our \wxfixRWB} & \multicolumn{2}{c}{WB} & \\
 \cmidrule(lr){3-5}\cmidrule(lr){6-8}
 & & runtime ($\downarrow$) & \textsf{WD} ($\downarrow$) & \textsf{cost} ($\downarrow$) & runtime ($\downarrow$) & \textsf{WD} ($\downarrow$) & \textsf{cost} ($\downarrow$) \\
 \midrule
 \multirow{3}*{$0.1$} & $\mathcal{N}(20,20^2)$ &  $89.38 _{\pm 24.89}$   & $\textbf{0.14  }_{\pm 0.00}$  & $ \textbf{3.54  }_{\pm 0.00}$ & $ 99.11 _{\pm 31.32}$  & $ \textbf{0.14}  _{\pm 0.00}$  & $ \textbf{3.54}  _{\pm 0.00}$   \\
                      & $\mathcal{N}(40,40^2)$ &  $80.83 _{\pm 23.43}$   & $\textbf{13.73 }_{\pm 0.10}$  & $ \textbf{23.06 }_{\pm 0.11}$ & $ 74.42 _{\pm 13.06}$  & $ 21.92 _{\pm 0.23}$  & $ 32.25 _{\pm 0.26}$   \\
                      & $\mathcal{N}(60,60^2)$ &  $66.78 _{\pm 14.84}$   & $\textbf{88.49 }_{\pm 1.51}$  & $ \textbf{86.17 }_{\pm 1.32}$ & $ 75.15 _{\pm 10.62}$  & $ 189.53_{\pm 1.36}$  & $ 175.33_{\pm 1.20}$   \\ \hline
 \multirow{3}*{$0.2$} & $\mathcal{N}(20,20^2)$ &  $67.96 _{\pm 24.50}$   & $\textbf{0.29  }_{\pm 0.01}$  & $ \textbf{4.52  }_{\pm 0.00}$ & $ 96.60 _{\pm 24.53}$  & $ 0.30  _{\pm 0.01}$  & $ 4.53  _{\pm 0.00}$   \\
                      & $\mathcal{N}(40,40^2)$ &  $100.53_{\pm 25.94}$   & $\textbf{25.42 }_{\pm 0.19}$  & $ \textbf{38.18 }_{\pm 0.22}$ & $ 119.50_{\pm 30.14}$  & $ 44.04 _{\pm 0.24}$  & $ 60.19 _{\pm 0.29}$   \\
                      & $\mathcal{N}(60,60^2)$ &  $77.53 _{\pm 17.41}$   & $\textbf{171.08}_{\pm 1.65}$  & $ \textbf{165.06}_{\pm 1.53}$ & $ 90.63 _{\pm 19.26}$  & $ 386.12_{\pm 2.05}$  & $ 365.86_{\pm 1.92}$   \\ \hline
 \multirow{3}*{$0.3$} & $\mathcal{N}(20,20^2)$ &  $85.55 _{\pm 18.26}$   & $\textbf{0.57  }_{\pm 0.01}$  & $ \textbf{5.25  }_{\pm 0.00}$ & $ 106.25_{\pm 38.59}$  & $ 0.58  _{\pm 0.01}$  & $ 5.26  _{\pm 0.00}$   \\ 
                      & $\mathcal{N}(40,40^2)$ &  $103.50_{\pm 27.60}$   & $\textbf{27.50 }_{\pm 0.18}$  & $ \textbf{41.44 }_{\pm 0.21}$ & $ 102.17_{\pm 34.68}$  & $ 69.02 _{\pm 0.35}$  & $ 89.88 _{\pm 0.42}$   \\
                      & $\mathcal{N}(60,60^2)$ &  $98.17 _{\pm 17.41}$   & $\textbf{190.22}_{\pm 1.72}$  & $ \textbf{197.55}_{\pm 1.76}$ & $ 68.74 _{\pm 14.46}$  & $ 552.93_{\pm 2.79}$  & $ 563.60_{\pm 2.82}$   \\
 \bottomrule
 \end{tabular}
\end{table}

\begin{table*}[h]
\caption{Comparisons of our \wxfreeRWB \ and the original free-support WB algorithm under different noise intensity on MNIST. We use $\zeta$ to denote the total mass of outliers, and the noise distribution (N.D.) is Gaussian distribution $\mathcal{N}(\cdot,\cdot)$.}
\label{Tab:free}
\begin{tabular}{llllllll}
\toprule
\multirow{2}*{$\zeta$} & \multirow{2}*{N.D.} & \multicolumn{3}{c}{Our \wxfreeRWB} & \multicolumn{2}{c}{WB} & \\
\cmidrule(lr){3-5}\cmidrule(lr){6-8}
& & runtime ($\downarrow$) & \textsf{WD} ($\downarrow$) & \textsf{cost} ($\downarrow$) & runtime ($\downarrow$) & \textsf{WD} ($\downarrow$) & \textsf{cost} ($\downarrow$) \\
\midrule
\multirow{3}*{$0.1$} & $\mathcal{N}(20,20^2)$ &  $406.26_{\pm 101.29}$ & $\textbf{0.56  }_{\pm 0.02 }$ & $\textbf{57.98 }_{\pm 0.02 }$ & $ 429.72_{\pm 125.86}$ & $ 0.57   _{\pm 0.02 }$ & $  57.99  _{\pm 0.02 }$   \\
& $\mathcal{N}(40,40^2)$ &  $374.55_{\pm 111.53}$ & $\textbf{2.86  }_{\pm 0.11 }$ & $\textbf{59.46 }_{\pm 0.11 }$ & $ 371.07_{\pm 109.45}$ & $ 91.27  _{\pm 31.51}$ & $  114.97 _{\pm 27.88}$   \\
& $\mathcal{N}(60,60^2)$ &  $299.06_{\pm 29.56 }$ & $\textbf{0.85  }_{\pm 0.04 }$ & $\textbf{58.27 }_{\pm 0.04 }$ & $ 313.12_{\pm 42.11 }$ & $ 473.66 _{\pm 7.86 }$ & $  444.91 _{\pm 6.86 }$   \\ \hline
\multirow{3}*{$0.2$} & $\mathcal{N}(20,20^2)$ &  $434.46_{\pm 71.85 }$ & $\textbf{2.50  }_{\pm 0.09 }$ & $\textbf{59.47 }_{\pm 0.09 }$ & $ 465.89_{\pm 105.96}$ & $ 2.56   _{\pm 0.09 }$ & $  59.52  _{\pm 0.09 }$   \\
& $\mathcal{N}(40,40^2)$ &  $462.63_{\pm 109.26}$ & $\textbf{15.78 }_{\pm 3.13 }$ & $\textbf{59.81 }_{\pm 1.01 }$ & $ 448.58_{\pm 67.21 }$ & $ 382.56 _{\pm 6.81 }$ & $  335.22 _{\pm 6.47 }$   \\
& $\mathcal{N}(60,60^2)$ &  $460.43_{\pm 66.94 }$ & $\textbf{3.71  }_{\pm 0.12 }$ & $\textbf{59.22 }_{\pm 0.26 }$ & $ 481.08_{\pm 56.36 }$ & $ 1201.98_{\pm 8.23 }$ & $  1054.11_{\pm 7.66 }$   \\ \hline
\multirow{3}*{$0.3$} & $\mathcal{N}(20,20^2)$ &  $477.99_{\pm 74.03 }$ & $\textbf{5.84  }_{\pm 0.11 }$ & $\textbf{61.41 }_{\pm 0.20 }$ & $ 509.11_{\pm 48.46 }$ & $ 5.99   _{\pm 0.11 }$ & $  61.51  _{\pm 0.21 }$   \\
& $\mathcal{N}(40,40^2)$ &  $306.84_{\pm 39.58 }$ & $\textbf{198.63}_{\pm 74.20}$ & $\textbf{174.58}_{\pm 62.32}$ & $ 312.85_{\pm 35.12 }$ & $ 661.14 _{\pm 7.50 }$ & $  540.72 _{\pm 7.90 }$   \\ 
& $\mathcal{N}(60,60^2)$ &  $444.65_{\pm 110.59}$ & $\textbf{10.86 }_{\pm 1.14 }$ & $\textbf{56.03 }_{\pm 0.76 }$ & $ 519.71_{\pm 76.13 }$ & $ 2093.29_{\pm 10.88}$ & $  1811.10_{\pm 9.60 }$   \\
\bottomrule\end{tabular}
\end{table*}

Then, we show the efficiency of our coreset technique in \Cref{Fig:sampling_1_40_100,Fig:sampling_1_40_200,Fig:sampling_1_40_300,Fig:sampling_1_60_100,Fig:sampling_1_60_200,Fig:sampling_1_60_300,Fig:sampling_1_80_100,Fig:sampling_1_80_200,Fig:sampling_1_80_300}. In this scenario, to obtain a noisy dataset $\wxcalQ'$, we first add $0.1$ mass of Gaussian noise from $\mathcal{N}(40,40)$ to each measure $\mu^l\in\wxcalQ$, and then shift the locations of several images randomly according to a distribution $\mathcal{N}(0,\cdot)$.  
Throughout our experiments, we ensure that the total sample size of the uniform sampling method equals to the coreset size of our method.
$\Gamma$ is the sample size for each layer in our method.
Our coreset method is much more time consuming. However, it performs well on the criteria \textsf{WD} and \textsf{cost}.
Moreover, our method is more stable.

\begin{figure}[htbp!]
    \centering
    \includegraphics[width=0.9\linewidth]{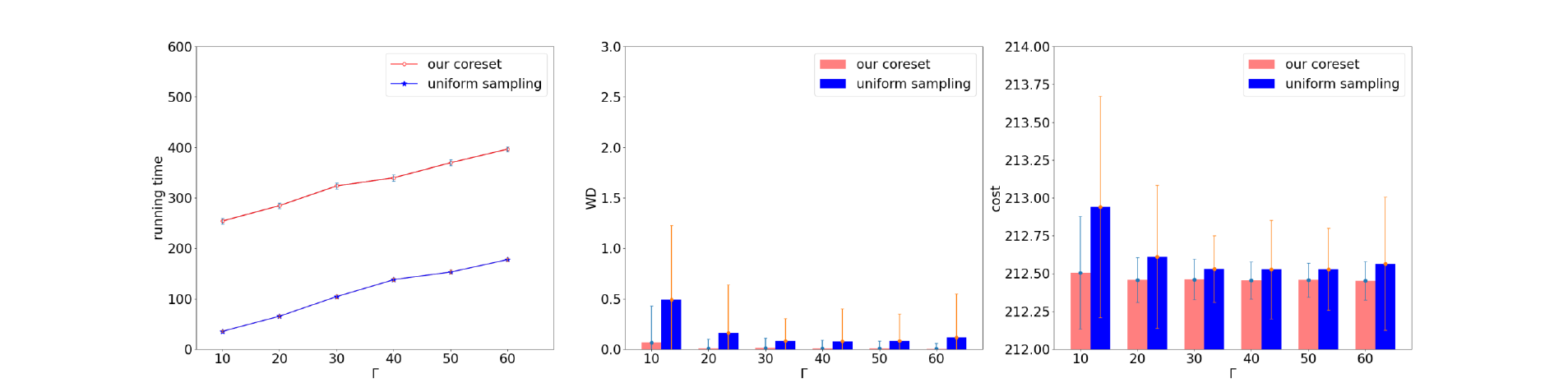}
    \caption{Comparisons of our coreset and uniform sampling on MNIST. Shift locations of 100 images according to distribution $\mathcal{N}(0,40)$.}
\label{Fig:sampling_1_40_100}
\end{figure}

\begin{figure}[htbp!]
    \centering
    \includegraphics[width=0.9\linewidth]{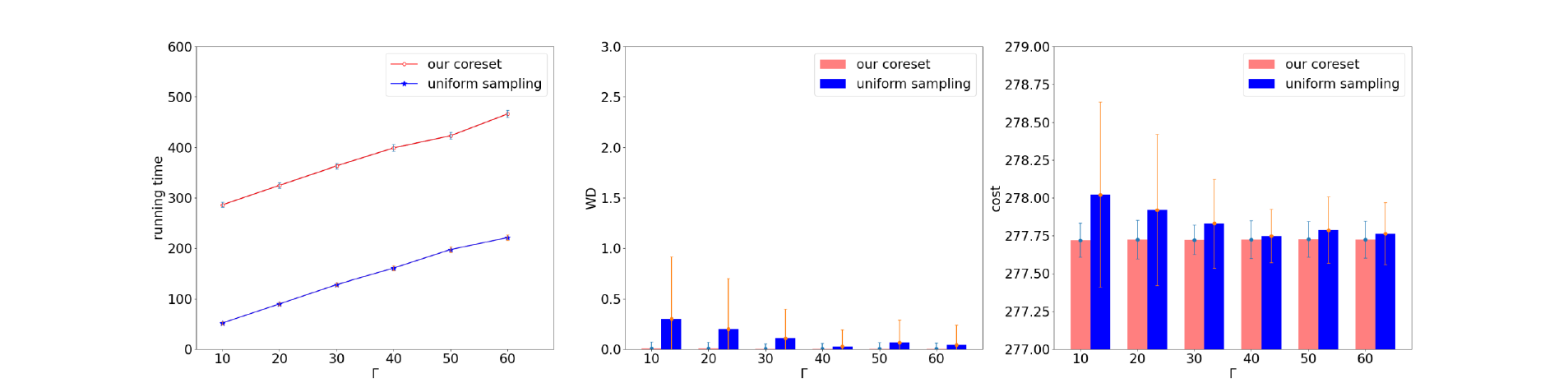}
    \caption{Comparisons of our coreset and uniform sampling on MNIST. Shift locations of 200 images according to distribution $\mathcal{N}(0,40)$.}
\label{Fig:sampling_1_40_200}
\end{figure}

\begin{figure}[htbp!]
    \centering
    \includegraphics[width=0.9\linewidth]{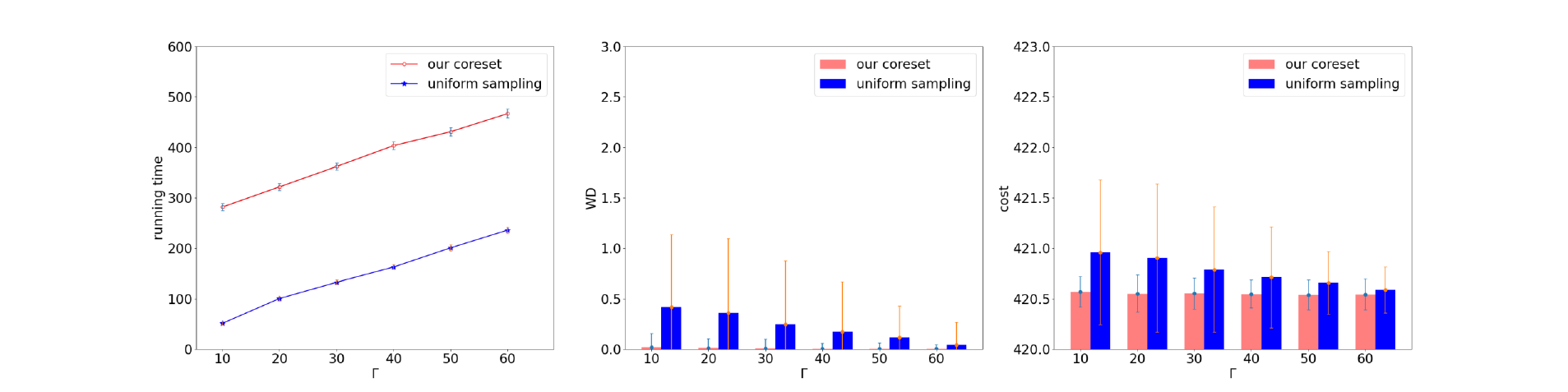}
    \caption{Comparisons of our coreset and uniform sampling on MNIST. Shift locations of 300 images according to distribution $\mathcal{N}(0,40)$.}
\label{Fig:sampling_1_40_300}
\end{figure}

\begin{figure}[htbp!]
    \centering
    \includegraphics[width=0.9\linewidth]{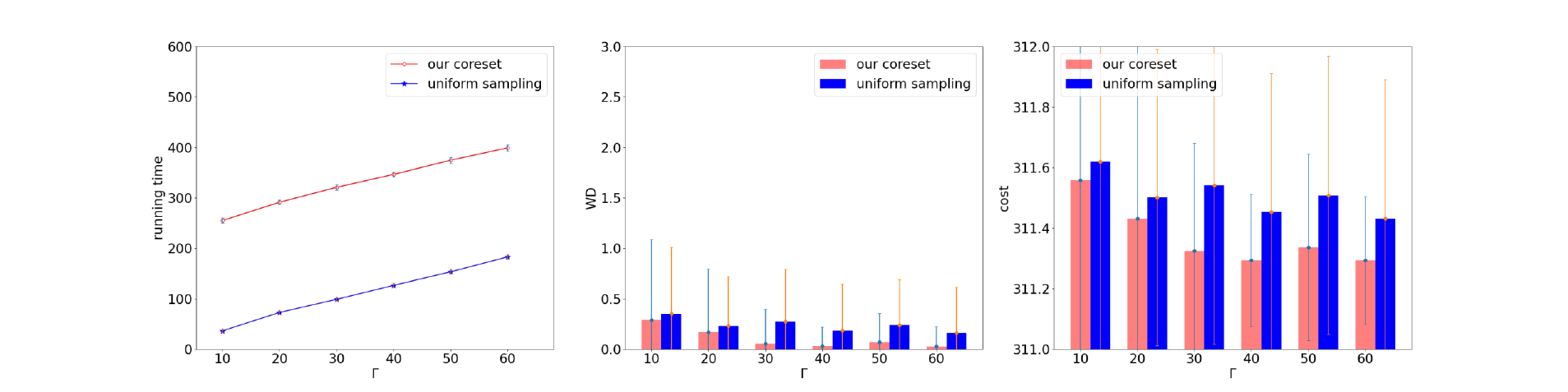}
    \caption{Comparisons of our coreset and uniform sampling on MNIST. Shift locations of 100 images according to distribution $\mathcal{N}(0,60)$.}
\label{Fig:sampling_1_60_100}
\end{figure}
   
\begin{figure}[htbp!]
    \centering
    \includegraphics[width=0.9\linewidth]{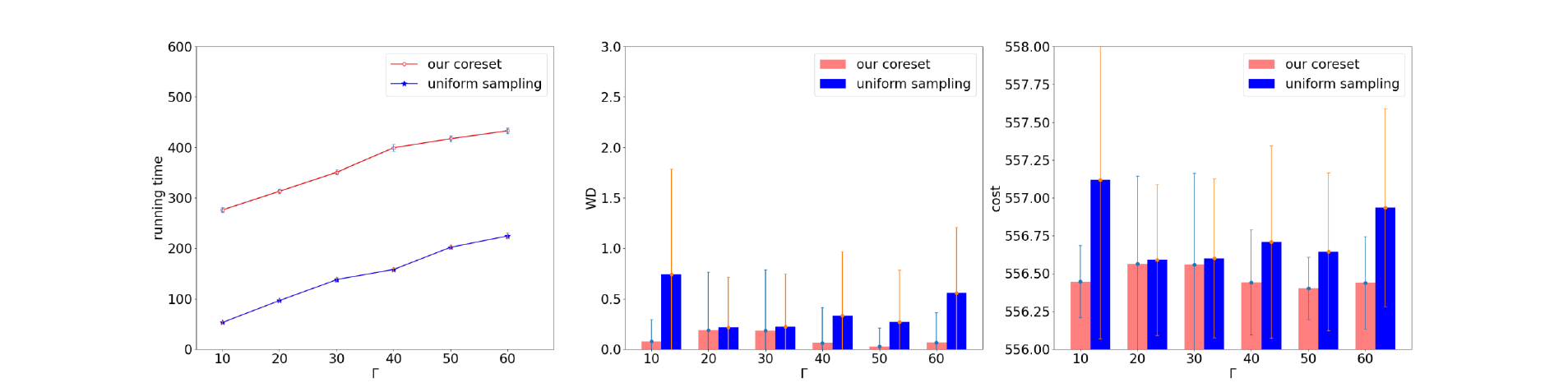}
    \caption{Comparisons of our coreset and uniform sampling on MNIST. Shift locations of 200 images according to distribution $\mathcal{N}(0,60)$.}
\label{Fig:sampling_1_60_200}
\end{figure}

\begin{figure}[htbp!]
    \centering
   \includegraphics[width=0.9\linewidth]{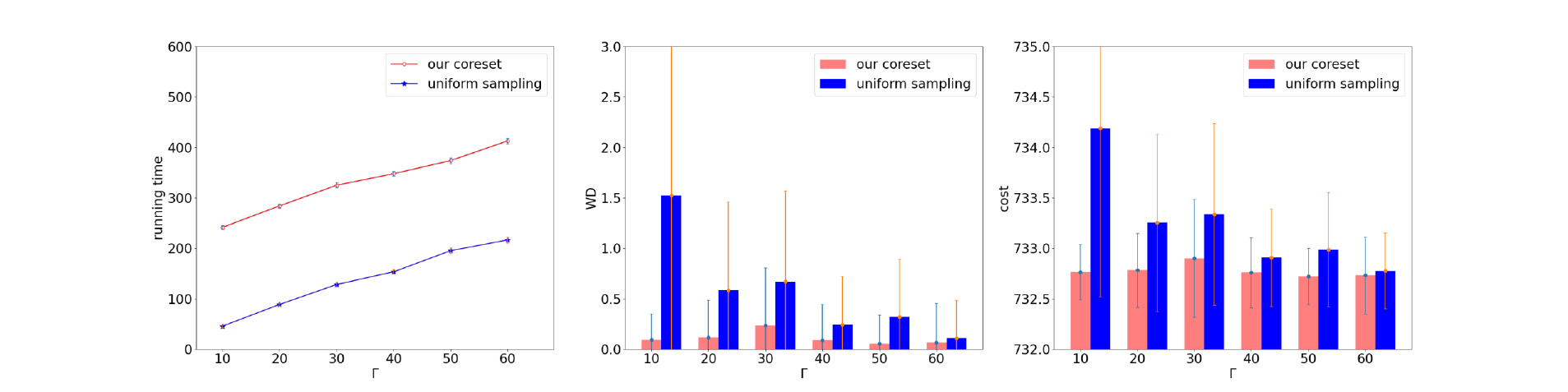}
    \caption{Comparisons of our coreset and uniform sampling on MNIST. Shift locations of 300 images according to distribution $\mathcal{N}(0,60)$.}
\label{Fig:sampling_1_60_300}
\end{figure}

\begin{figure}[htbp!]
    \centering
    \includegraphics[width=0.9\linewidth]{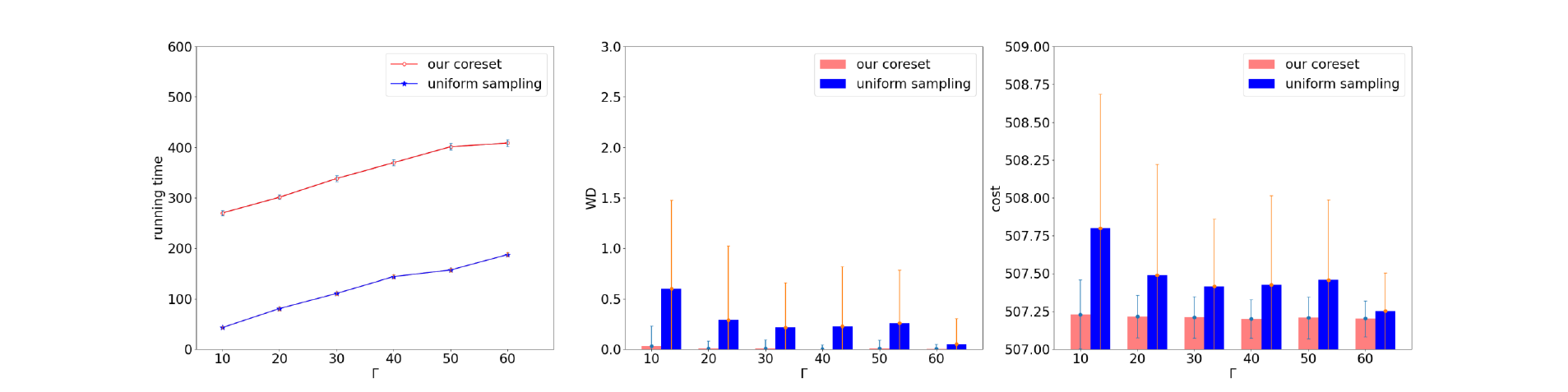}
    \caption{Comparisons of our coreset and uniform sampling on MNIST. Shift locations of 100 images according to distribution $\mathcal{N}(0,80)$.}
\label{Fig:sampling_1_80_100}
\end{figure}

\begin{figure}[htbp!]
    \centering
    \includegraphics[width=0.9\linewidth]{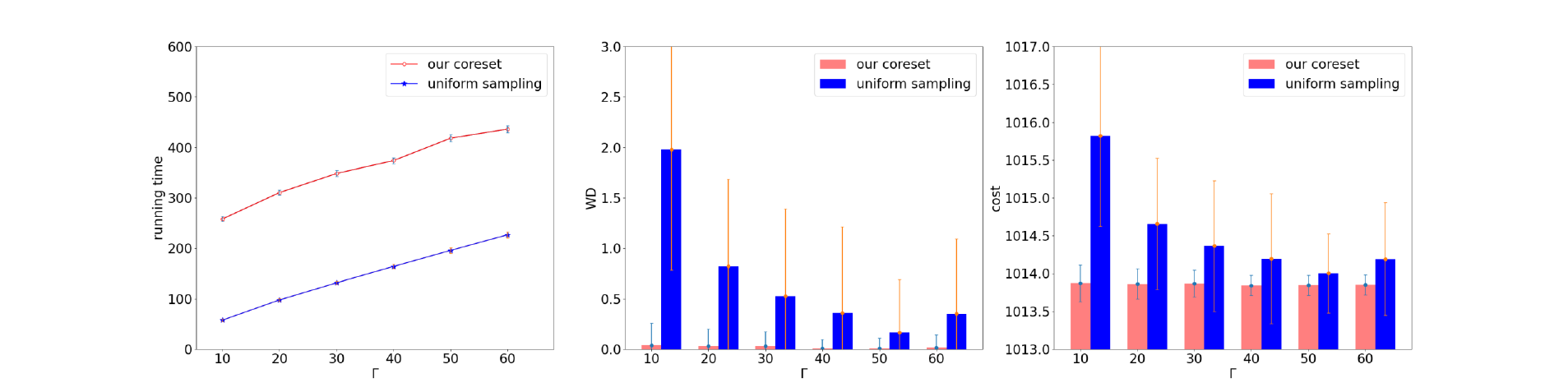}
    \caption{Comparisons of our coreset and uniform sampling on MNIST. Shift locations of 200 images according to distribution $\mathcal{N}(0,80)$.}
\label{Fig:sampling_1_80_200}
\end{figure}

\begin{figure}[htbp!]
    \centering
    \includegraphics[width=0.9\linewidth]{imgs/Layered_sampling_1_80_300.pdf}
    \caption{Comparisons of our coreset and uniform sampling on MNIST. Shift locations of 300 images according to distribution $\mathcal{N}(0,80)$.}
\label{Fig:sampling_1_80_300}
\end{figure}

\subsection{Experiments on HCP Dataset}

This section shows the experimental result on HCP dataset.
First, to show the efficiency of our \wxfixRWB/\wxfreeRWB, we add $\zeta$ mass of Gaussian noise to each measure $\mu^l\in\wxcalQ$ to obtain a noisy dataset $\wxcalQ'$. Then, we compute barycenter by the original fixed-support/free-support WB algorithm and our \wxfixRWB/\wxfreeRWB \ on the noisy dataset respectively. 
The results in \Cref{Tab:HCP_fixed}/\Cref{Tab:HCP_free} show that our \wxfixRWB/\wxfreeRWB \ can tackle outliers effectively under different noise intensity.

\begin{table}[htbp]
\caption{Comparisons of our \wxfixRWB \ and the original fixed-support WB algorithm under different noise intensity on HCP. We use $\zeta$ to denote the total mass of outliers, and the noise distribution (N.D.) is Gaussian distribution $\mathcal{N}(\cdot,\cdot)$.}
\label{Tab:HCP_fixed}
\begin{tabular}{llllllll}
 \toprule
 \multirow{2}*{$\zeta$} & \multirow{2}*{N.D.} & \multicolumn{3}{c}{Our \wxfixRWB} & \multicolumn{2}{c}{WB} & \\
 \cmidrule(lr){3-5}\cmidrule(lr){6-8}
 & & runtime ($\downarrow$) & \textsf{WD} ($\downarrow$) & \textsf{cost} ($\downarrow$) & runtime ($\downarrow$) & \textsf{WD} ($\downarrow$) & \textsf{cost} ($\downarrow$) \\
 \midrule 
 \multirow{3}*{$0.1$} & $\mathcal{N}(50,50^2)$ &  $101.50 _{\pm 35.27}$ & $ \textbf{4.98  } _{\pm 0.12}$ & $ 97.39  _{\pm 0.05} $ & $ 92.71 _{\pm 25.00}$ & $ 5.20    _{\pm 0.11 }$ & $ \textbf{97.34 }  _{0.05 }$ \\                    
                      & $\mathcal{N}(100,100^2)$ &  $84.28  _{\pm 29.66}$ & $ \textbf{89.45 } _{\pm 1.39}$ & $ \textbf{179.87} _{\pm 1.37} $ & $ 105.27_{\pm 25.10}$ & $ 321.63  _{\pm 1.72 }$ & $ 404.51  _{1.67 }$ \\                    
                      & $\mathcal{N}(150,150^2)$ &  $83.36  _{\pm 6.68 }$ & $ \textbf{82.40 } _{\pm 3.90}$ & $ \textbf{176.28} _{\pm 3.94} $ & $ 89.49 _{\pm 24.22}$ & $ 2039.19 _{\pm 7.51 }$ & $ 2134.79 _{7.46 }$ \\   \hline                 
 \multirow{3}*{$0.2$} & $\mathcal{N}(50,50^2)$ &  $61.77  _{\pm 16.90}$ & $ \textbf{10.15 } _{\pm 0.23}$ & $ 96.36  _{\pm 0.05} $ & $ 65.90 _{\pm 16.08}$ & $ 10.78   _{\pm 0.24 }$ & $ \textbf{96.24}   _{0.05 }$ \\                    
                      & $\mathcal{N}(100,100^2)$ &  $77.48  _{\pm 17.16}$ & $ \textbf{169.68} _{\pm 1.28}$ & $ \textbf{256.04} _{\pm 1.29} $ & $ 72.97 _{\pm 18.30}$ & $ 718.88  _{\pm 3.54 }$ & $ 788.35  _{3.40 }$ \\                    
                      & $\mathcal{N}(150,150^2)$ &  $52.19  _{\pm 3.50 }$ & $ \textbf{231.62} _{\pm 5.89}$ & $ \textbf{327.27} _{\pm 6.05} $ & $ 49.86 _{\pm 1.99 }$ & $ 4505.75 _{\pm 18.61}$ & $ 4584.84 _{18.32}$ \\   \hline                  
 \multirow{3}*{$0.3$} & $\mathcal{N}(50,50^2)$ &  $95.91  _{\pm 33.15}$ & $ \textbf{15.40 } _{\pm 0.09}$ & $ 95.66  _{\pm 0.04} $ & $ 107.73_{\pm 22.91}$ & $ 16.60   _{\pm 0.10 }$ & $ \textbf{95.55}   _{0.05 }$ \\                    
                      & $\mathcal{N}(100,100^2)$ &  $93.63  _{\pm 27.43}$ & $ \textbf{184.54} _{\pm 2.30}$ & $ \textbf{268.10} _{\pm 2.34} $ & $ 99.77 _{\pm 25.78}$ & $ 1180.03 _{\pm 6.22 }$ & $ 1227.38 _{6.04 }$ \\                    
                      & $\mathcal{N}(150,150^2)$ &  $49.84  _{\pm 3.15 }$ & $ \textbf{422.77} _{\pm 6.48}$ & $ \textbf{521.64} _{\pm 6.48} $ & $ 50.87 _{\pm 5.31 }$ & $ 7182.62 _{\pm 16.07}$ & $ 7211.00 _{15.77}$ \\                     
 \bottomrule
 \end{tabular}
\end{table}

\begin{table}[htbp]
\caption{Comparisons of our \wxfreeRWB \ and the original free-support WB algorithm under different noise intensity on HCP. We use $\zeta$ to denote the total mass of outliers, and the noise distribution (N.D.) is Gaussian distribution $\mathcal{N}(\cdot,\cdot)$.}
\label{Tab:HCP_free}
\begin{tabular}{llllllll}
 \toprule
 \multirow{2}*{$\zeta$} & \multirow{2}*{N.D.} & \multicolumn{3}{c}{Our \wxfreeRWB} & \multicolumn{2}{c}{WB} & \\
 \cmidrule(lr){3-5}\cmidrule(lr){6-8}
 & & runtime ($\downarrow$) & \textsf{WD} ($\downarrow$) & \textsf{cost} ($\downarrow$) & runtime ($\downarrow$) & \textsf{WD} ($\downarrow$) & \textsf{cost} ($\downarrow$) \\
 \midrule
 \multirow{3}*{$0.1$} & $\mathcal{N}(50,50^2)$ & $364.40 _{\pm 55.35}$ & $ \textbf{2.79 } _{\pm 0.10}$ & $ 924.84 _{\pm 0.13}$ & $ 391.94 _{\pm 46.67}$ & $ 3.19     _{\pm 0.10  }$ & $ \textbf{924.63}   _{\pm 0.17 }$  \\
                      & $\mathcal{N}(100,100^2)$ & $362.76 _{\pm 41.85}$ & $ \textbf{2.53 } _{\pm 0.08}$ & $ \textbf{925.81} _{\pm 0.16}$ & $ 393.62 _{\pm 51.49}$ & $ 1428.06  _{\pm 53.70 }$ & $ 1758.54  _{\pm 36.74}$  \\ 
                      & $\mathcal{N}(150,150^2)$ & $377.66 _{\pm 68.12}$ & $ \textbf{0.09 } _{\pm 0.01}$ & $ \textbf{927.67} _{\pm 0.03}$ & $ 390.97 _{\pm 72.66}$ & $ 4912.27  _{\pm 52.62 }$ & $ 4581.39  _{\pm 48.59}$  \\  \hline
 \multirow{3}*{$0.2$} & $\mathcal{N}(50,50^2)$ & $385.24 _{\pm 74.60}$ & $ \textbf{11.78} _{\pm 0.56}$ & $ 910.16 _{\pm 0.74}$ & $ 402.37 _{\pm 74.39}$ & $ 14.10    _{\pm 0.62  }$ & $ \textbf{906.11}   _{\pm 0.72 }$  \\
                      & $\mathcal{N}(100,100^2)$ & $392.26 _{\pm 73.57}$ & $ \textbf{9.96 } _{\pm 0.32}$ & $ \textbf{918.17} _{\pm 1.02}$ & $ 385.29 _{\pm 50.78}$ & $ 3962.80  _{\pm 179.01}$ & $ 3409.19  _{\pm 147.24}$ \\ 
                      & $\mathcal{N}(150,150^2)$ & $347.45 _{\pm 27.78}$ & $ \textbf{0.30 } _{\pm 0.02}$ & $ \textbf{927.10} _{\pm 0.06}$ & $ 400.30 _{\pm 25.87}$ & $ 12873.93 _{\pm 88.97 }$ & $ 11040.97 _{\pm 78.38}$  \\  \hline
 \multirow{3}*{$0.3$} & $\mathcal{N}(50,50^2)$ & $342.88 _{\pm 24.33}$ & $ \textbf{30.53} _{\pm 0.50}$ & $ 867.25 _{\pm 1.57}$ & $ 377.99 _{\pm 45.68}$ & $ 38.19    _{\pm 0.77  }$ & $ \textbf{847.05}   _{\pm 2.33 }$  \\
                      & $\mathcal{N}(100,100^2)$ & $347.99 _{\pm 43.53}$ & $ \textbf{30.51} _{\pm 7.15}$ & $ \textbf{889.46} _{\pm 4.19}$ & $ 400.05 _{\pm 40.38}$ & $ 7414.53  _{\pm 51.70 }$ & $ 5896.24  _{\pm 53.92}$  \\ 
                      & $\mathcal{N}(150,150^2)$ & $43.53  _{\pm 56.00}$ & $ \textbf{7.15 } _{\pm 0.03}$ & $ \textbf{4.19  } _{\pm 0.09}$ & $ 40.38  _{\pm 26.55}$ & $ 51.70    _{\pm 87.81 }$ & $ 53.92    _{\pm 76.20}$  \\ 
 \bottomrule
 \end{tabular}
\end{table}

Then, we show the efficiency of our coreset technique in \Cref{Fig:HCP_sampling_1_50_100,Fig:HCP_sampling_1_50_200,Fig:HCP_sampling_1_50_300,Fig:HCP_sampling_1_100_100,Fig:HCP_sampling_1_100_200,Fig:HCP_sampling_1_100_300,Fig:HCP_sampling_1_150_100,Fig:HCP_sampling_1_150_200,Fig:HCP_sampling_1_150_300}. In this scenario, to obtain a noisy dataset $\wxcalQ'$, we first add $0.1$ mass of Gaussian noise from $\mathcal{N}(40,40)$ to each measure $\mu^l\in\wxcalQ$, and then shift the locations of several images randomly according to a distribution $\mathcal{N}(0,\cdot)$.  
Throughout our experiments, we ensure that the total sample size of the uniform sampling method equals to the coreset size of our method.
$\Gamma$ is the sample size for each layer in our method.
Our coreset method is much more time consuming. However, it performs well on the criteria \textsf{WD} and \textsf{cost}.
Moreover, our method is more stable.
(The results of our coreset method perform well with high probability, so it is reasonable that our method performs worse than uniform sampling with small probability.)

\begin{figure}[htbp!]
    \centering
    \includegraphics[width=0.9\linewidth]{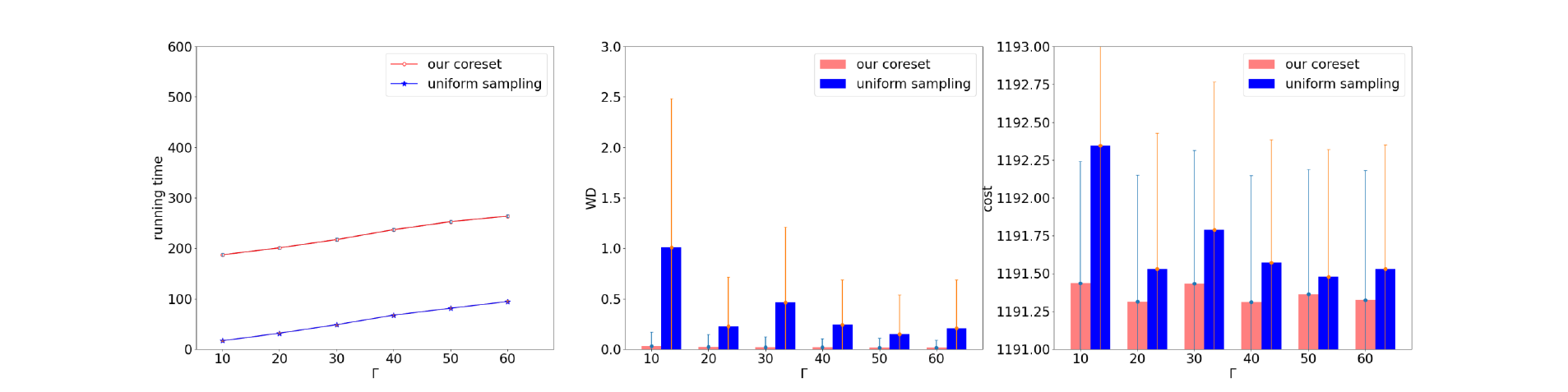}
    \caption{Comparisons of our coreset and uniform sampling on HCP. Shift locations of 100 brains according to distribution $\mathcal{N}(0,50)$.}
\label{Fig:HCP_sampling_1_50_100}
\end{figure}

\begin{figure}[htbp!]
    \centering
    \includegraphics[width=0.9\linewidth]{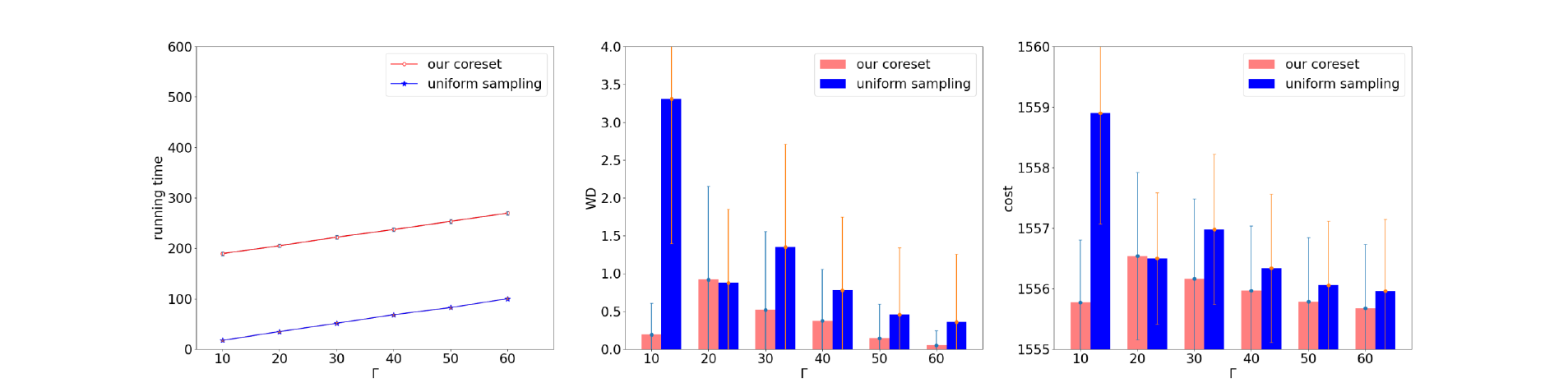}
    \caption{Comparisons of our coreset and uniform sampling on HCP. Shift locations  of 200 brains according to distribution $\mathcal{N}(0,50)$.}
\label{Fig:HCP_sampling_1_50_200}
\end{figure}

\begin{figure}[htbp!]
    \centering
    \includegraphics[width=0.9\linewidth]{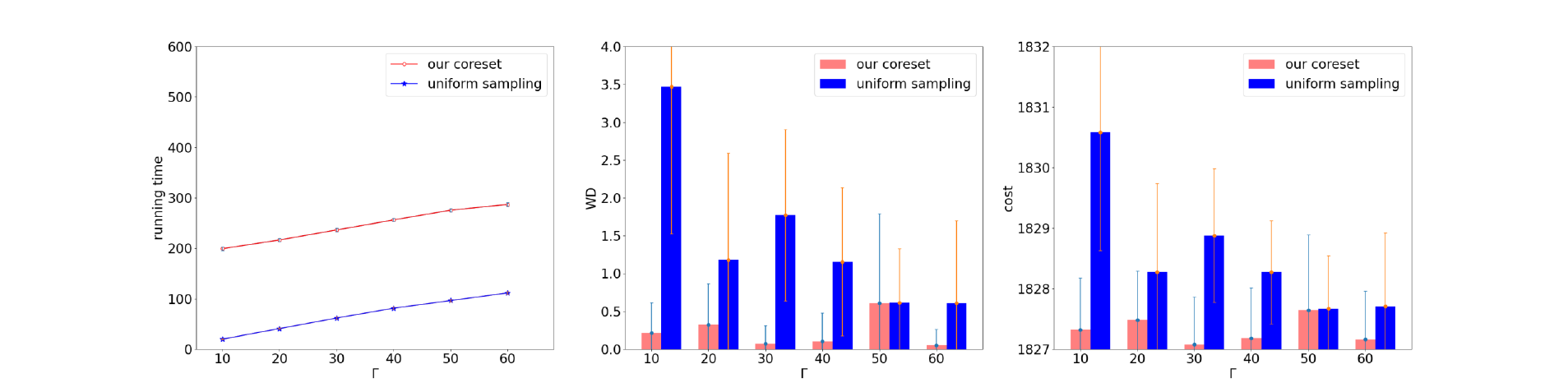}
    \caption{Comparisons of our coreset and uniform sampling on HCP. Shift locations  of 300 brains according to distribution $\mathcal{N}(0,50)$.}
\label{Fig:HCP_sampling_1_50_300}
\end{figure}

\begin{figure}[htbp!]
    \centering
    \includegraphics[width=0.9\linewidth]{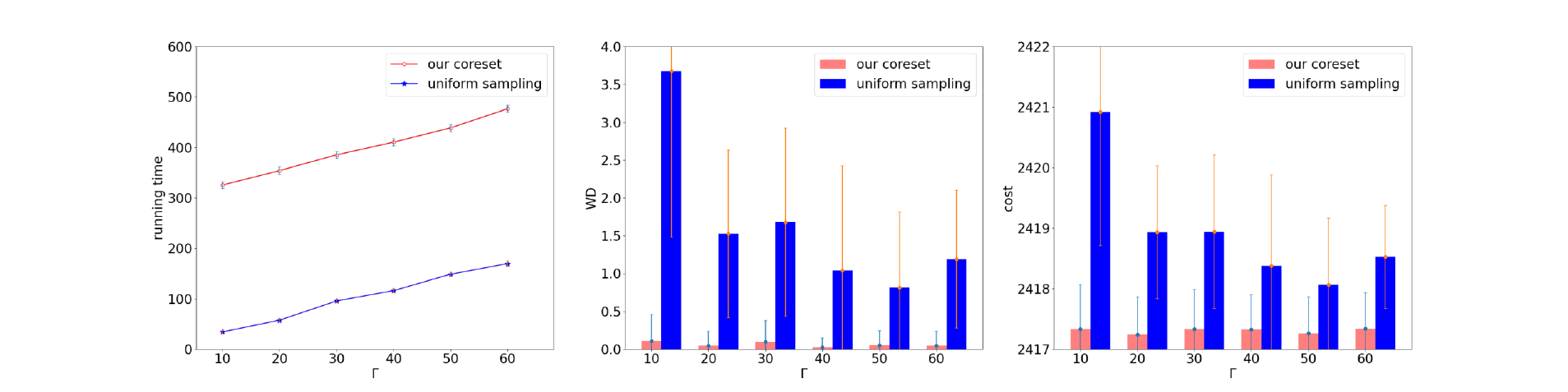}
    \caption{Comparisons of our coreset and uniform sampling on HCP. Shift locations of 100 brains according to distribution $\mathcal{N}(0,100)$.}
\label{Fig:HCP_sampling_1_100_100}
\end{figure}

\begin{figure}[htbp!]
    \centering
    \includegraphics[width=0.9\linewidth]{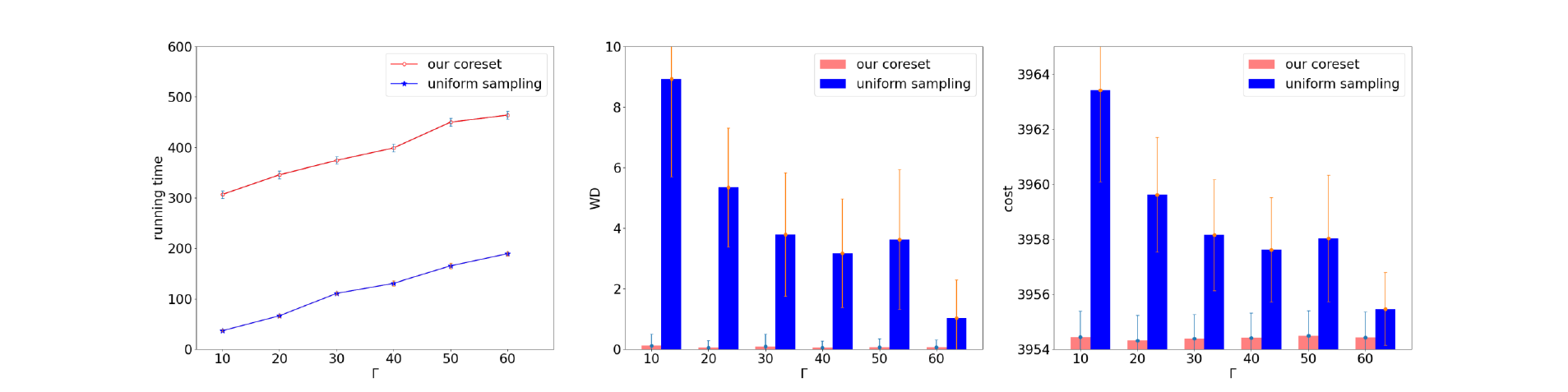}
    \caption{Comparisons of our coreset and uniform sampling on HCP. Shift locations  of 200 brains according to distribution $\mathcal{N}(0,100)$.}
\label{Fig:HCP_sampling_1_100_200}
\end{figure}

\begin{figure}[htbp!]
    \centering
    \includegraphics[width=0.9\linewidth]{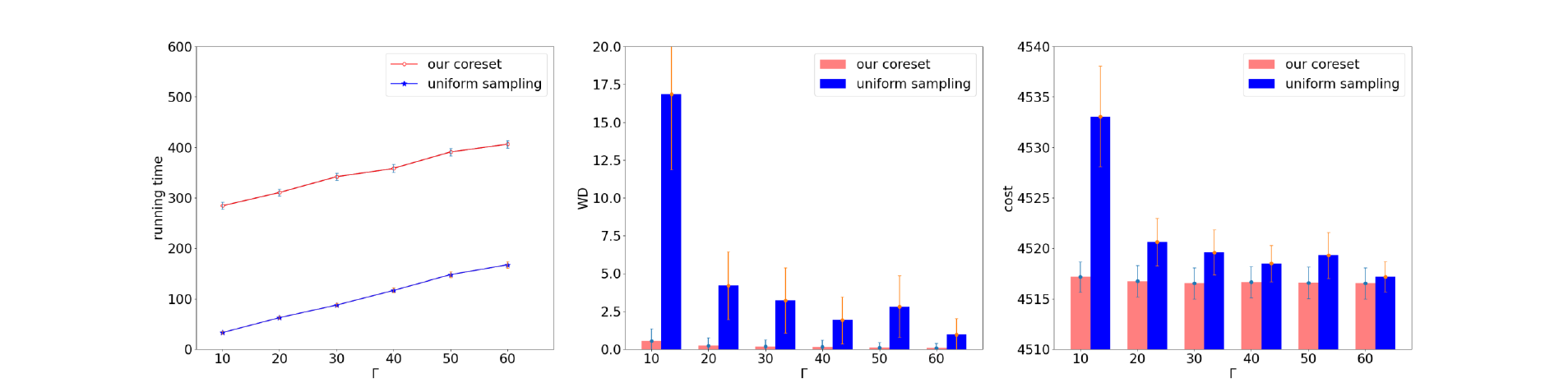}
    \caption{Comparisons of our coreset and uniform sampling on HCP. Shift locations  of 300 brains according to distribution $\mathcal{N}(0,100)$.}
\label{Fig:HCP_sampling_1_100_300}
\end{figure}

\begin{figure}[htbp!]
    \centering
    \includegraphics[width=0.9\linewidth]{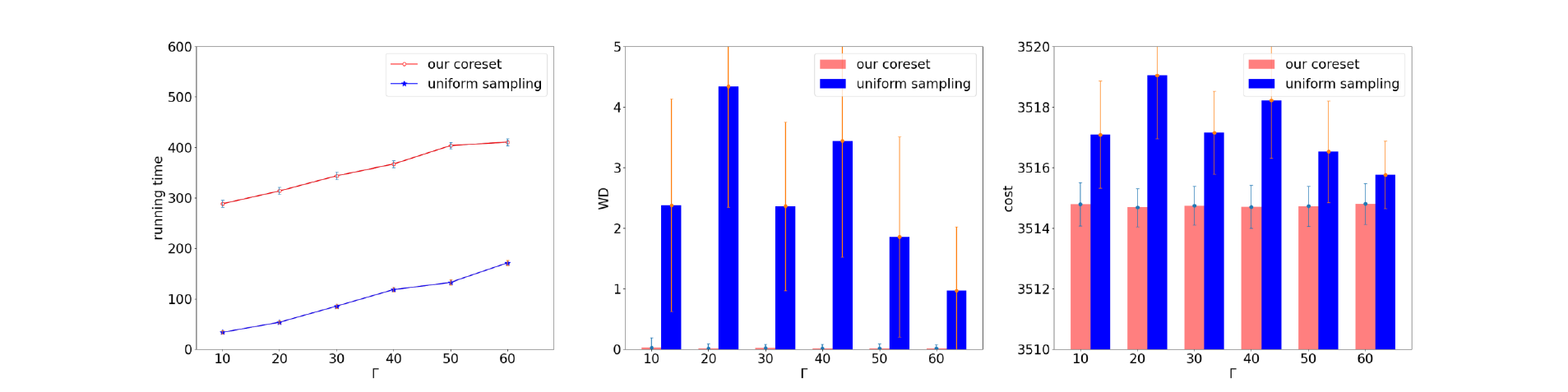}
    \caption{Comparisons of our coreset and uniform sampling on HCP. Shift locations of 100 brains according to distribution $\mathcal{N}(0,150)$.}
\label{Fig:HCP_sampling_1_150_100}
\end{figure}

\begin{figure}[htbp!]
    \centering
    \includegraphics[width=0.9\linewidth]{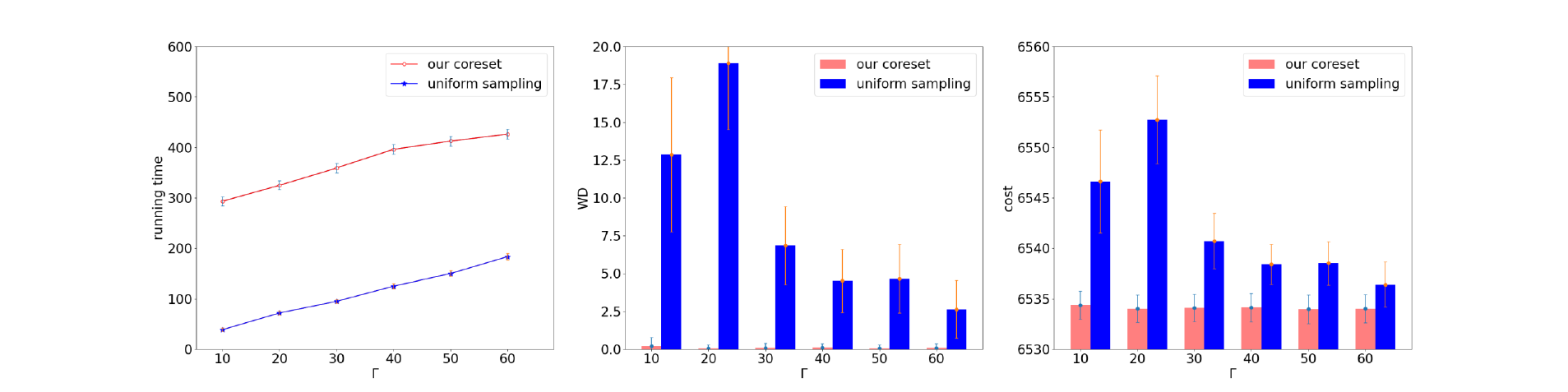}
    \caption{Comparisons of our coreset and uniform sampling on HCP. Shift locations of 200 brains according to distribution $\mathcal{N}(0,150)$.}
\label{Fig:HCP_sampling_1_150_200}
\end{figure}

\begin{figure}[htbp!]
    \centering
    \includegraphics[width=0.9\linewidth]{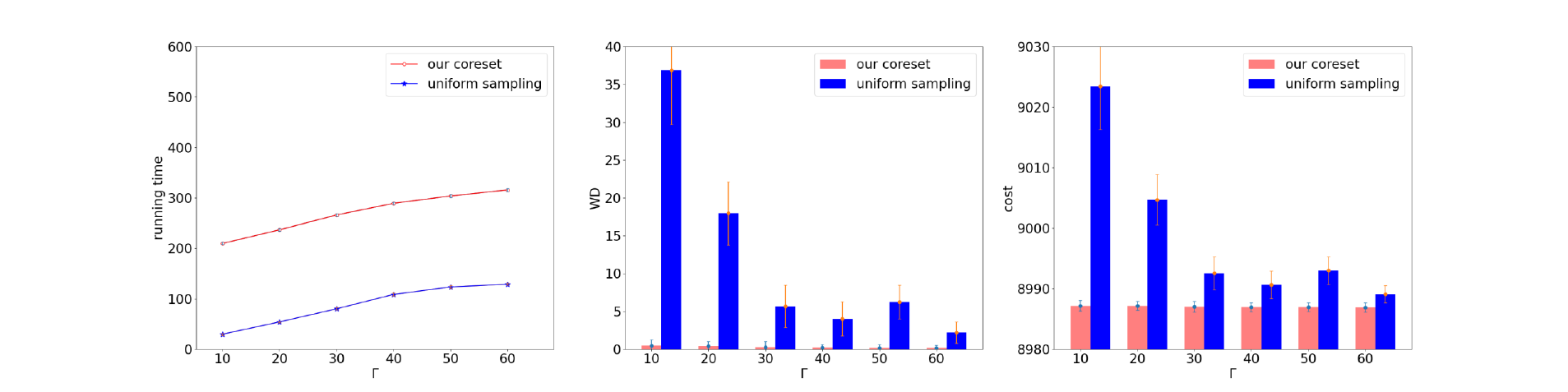}
    \caption{Comparisons of our coreset and uniform sampling on HCP. Shift locations of 300 brains according to distribution $\mathcal{N}(0,150)$.}
\label{Fig:HCP_sampling_1_150_300}
\end{figure}

\subsection{Experiments on ModelNet40 Dataset}

This section shows the experimental result on ModelNet40 dataset.
First, to show the efficiency of our \wxfixRWB/\wxfreeRWB, we add $\zeta$ mass of Gaussian noise to each measure $\mu^l\in\wxcalQ$ to obtain a noisy dataset $\wxcalQ'$. Then, we compute barycenter by the original fixed-support/free-support WB algorithm and our \wxfixRWB/\wxfreeRWB \ on the noisy dataset respectively. 
The results in \Cref{Tab:MN_fixed}/\Cref{Tab:MN_free} show that our \wxfixRWB/\wxfreeRWB \ can tackle outliers effectively under different noise intensity.
In our experiments, we can only obtain a local approximate solution, the solution is somewhat random. Thus, it is reasonable that our method obtain worse result when the noise intensity is small.

\begin{table}[htbp]
\caption{Comparisons of our \wxfixRWB \ and the original fixed-support WB algorithm under different noise intensity on ModelNet40. We use $\zeta$ to denote the total mass of outliers, and the noise distribution (N.D.) is Gaussian distribution $\mathcal{N}(\cdot,\cdot)$.}
\label{Tab:MN_fixed}
\begin{tabular}{llllllll}
 \toprule
 \multirow{2}*{$\zeta$} & \multirow{2}*{N.D.} & \multicolumn{3}{c}{Our \wxfixRWB} & \multicolumn{2}{c}{WB} & \\
 \cmidrule(lr){3-5}\cmidrule(lr){6-8}
 & & runtime ($\downarrow$) & \textsf{WD} ($\downarrow$) & \textsf{cost} ($\downarrow$) & runtime ($\downarrow$) & \textsf{WD} ($\downarrow$) & \textsf{cost} ($\downarrow$) \\
 \midrule
 \multirow{3}*{$0.1$} & $\mathcal{N}(50,50^2)$ &  $32.63 _{\pm 11.52}$ & $ \textbf{73.57  } _{\pm 0.58  }$ & $ \textbf{187.15 } _{\pm 0.48  }$ & $ 34.98 _{\pm 9.08 }$ & $ 79.87    _{\pm 0.73 }$ &    $192.85   _{\pm 0.56   }$ \\                  
                      & $\mathcal{N}(100,100^2)$ &  $10.72 _{\pm 43.08}$ & $ \textbf{7.87   } _{\pm 572.59}$ & $ \textbf{8.08   } _{\pm 697.28}$ & $ 36.17 _{\pm 37.89}$ & $ 1172.51  _{\pm 3438.73}$ &  $1278.68  _{\pm 3560.87}$ \\                    
                      & $\mathcal{N}(150,150^2)$ &  $31.16 _{\pm 5.56 }$ & $ \textbf{172.03 } _{\pm 1.14  }$ & $ \textbf{259.79 } _{\pm 1.09  }$ & $ 32.70 _{\pm 7.09 }$ & $ 194.93   _{\pm 1.53 }$ &    $280.03   _{\pm 1.46   }$ \\   \hline               
 \multirow{3}*{$0.2$} & $\mathcal{N}(50,50^2)$ &  $32.37 _{\pm 6.97 }$ & $ \textbf{1141.13} _{\pm 11.48 }$ & $ \textbf{1242.84} _{\pm 11.41 }$ & $ 33.31 _{\pm 6.10 }$ & $ 2677.97  _{\pm 9.06 }$ &    $2760.90  _{\pm 8.88   }$ \\                  
                      & $\mathcal{N}(100,100^2)$ &  $38.34 _{\pm 9.18 }$ & $ \textbf{1456.09} _{\pm 24.07 }$ & $ \textbf{1583.98} _{\pm 24.01 }$ & $ 37.31 _{\pm 11.37}$ & $ 7526.14  _{\pm 50.15}$ &    $7638.26  _{\pm 50.09  }$ \\                   
                      & $\mathcal{N}(150,150^2)$ &  $32.30 _{\pm 8.32 }$ & $ \textbf{276.55 } _{\pm 3.00  }$ & $ \textbf{344.85 } _{\pm 2.82  }$ & $ 32.48 _{\pm 8.99 }$ & $ 324.69   _{\pm 3.40 }$ &    $388.14   _{\pm 3.08   }$ \\   \hline               
 \multirow{3}*{$0.3$} & $\mathcal{N}(50,50^2)$ &  $40.42 _{\pm 8.82 }$ & $ \textbf{1368.73} _{\pm 16.04 }$ & $ \textbf{1465.01} _{\pm 16.19 }$ & $ 43.17 _{\pm 10.26}$ & $ 4314.01  _{\pm 25.69}$ &    $4335.73  _{\pm 24.59  }$ \\                   
                      & $\mathcal{N}(100,100^2)$ &  $23.20 _{\pm 1.56 }$ & $ \textbf{2283.81} _{\pm 41.22 }$ & $ \textbf{2402.69} _{\pm 41.07 }$ & $ 21.27 _{\pm 1.38 }$ & $ 12023.66 _{\pm 31.14}$ &    $12025.97 _{\pm 30.12  }$ \\                   
                      & $\mathcal{N}(150,150^2)$ &  $38.18 _{\pm 11.32}$ & $ \textbf{2279.04} _{\pm 27.75 }$ & $ \textbf{2398.06} _{\pm 27.65 }$ & $ 35.47 _{\pm 6.61 }$ & $ 12032.18 _{\pm 40.15}$ &    $12034.24 _{\pm 38.74  }$ \\                   
 \bottomrule
 \end{tabular}
\end{table}

\begin{table}[htbp]
\caption{Comparisons of our \wxfreeRWB \ and the original free-support WB algorithm under different noise intensity on ModelNet40. We use $\zeta$ to denote the total mass of outliers, and the noise distribution (N.D.) is Gaussian distribution $\mathcal{N}(\cdot,\cdot)$.}
\label{Tab:MN_free}
\begin{tabular}{llllllll}
 \toprule
 \multirow{2}*{$\zeta$} & \multirow{2}*{N.D.} & \multicolumn{3}{c}{Our \wxfreeRWB} & \multicolumn{2}{c}{WB} & \\
 \cmidrule(lr){3-5}\cmidrule(lr){6-8}
 & & runtime ($\downarrow$) & \textsf{WD} ($\downarrow$) & \textsf{cost} ($\downarrow$) & runtime ($\downarrow$) & \textsf{WD} ($\downarrow$) & \textsf{cost} ($\downarrow$) \\
 \midrule
 \multirow{3}*{$0.1$} & $\mathcal{N}(50,50^2)$ & $165.92 _{\pm 25.89}$ & $\textbf{161.05}  _{\pm 85.74 }$ & $ \textbf{700.58}  _{\pm 37.09 }$ & $ 160.59 _{\pm 14.37}$ & $ 305.74   _{\pm 28.49 }$ & $ 759.37   _{\pm 22.66 }$ \\
                      & $\mathcal{N}(100,100^2)$ & $147.06 _{\pm 18.13}$ & $\textbf{5.17  }  _{\pm 0.31  }$ & $ \textbf{694.28}  _{\pm 1.88  }$ & $ 154.87 _{\pm 14.97}$ & $ 2727.47  _{\pm 63.60 }$ & $ 2524.60  _{\pm 54.60 }$ \\
                      & $\mathcal{N}(150,150^2)$ & $162.39 _{\pm 12.54}$ & $\textbf{0.34  }  _{\pm 0.05  }$ & $ \textbf{697.43}  _{\pm 0.13  }$ & $ 159.90 _{\pm 14.85}$ & $ 6485.18  _{\pm 98.90 }$ & $ 5754.88  _{\pm 81.64 }$ \\ \hline
 \multirow{3}*{$0.2$} & $\mathcal{N}(50,50^2)$ & $147.41 _{\pm 25.81}$ & $1047.55 _{\pm 42.76 }$ & $ 1143.70 _{\pm 32.79 }$ & $ 148.08 _{\pm 23.28}$ & $ \textbf{1036.94}  _{\pm 27.18 }$ & $ \textbf{1132.61}  _{\pm 22.44 }$ \\
                      & $\mathcal{N}(100,100^2)$ & $133.21 _{\pm 10.56}$ & $\textbf{472.29}  _{\pm 507.73}$ & $ \textbf{918.99}  _{\pm 306.73}$ & $ 147.36 _{\pm 10.85}$ & $ 6725.00  _{\pm 75.05 }$ & $ 5554.06  _{\pm 60.25 }$ \\
                      & $\mathcal{N}(150,150^2)$ & $156.13 _{\pm 21.13}$ & $\textbf{2.06  }  _{\pm 0.18  }$ & $ \textbf{697.74}  _{\pm 0.34  }$ & $ 178.76 _{\pm 27.98}$ & $ 16143.00 _{\pm 226.85}$ & $ 13870.42 _{\pm 199.02}$ \\ \hline
 \multirow{3}*{$0.3$} & $\mathcal{N}(50,50^2)$ & $162.40 _{\pm 13.90}$ & $1899.46 _{\pm 46.76 }$ & $ 1651.95 _{\pm 35.70 }$ & $ 161.80 _{\pm 21.29}$ & $ \textbf{1836.57}  _{\pm 35.69 }$ & $ \textbf{1604.22}  _{\pm 28.64 }$ \\
                      & $\mathcal{N}(100,100^2)$ & $160.68 _{\pm 13.73}$ & $\textbf{2498.13} _{\pm 751.33}$ & $ \textbf{2245.65} _{\pm 543.33}$ & $ 203.94 _{\pm 20.47}$ & $ 11554.34 _{\pm 149.35}$ & $ 9428.57  _{\pm 131.31}$ \\ 
                      & $\mathcal{N}(150,150^2)$ & $157.10 _{\pm 24.86}$ & $\textbf{5.05   } _{\pm 0.30  }$ & $ \textbf{698.02 } _{\pm 0.61  }$ & $ 211.04 _{\pm 30.49}$ & $ 26774.86 _{\pm 243.22}$ & $ 23069.52 _{\pm 222.73}$ \\ 
 \bottomrule
 \end{tabular}
\end{table}

Then, we show the efficiency of our coreset technique in \Cref{Fig:MN_sampling_1_50_30,Fig:MN_sampling_1_50_60,Fig:MN_sampling_1_50_90,Fig:MN_sampling_1_100_30,Fig:MN_sampling_1_100_60,Fig:MN_sampling_1_100_90,Fig:MN_sampling_150_30,Fig:MN_sampling_1_150_60,Fig:MN_sampling_1_150_90}. In this scenario, to obtain a noisy dataset $\wxcalQ'$, we first add $0.1$ mass of Gaussian noise from $\mathcal{N}(40,40)$ to each measure $\mu^l\in\wxcalQ$, and then shift the locations of several images randomly according to a distribution $\mathcal{N}(0,\cdot)$.  
Throughout our experiments, we ensure that the total sample size of the uniform sampling method equals to the coreset size of our method.
$\Gamma$ is the sample size for each layer in our method.
Our coreset method is much more time consuming. However, it performs well on the criteria \textsf{WD} and \textsf{cost}.
Moreover, our method is more stable.
(The results of our coreset method perform well with high probability, so it is reasonable that our method performs worse than uniform sampling with small probability.)

\begin{figure}[htbp!]
    \centering
    \includegraphics[width=0.9\linewidth]{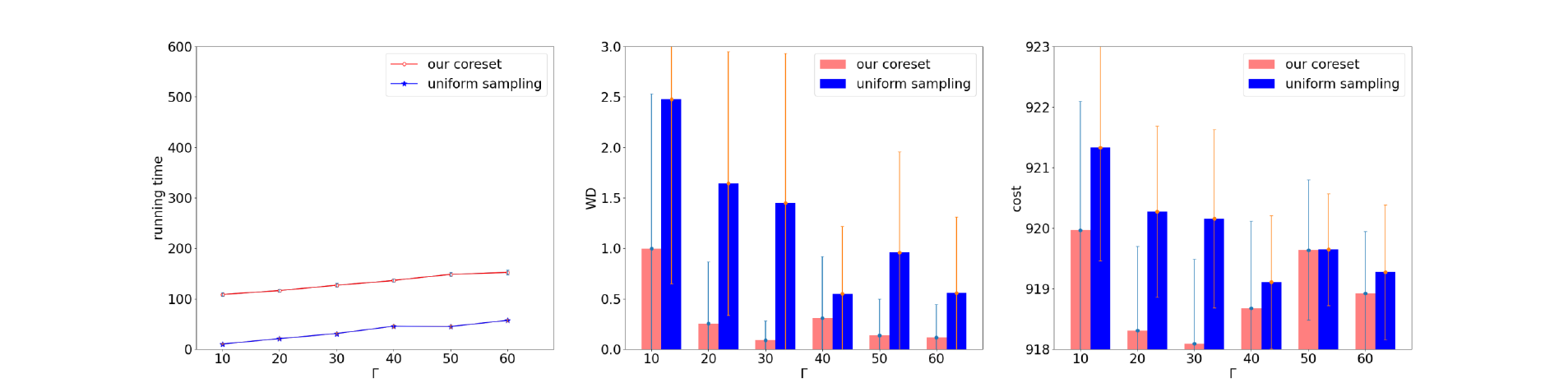}
    \caption{Comparisons of our coreset and uniform sampling on ModelNet40. Shift locations of 30 CAD models according to distribution $\mathcal{N}(0,50)$.}
\label{Fig:MN_sampling_1_50_30}
\end{figure}

\begin{figure}[htbp!]
    \centering
    \includegraphics[width=0.9\linewidth]{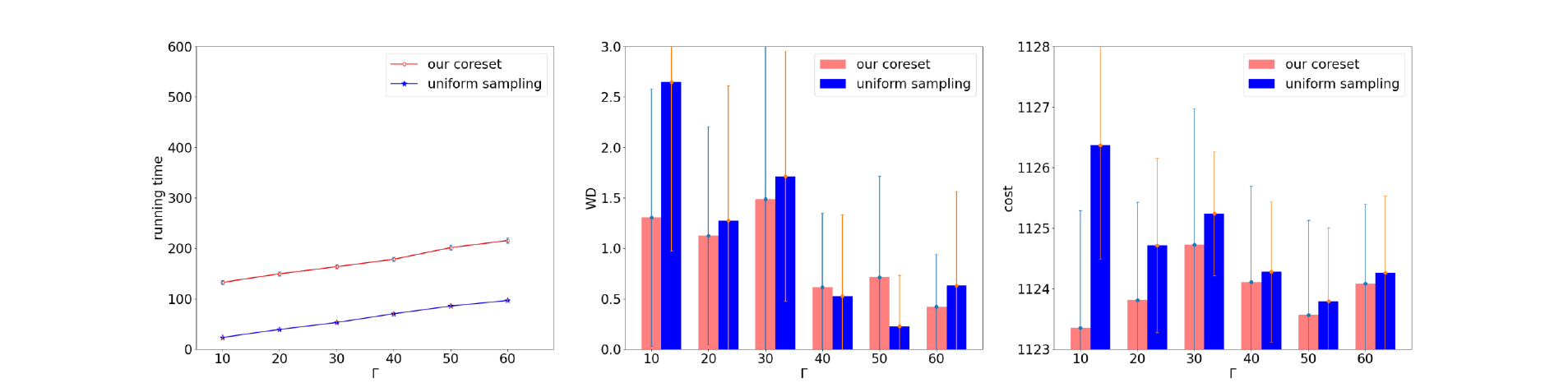}
    \caption{Comparisons of our coreset and uniform sampling on ModelNet40. Shift locations of 60 CAD models according to distribution $\mathcal{N}(0,50)$.}
\label{Fig:MN_sampling_1_50_60}
\end{figure}

\begin{figure}[htbp!]
    \centering
    \includegraphics[width=0.9\linewidth]{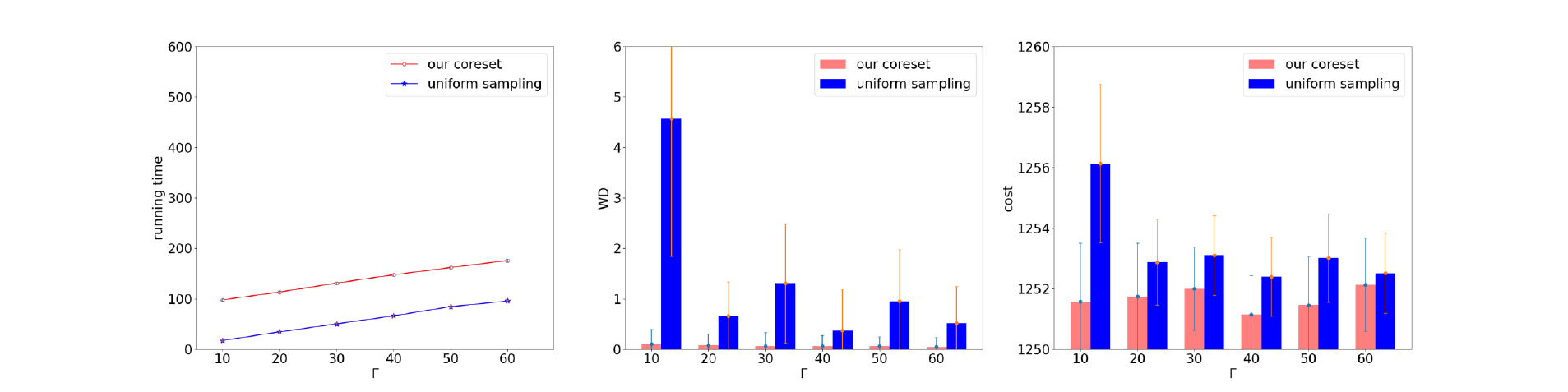}
    \caption{Comparisons of our coreset and uniform sampling on ModelNet40. Shift locations of 90 CAD models according to distribution $\mathcal{N}(0,50)$.}
\label{Fig:MN_sampling_1_50_90}
\end{figure}

\begin{figure}[htbp!]
    \centering
    \includegraphics[width=0.9\linewidth]{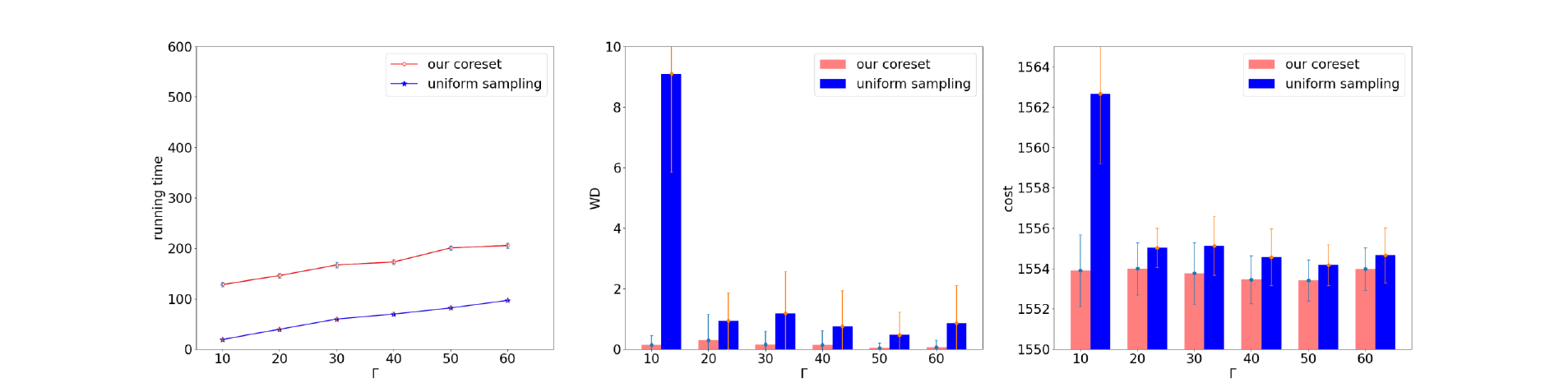}
    \caption{Comparisons of our coreset and uniform sampling on ModelNet40. Shift locations of 30 CAD models according to distribution $\mathcal{N}(0,100)$.}
\label{Fig:MN_sampling_1_100_30}
\end{figure}

\begin{figure}[htbp!]
    \centering
    \includegraphics[width=0.9\linewidth]{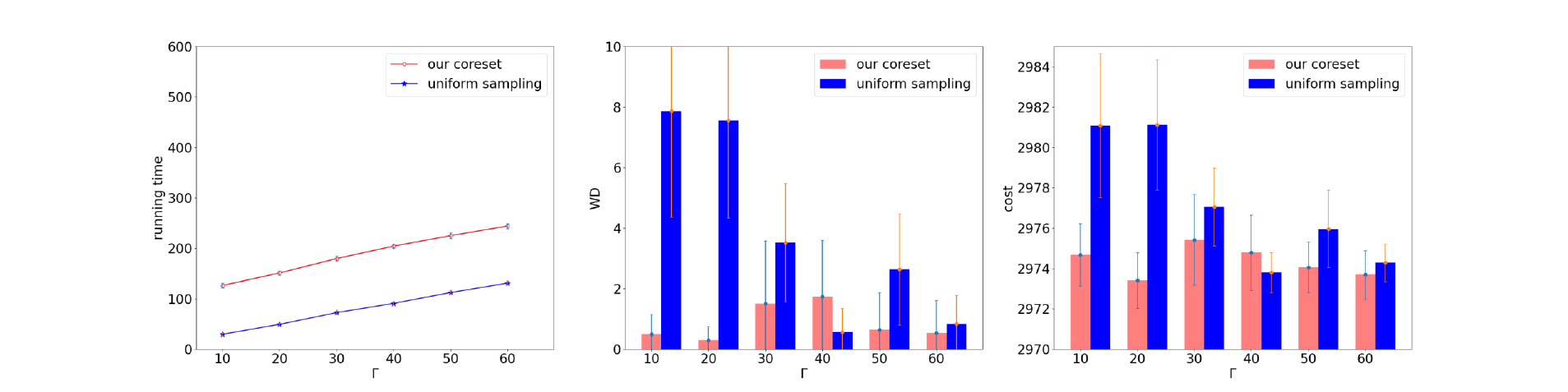}
    \caption{Comparisons of our coreset and uniform sampling on ModelNet40. Shift locations of 60 CAD models according to distribution $\mathcal{N}(0,100)$.}
\label{Fig:MN_sampling_1_100_60}
\end{figure}

\begin{figure}[htbp!]
    \centering
    \includegraphics[width=0.9\linewidth]{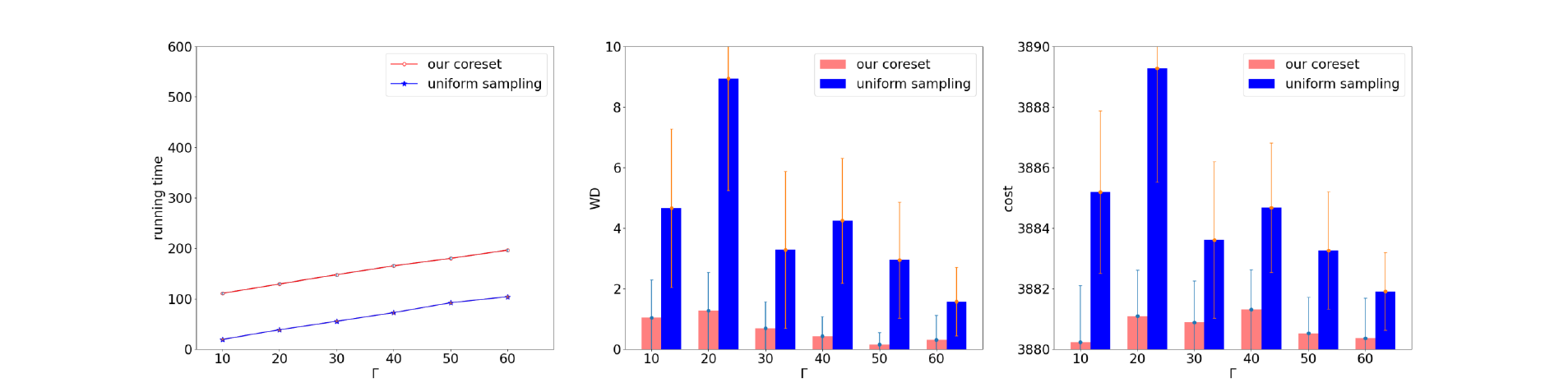}
    \caption{Comparisons of our coreset and uniform sampling on ModelNet40. Shift locations of 90 CAD models according to distribution $\mathcal{N}(0,100)$.}
\label{Fig:MN_sampling_1_100_90}
\end{figure}

\begin{figure}[htbp!]
    \centering
    \includegraphics[width=0.9\linewidth]{imgs/MN_50_30_Layered_sampling.pdf}
    \caption{Comparisons of our coreset and uniform sampling on ModelNet40. Shift locations of 30 CAD models according to distribution $\mathcal{N}(0,150)$.}
\label{Fig:MN_sampling_150_30}
\end{figure}

\begin{figure}[htbp!]
    \centering
    \includegraphics[width=0.9\linewidth]{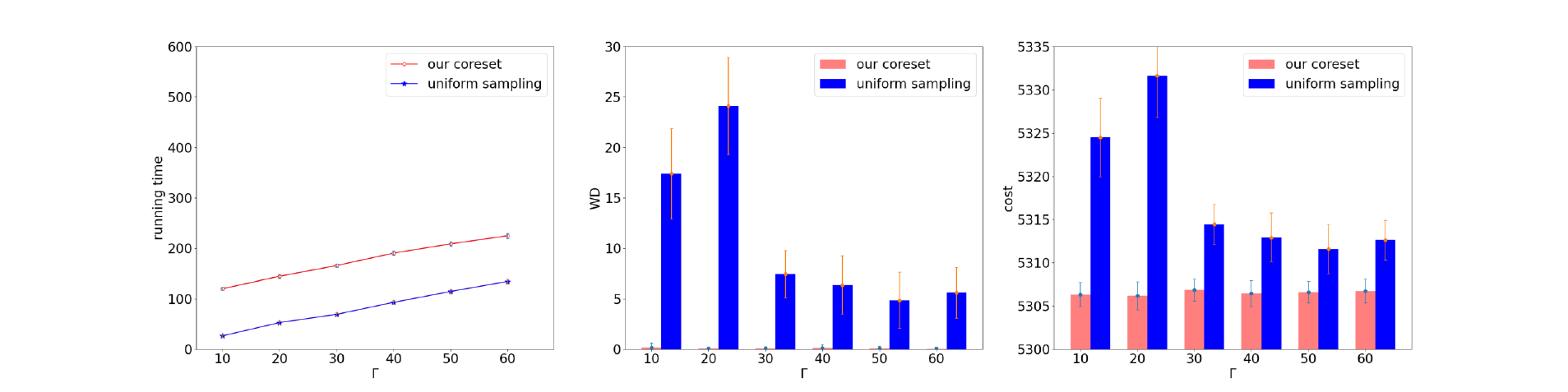}
    \caption{Comparisons of our coreset and uniform sampling on ModelNet40. Shift locations of 60 CAD models according to distribution $\mathcal{N}(0,150)$.}
\label{Fig:MN_sampling_1_150_60}
\end{figure}

\begin{figure}[htbp!]
    \centering
    \includegraphics[width=0.9\linewidth]{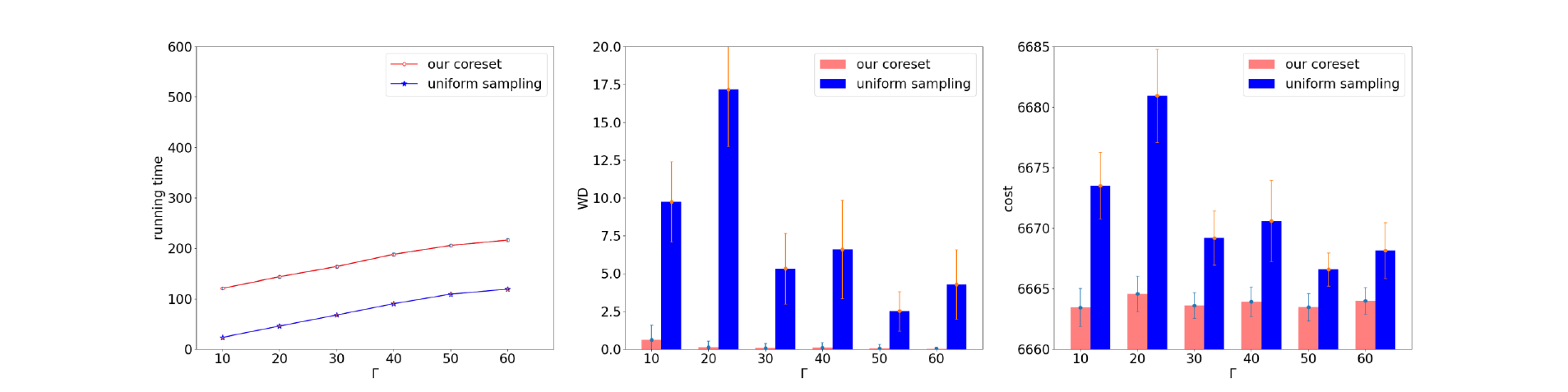}
    \caption{Comparisons of our coreset and uniform sampling on ModelNet40. Shift locations of 90 CAD models according to distribution $\mathcal{N}(0,150)$.}
\label{Fig:MN_sampling_1_150_90}
\end{figure}

\wxWhite{11111111111111111111}\\

\vspace{10mm}

\subsection{ Numerical Instability of  Le et al. \cite{le2021robust}}
\label{App:numerical}

In this section,  we perform experiments on MNIST dataset to illustrate the numerical instability of  Le et al. \cite{le2021robust}. 
We select $300$ images and  obtain a clean dataset $\wxcalQ$.
We add $\zeta = 0.2$ mass of Gaussian noise from $\mathcal{N}(40,\cdot)$ to each measure $\mu^l\in\wxcalQ$ to obtain a noisy dataset $\wxcalQ'$. Then, we compute barycenter by the original fixed-support WB algorithm and our \wxfixRWB \ on the noisy dataset respectively. 
The results in the following table show that the method in \cite{le2021robust} suffers from 
numerical instability. The symbol $\#$ denotes that we cannot compute the result due to numerical instability problem.

\begin{longtable}{llcll}
    \toprule
    N.D. & method & runtime ($\downarrow$) & \textsf{WD} ($\downarrow$) & \textsf{cost} ($\downarrow$)  \\
    \midrule 
    \endfirsthead   
   \multicolumn{3}{c}%
   {{\bfseries \wxWhite{111111111111} -- continued from previous page}} \\
    \toprule
    N.D. & method & runtime ($\downarrow$) & \textsf{WD} ($\downarrow$) & \textsf{cost} ($\downarrow$)  \\
    \midrule
    \endhead 
   \hline \multicolumn{3}{l}{{Continued on next page}} \\ 
   \endfoot  
   \hline
   \endlastfoot
\multirow{8}*{$\mathcal{N}(40,0^2)$} & our \wxfixRWB         & 5.06 & 178.99 & 181.66 \\                                 
                                     & fixed-support WB      & 5.23 & 179.62 & 182.29 \\                                      
                                     & UOT ($\gamma = 1$)    & 2.34 & 332.58 & 331.17 \\                                  
                                     & UOT ($\gamma = 4$)    & 2.34 & 246.93 & 248.12 \\                            
                                     & UOT ($\gamma = 16$)   & 2.34 & 198.82 & 199.93 \\                          
                                     & UOT ($\gamma = 64$)   & 2.33 & 181.89 & 182.59 \\                          
                                     & UOT ($\gamma = 256$)  & 2.33 & 177.15 & 177.75 \\                          
                                     & UOT ($\gamma = 1024$) & 2.51 & 175.93 & 176.50 \\   \hline                       
\multirow{8}*{$\mathcal{N}(40,5^2)$} & our \wxfixRWB         & 4.61 & 170.53 & 181.79  \\                                
                                     & fixed-support WB      & 4.24 & 172.74 & 184.04  \\                                     
                                     & UOT ($\gamma = 1$)    & 2.29 & 6.55   & 9.49  \\                                  
                                     & UOT ($\gamma = 4$)    & 2.17 & 32.63  & 37.17  \\                            
                                     & UOT ($\gamma = 16$)   & 2.18 & 201.32 & 212.21  \\                          
                                     & UOT ($\gamma = 64$)   & 2.25 & 379.00 & 390.25  \\                          
                                     & UOT ($\gamma = 256$)  & 2.23 & 452.12 & 463.62  \\                          
                                     & UOT ($\gamma = 1024$) & 2.22 & 474.26 & 485.96  \\   \hline      
\multirow{8}*{$\mathcal{N}(40,10^2)$} & our \wxfixRWB        & 5.19 & 132.14 & 136.37    \\                                 
                                      & fixed-support WB     & 3.42 & 157.34 & 161.61    \\                              
                                      & UOT ($\gamma = 1$)   &   \#  &   \#    &    \#      \\                                  
                                      & UOT ($\gamma = 4$)   &   \#  &   \#    &    \#      \\                            
                                      & UOT ($\gamma = 16$)  &   \#  &   \#    &    \#      \\                          
                                      & UOT ($\gamma = 64$)  &   \#  &   \#    &    \#      \\                          
                                      & UOT ($\gamma = 256$) &   \#  &   \#    &    \#      \\                          
                                      & UOT ($\gamma = 1024$)&   \#  &   \#    &    \#      \\   \hline   
\multirow{8}*{$\mathcal{N}(40,15^2)$} & our \wxfixRWB        & 6.81 & 60.66  & 72.76  \\                             
                                      & fixed-support WB     & 3.51 & 124.52 & 139.82  \\                                       
                                      & UOT ($\gamma = 1$)   &  \#   &   \#    &   \#    \\                                  
                                      & UOT ($\gamma = 4$)   &  \#   &   \#    &   \#    \\                            
                                      & UOT ($\gamma = 16$)  &  \#   &   \#    &   \#    \\                          
                                      & UOT ($\gamma = 64$)  &  \#   &   \#    &   \#    \\                          
                                      & UOT ($\gamma = 256$) &  \#   &   \#    &   \#    \\                          
                                      & UOT ($\gamma = 1024$)&  \#   &   \#    &   \#    \\   \hline   
\multirow{8}*{$\mathcal{N}(40,20^2)$} & our \wxfixRWB        &  7.57 & 48.39   & 50.94    \\                                   
                                      & fixed-support WB     &  3.66 & 130.31  & 129.04   \\                                   
                                      & UOT ($\gamma = 1$)   &   \#  &    \#   &     \#  \\                        
                                      & UOT ($\gamma = 4$)   &   \#  &    \#   &     \#  \\                  
                                      & UOT ($\gamma = 16$)  &   \#  &    \#   &     \#  \\                
                                      & UOT ($\gamma = 64$)  &   \#  &    \#   &     \#  \\                
                                      & UOT ($\gamma = 256$) &   \#  &    \#   &     \#  \\                
                                      & UOT ($\gamma = 1024$)&   \#  &    \#   &     \#  \\  \hline
\multirow{8}*{$\mathcal{N}(40,25^2)$}& our \wxfixRWB         & 4.18 & 39.79  & 42.23   \\                                      
                                     & fixed-support WB      & 4.37 & 114.63 & 116.23  \\                                   
                                     & UOT ($\gamma = 1$)    &  \#   &   \#    &   \#  \\                                  
                                     & UOT ($\gamma = 4$)    &  \#   &   \#    &   \#  \\                            
                                     & UOT ($\gamma = 16$)   &  \#   &   \#    &   \#  \\                          
                                     & UOT ($\gamma = 64$)   &  \#   &   \#    &   \#  \\                          
                                     & UOT ($\gamma = 256$)  &  \#   &   \#    &   \#  \\                          
                                     & UOT ($\gamma = 1024$) &  \#   &   \#    &   \#  \\   \hline                                         
\multirow{8}*{$\mathcal{N}(40,30^2)$} & our \wxfixRWB        & 7.59  &  35.06  & 35.82   \\                                       
                                     & fixed-support WB      & 8.75  &  127.58 & 112.89 \\                                       
                                     & UOT ($\gamma = 1$)    &  \#   &    \#   &   \#    \\                                  
                                     & UOT ($\gamma = 4$)    &  \#   &    \#   &   \#    \\                            
                                     & UOT ($\gamma = 16$)   &  \#   &    \#   &   \#    \\                          
                                     & UOT ($\gamma = 64$)   &  \#   &    \#   &   \#    \\                          
                                     & UOT ($\gamma = 256$)  &  \#   &    \#   &   \#    \\                          
                                     & UOT ($\gamma = 1024$) &  \#   &    \#   &   \#    \\   \hline   
\multirow{8}*{$\mathcal{N}(40,35^2)$} & our \wxfixRWB         & 8.83 & 20.71   & 24.00    \\                                           
                                      & fixed-support WB      & 6.61 & 100.41  & 103.10    \\                                           
                                      & UOT ($\gamma = 1$)    &  \#  &    \#   &     \#  \\                                  
                                      & UOT ($\gamma = 4$)    &  \#  &    \#   &     \#  \\                            
                                      & UOT ($\gamma = 16$)   &  \#  &    \#   &     \#  \\                          
                                      & UOT ($\gamma = 64$)   &  \#  &    \#   &     \#  \\                          
                                      & UOT ($\gamma = 256$)  &  \#  &    \#   &     \#  \\                          
                                      & UOT ($\gamma = 1024$) &  \#  &    \#   &     \#  \\   \hline   
\multirow{8}*{$\mathcal{N}(40,40^2)$} & our \wxfixRWB         & 5.13 & 12.55   & 16.49   \\                                      
                                      & fixed-support WB      & 6.22 & 108.43  & 104.04  \\                                      
                                      & UOT ($\gamma = 1$)    &   \# &    \#   &      \# \\                                  
                                      & UOT ($\gamma = 4$)    &   \# &    \#   &      \# \\                            
                                      & UOT ($\gamma = 16$)   &   \# &    \#   &      \# \\                          
                                      & UOT ($\gamma = 64$)   &   \# &    \#   &      \# \\                          
                                      & UOT ($\gamma = 256$)  &   \# &    \#   &      \# \\                          
                                      & UOT ($\gamma = 1024$) &   \# &    \#   &      \# \\                                           
\end{longtable}

\section{Other preliminaries}

\begin{definition}[Doubling dimension \cite{huang2018epsilon}]
 The doubling dimension of metric space $(\wxcalX,\wxdist)$ is the least $\wxDdim$ such that every ball with radius $2r$ can be covered by at most $2^\wxDdim$ balls with radius $r$.
\end{definition}

Roughly speaking, the doubling dimension is a measure for describing the growth rate of the data set $\wxcalX$ with respect to the metric $\wxdist(\cdot,\cdot)$. As a special case, the doubling dimension of the Euclidean space $\wxbbR^d$ is $\Theta(d)$.

\begin{definition}[Wasserstein barycenter]     \label{Def:WB}
Given a set of probability measures $\wxcalQ$ as in \wxRefEq{Eq:Q},
    the Wasserstein barycenter on
    $\wxcalQ$ is a new probability measure $\nu^*\in\wxcalPX$ that minimizes the following objective    
 
    \begin{equation*}
        WB(\wxcalQ,\nu) := \frac{1}{\omega(\wxcalQ)}\sum_{l=1}^m \omega_l \cdot W_z^z(\mu^l,\nu) \text{ with } \nu\in\wxcalPX.
    \end{equation*}
\end{definition}

\begin{lemma}
    \label{Lem:Generalized triangle inequality}
    Given three points $a,b,c \in \wxcalX$ and a real number $z\geq 1$, the following Generalized triangle inequalities\cite{makarychev2022performance,sohler2018strong} hold for any $0<s\leq 1$:
    \begin{itemize}
        \item $\wxdist^z(a,b) \leq (1+s)^{z-1} \wxdist^z(a,c) + (1+\frac{1}{s})^{z-1} \wxdist^z(b,c)$;
        
        \item $\wxVert{\wxdist^z(a,c) - \wxdist^z(b,c)}    \leq s\cdot \wxdist^z(a,c) + (\frac{3z}{s})^{z-1}\wxdist^z(a,b)$. 
        
    \end{itemize}
\end{lemma}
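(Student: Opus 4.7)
My plan is to derive both inequalities from the ordinary triangle inequality combined with standard power-of-sum estimates, so the lemma requires no input specific to the metric structure beyond $\wxdist(a,b)\leq\wxdist(a,c)+\wxdist(c,b)$. For the first inequality, I would exploit convexity of the map $t\mapsto t^z$ on $[0,\infty)$ for $z\geq 1$. Writing $\wxdist(a,c)+\wxdist(b,c)=\alpha\cdot\bigl(\wxdist(a,c)/\alpha\bigr)+\beta\cdot\bigl(\wxdist(b,c)/\beta\bigr)$ with $\alpha+\beta=1$ and applying Jensen to $t^z$ yields $(\wxdist(a,c)+\wxdist(b,c))^z\leq\alpha^{1-z}\wxdist^z(a,c)+\beta^{1-z}\wxdist^z(b,c)$. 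Choosing $\alpha=1/(1+s)$ and $\beta=s/(1+s)$ produces exactly the target coefficients $(1+s)^{z-1}$ and $(1+1/s)^{z-1}$; combining with the triangle inequality raised to the $z$-th power finishes the first bound.

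For the second inequality, the starting point is the elementary estimate $|x^z-y^z|\leq z\cdot\max(x,y)^{z-1}\cdot|x-y|$ for $x,y\geq 0$, which is just the mean value theorem applied to $t\mapsto t^z$. Setting $x=\wxdist(a,c)$ and $y=\wxdist(b,c)$, the triangle inequality gives $|x-y|\leq\wxdist(a,b)$. When $y\leq x$ this directly bounds the left-hand side by $z\cdot\wxdist^{z-1}(a,c)\cdot\wxdist(a,b)$; when $y>x$ I would replace $\wxdist^{z-1}(b,c)$ using the first inequality applied to the triangle $\wxdist(b,c)\leq\wxdist(a,c)+\wxdist(a,b)$, which swaps in a bound involving $\wxdist^{z-1}(a,c)$ and $\wxdist^{z-1}(a,b)$ at the cost of an absolute constant. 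The extra $\wxdist^{z-1}(a,b)\cdot\wxdist(a,b)=\wxdist^z(a,b)$ term that appears is already of the admissible form, and can be absorbed into the final coefficient on $\wxdist^z(a,b)$.

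To convert $z\cdot\wxdist^{z-1}(a,c)\cdot\wxdist(a,b)$ into the advertised form, I would apply weighted Young's inequality with conjugate exponents $p=z/(z-1)$ and $q=z$: for any $\lambda>0$, $uv\leq\lambda^p u^p/p+\lambda^{-q}v^q/q$. Tuning $\lambda$ so that the coefficient on $\wxdist^z(a,c)$ equals $s$ forces the coefficient on $\wxdist^z(a,b)$ to be of order $((z-1)/s)^{z-1}$, which is comfortably below $(3z/s)^{z-1}$. The only real obstacle I anticipate is bookkeeping: the sub-case $y>x$ contributes an additional constant inherited from the first inequality, and one has to verify that the generous factor $3z$ (rather than the tighter $z-1$) uniformly absorbs both this constant and the Young constant. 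No non-trivial analytic difficulty is involved; everything reduces to the interplay of convexity of $t^z$ and Young's inequality with a single free parameter.
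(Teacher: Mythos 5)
First, a point of reference: the paper does not prove this lemma at all --- it is imported verbatim from \cite{makarychev2022performance,sohler2018strong} --- so there is no in-paper argument to compare against, and your proposal has to be judged on its own terms.

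Your proof of the first inequality is complete and correct, and it is the standard one: the triangle inequality plus monotonicity of $t\mapsto t^z$ reduces it to $(u+v)^z\leq\alpha^{1-z}u^z+\beta^{1-z}v^z$, and the weights $\alpha=\tfrac{1}{1+s}$, $\beta=\tfrac{s}{1+s}$ give exactly the coefficients $(1+s)^{z-1}$ and $(1+\tfrac{1}{s})^{z-1}$. The first half of the second inequality is also fine: when $\wxdist(b,c)\leq\wxdist(a,c)$, the mean value theorem plus Young's inequality with exponents $\tfrac{z}{z-1}$ and $z$ yields $z\,\wxdist^{z-1}(a,c)\,\wxdist(a,b)\leq s\,\wxdist^z(a,c)+\bigl(\tfrac{z-1}{s}\bigr)^{z-1}\wxdist^z(a,b)$, which sits below $(\tfrac{3z}{s})^{z-1}$.

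The gap is in the case $\wxdist(b,c)>\wxdist(a,c)$. Your claim that substituting $\wxdist(b,c)\leq\wxdist(a,c)+\wxdist(a,b)$ into $\wxdist^{z-1}(b,c)$ costs only ``an absolute constant'' is not right. The parameter-free estimate is $(u+v)^{z-1}\leq 2^{z-2}(u^{z-1}+v^{z-1})$, and once the factor $2^{z-2}$ is carried through the Young step it enters the coefficient of $\wxdist^z(a,b)$ raised to the power $z$: writing $A=z\cdot 2^{z-2}$, the optimal Young split of $A\,x^{z-1}t$ subject to coefficient $s$ on $x^z$ forces coefficient $\tfrac{A^z}{z}\bigl(\tfrac{z-1}{sz}\bigr)^{z-1}=2^{z(z-2)}\bigl(\tfrac{z-1}{s}\bigr)^{z-1}$ on $t^z$, which already exceeds $(\tfrac{3z}{s})^{z-1}$ at $z=4$ and diverges from it like $2^{z^2}$ versus $3^{z}$ thereafter. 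The fix is to use the \emph{parametrized} convexity bound at exponent $z-1$ with parameter $u=\Theta(1/z)$ rather than $u=1$; then $(1+u)^{z(z-2)}=O(3^{z-1})$ keeps the dominant term inside the budget, while the companion term picks up a coefficient $z(1+z)^{z-2}\leq 2^{z-2}z^{z-1}\leq\tfrac12(3z)^{z-1}$, which also fits. This is exactly the verification you deferred, and it is the substantive part of the proof, not mere bookkeeping. Two smaller points: invoking the first inequality at exponent $z-1$ presupposes $z\geq 2$, so $1\leq z<2$ needs the separate (easier) observation that $t\mapsto t^{z-1}$ is then concave and subadditive; and there is a cleaner route that avoids Young's inequality entirely, namely splitting on whether $\wxdist(a,b)\leq\tfrac{s}{2z}\wxdist(a,c)$ --- in the small case the MVT bound is already at most $s\cdot\wxdist^z(a,c)$, and in the large case both $\wxdist(a,c)$ and $\wxdist(b,c)$ are at most $\tfrac{3z}{s}\wxdist(a,b)$, so everything is controlled by a power of $\wxdist(a,b)$ directly (at the price of a slightly larger exponent on the constant, which is immaterial for every use of the lemma in this paper).
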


\section{Proof of \Cref{Sec:model}}
\label{App:proof of Sec3}

\begin{proof}(\textbf{of \Cref{Lem:reduction AOT}})
\textbf{Equivalence of $\wxcalD_g$ and $\wxcalD_h$ under $\varphi$:} First, we prove $\varphi:\wxcalD_g\rightarrow\wxcalD_h$ is a map. That is, given any $(\wxbfaout^0,\wxbfbout^0,\wxbfP^0) \in \wxcalD_g$, we need to prove $\wxbfPAug^0 = \varphi(\wxbfaout^0,\wxbfbout^0,\wxbfP^0)$ is the only image in $\wxcalD_h$ as follows. 
From \wxRefEq{Eq:phi}, for any $j\in[n]$, we have 

\begin{equation*}
 \begin{aligned}
  \sum_{i=1}^{n+1}(\wxPAug^0)_{ij} = \sum_{i=1}^n P^0_{ij} + \frac{(\wxbout^0)_j}{1-\zeta_{\nu}} 
 = \frac{b^0_j - (\wxbout^0)_j}{1-\zeta_{\nu}} + \frac{(\wxbout^0)_j}{1-\zeta_{\nu}} = \frac{b^0_j}{1-\zeta_{\nu}} = (\wxbfbAug^0)_j,
 \end{aligned}
\end{equation*}
where the second equality comes from the fact that $(\wxbfP^0)^T\wxbfOne = \frac{\wxbfb^0 - \wxbfbout^0}{1-\zeta_{\nu}}$ in \wxRefEq{Eq:RWD}, and the last equality is due to \wxRefEq{Eq:notation for AOT}. Similarly, by \wxRefEq{Eq:RWD} and \wxRefEq{Eq:notation for AOT}, we obtain $\sum_{i=1}^{n+1}(\wxPAug^0)_{ij} = \frac{\zeta_{\mu}}{1-\zeta_{\mu}} = (\wxbfbAug^0)_j$ for $j = n+1$.
Till now, we have $(\wxbfPAug^0)^T\wxbfOne = \wxbfbAug^0$. By a similar way, we have $\wxbfPAug^0\wxbfOne = \wxbfaAug^0$. Besides, it is obvious that $\wxbfPAug^0\in\wxbbR^{(n+1)\times (n+1)}$. Therefore, for each $(\wxbfaout^0,\wxbfbout^0,\wxbfP^0)\in\wxcalD_g$, we have the only $\wxbfPAug^0 = \varphi(\wxbfaout^0,\wxbfbout^0,\wxbfP^0) \in\wxcalD_h$, which indicates that $\varphi$ is a map. 
Then, it is obvious that $ \varphi(\wxbfaout^0,\wxbfbout^0,\wxbfP^0) = \varphi(\wxbfaout',\wxbfbout',\wxbfP')$ holds only if $ (\wxbfaout^0,\wxbfbout^0,\wxbfP^0) = (\wxbfaout',\wxbfbout',\wxbfP')$; thus, $\varphi$ is a injection.

Next, to prove $\varphi$ is a surjection, we construct $$\psi:\wxcalD_h\rightarrow\wxcalD_g, \wxbfPAug^1\mapsto (\wxbfaout^1,\wxbfbout^1,\wxbfP^1),$$ where 
$P^1_{ij} = (\wxPAug^1)_{ij}, 
		(\wxaout^1)_i = (1-\zeta_{\mu}) \cdot (\wxPAug^1)_{i,n+1},
		(\wxbout^1)_j = (1-\zeta_{\nu}) \cdot (\wxPAug^1)_{(n+1,j)}$
for $i,j\in[n]$. We can demonstrate that $\psi$ is also a map by similar techniques.

Then, given any 
$\wxbfPAug^1\in\wxcalD_h$, we can construct $(\wxbfaout^1,\wxbfbout^1,\wxbfP^1) = \psi(\wxbfPAug^1) \in\wxcalD_g$; besides, we have $\varphi(\wxbfaout^1,\wxbfbout^1,\wxbfP^1) = \wxbfPAug^1$. That is, for any $\wxbfPAug^1\in\wxcalD_h$, we can find its preimage in $\wxcalD_g$. Thus, $\varphi$ is a surjective.
Therefore, $\varphi$ is a bijection between $\wxcalD_g$ and $\wxcalD_h$.

\noindent \textbf{Equivalence of $h\varphi$ and $g$:} 

For any $(\wxbfaout,\wxbfbout,\wxbfP)\in\wxcalD_g$, we can construct $\wxbfPAug = \varphi(\wxbfaout,\wxbfbout,\wxbfP)\in\wxcalD_h$. Then, we have
    \begin{equation*}
        \begin{aligned}
            g(\wxbfaout,\wxbfbout,\wxbfP) = \langle \wxbfP,\wxbfC \rangle = \sum_{i=1}^{n}\sum_{j=1}^{n} P_{ij} \cdot C_{ij} = 
 \sum_{i=1}^{n+1}\sum_{j=1}^{n+1} (\wxPAug)_{ij} \cdot (\wxCAug)_{ij} = 
 \langle \wxbfPAug,\wxbfCAug\rangle = h(\wxbfPAug) = h \varphi(\wxbfaout,\wxbfbout,\wxbfP)
        \end{aligned}
    \end{equation*}
where the third equality comes from \wxRefEq{Eq:AOT} and the definition of $\varphi$. Specially, from \wxRefEq{Eq:AOT}, we have $(\wxCAug)_{ij} = 0$ if $i=n+1$ or $j=n+1$, and $(\wxCAug)_{ij} = C_{ij}$ for $i,j\in[n]$; via the definition of $\varphi$, we have $P_{ij} = (\wxPAug)_{ij}$ for $i,j\in[n]$.
Thus, we obtain $g = h\varphi$. 

Similarly, for any $\wxbfPAug\in\wxcalD_h$, we can also prove $h(\wxbfPAug) = g\varphi^{-1}(\wxbfPAug)$, where $\varphi^{-1} := \psi$. Thus, we have $h = g\varphi^{-1}$. 
\end{proof}

\begin{proof}(\textbf{of \Cref{Them:RWB}})
For any $\mu,\nu\in\wxcalPX$, by combining \wxRefEq{Eq:RWD},\wxRefEq{Eq:AOT} and \Cref{Lem:reduction AOT}, we have $\wxcalWzz(\mu,\nu) = \wxcalAOT(\mu,\nu)$. Then,

 \begin{equation*}
 \begin{aligned}
 \wxcalRWB(\wxcalQ,\nu) = \frac{1}{\omega(\wxcalQ)} \sum_{l=1}^m \omega_l\cdot \wxcalWzz(\mu^l,\nu) = \frac{1}{\omega(\wxcalQ)} \sum_{l=1}^m \omega_l\cdot \wxcalAOT(\mu^l,\nu) = \wxcalAWB(\wxcalQ,\nu).
 \end{aligned}
 \end{equation*}
\end{proof}

\section{Proof of \Cref{Sec:free}}
\label{App:proof of Sec4}

\begin{lemma} \label{Lem:approx of WB}
    If we take a sample $\mu$ from $\wxcalQ$ according to the distribution $\frac{\omega_l}{\omega(\wxcalQ)}$, 
    then for any $\nu\in\wxcalPX$ we have
    \begin{equation*}
        \wxbbE\wxmiddleBracket{ \sum_{l=1}^m\omega_l\cdot W_z^z(\mu^l,\mu) }\leq 2^z \cdot \sum_{l=1}^m\omega_l\cdot W_z^z(\mu^l,\nu)
    \end{equation*}
\end{lemma}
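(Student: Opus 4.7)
The plan is to combine a standard generalized triangle inequality applied to the $z^{\mathrm{th}}$-Wasserstein distance with the fact that the sampling distribution $\omega_l/\omega(\wxcalQ)$ is tailored precisely to make the ``extra'' term an average of the quantities we are bounding.

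More concretely, my first step would be to invoke the fact that $W_z(\cdot,\cdot)$ is a metric on the space of probability measures with finite $z^{\mathrm{th}}$ moment, so that for any fixed $l \in [m]$ and any $\nu \in \wxcalPX$,
\begin{equation*}
W_z(\mu^l,\mu) \;\leq\; W_z(\mu^l,\nu) + W_z(\nu,\mu).
\end{equation*}
Raising both sides to the $z^{\mathrm{th}}$ power and applying the standard scalar inequality $(a+b)^z \leq 2^{z-1}(a^z + b^z)$ for $z \geq 1$ (equivalently, \Cref{Lem:Generalized triangle inequality} with $s = 1$), yields
\begin{equation*}
W_z^z(\mu^l,\mu) \;\leq\; 2^{z-1}\, W_z^z(\mu^l,\nu) + 2^{z-1}\, W_z^z(\nu,\mu).
\end{equation*}

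Next, I would multiply both sides by $\omega_l$, sum over $l \in [m]$, and then take expectation over the sample $\mu$ drawn according to $\Pr[\mu = \mu^k] = \omega_k/\omega(\wxcalQ)$. The first term on the right does not involve $\mu$, so it passes through the expectation unchanged. For the second term, by linearity and the definition of the sampling distribution,
\begin{equation*}
\wxbbE\bigl[W_z^z(\nu,\mu)\bigr] \;=\; \sum_{k=1}^m \frac{\omega_k}{\omega(\wxcalQ)}\, W_z^z(\nu,\mu^k),
\end{equation*}
so multiplying by $\sum_l \omega_l = \omega(\wxcalQ)$ (from the outer sum) exactly cancels the denominator and reproduces $\sum_k \omega_k W_z^z(\mu^k,\nu)$. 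Adding the two contributions of $2^{z-1}$ gives the claimed factor $2^z$.

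I do not expect any serious obstacle here: the proof is essentially a one-line application of the triangle inequality followed by the observation that sampling proportional to $\omega_l$ is designed so that $\wxbbE[W_z^z(\nu,\mu)]$ equals the weighted average that already appears on the right-hand side. The only place one needs to be careful is invoking the correct metric property of $W_z$ (it is a metric on measures with finite $z^{\mathrm{th}}$ moment, which is the relevant setting) and keeping track of the normalization $\omega(\wxcalQ)$ when converting an expectation into a weighted sum.
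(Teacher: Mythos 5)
Your proposal is correct and follows essentially the same route as the paper: apply the triangle inequality for $W_z$ together with $(a+b)^z \leq 2^{z-1}(a^z+b^z)$ (i.e., \Cref{Lem:Generalized triangle inequality} with $s=1$), then observe that $\wxbbE[W_z^z(\mu,\nu)] = \frac{1}{\omega(\wxcalQ)}\sum_k \omega_k W_z^z(\mu^k,\nu)$ so the two $2^{z-1}$ contributions combine into the factor $2^z$. This matches the paper's proof step for step.
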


\begin{proof}[\textbf{of \Cref{Lem:approx of WB}}]
From the generalized triangle inequality in \Cref{Lem:Generalized triangle inequality}, we have
\begin{equation}
\begin{aligned}
& \wxbbE\wxmiddleBracket{ \sum_{l=1}^m\omega_l\cdot W_z^z(\mu^l,\mu) } \\
\leq & \wxbbE\wxmiddleBracket{ \sum_{l=1}^m\omega_l\cdot2^{z-1}\cdot (W_z^z(\mu^l,\nu) + W_z^z(\mu,\nu)) } \label{Subeq:002}
\end{aligned}
\end{equation}

 Since the expectation is taken on $\mu$, we have 
 \begin{equation*}
 \begin{aligned}
 & \wxRefEq{Subeq:002} = \sum_{l=1}^m\omega_l\cdot2^{z-1}\cdot W_z^z(\mu^l,\nu) + \sum_{l=1}^m\omega_l\cdot2^{z-1}\cdot\wxbbE\wxmiddleBracket{ W_z^z(\mu,\nu) } \\
= & \sum_{l=1}^m\omega_l\cdot2^{z-1}\cdot W_z^z(\mu^l,\nu) + 2^{z-1} \cdot \omega(\wxcalQ) \cdot\wxbbE\wxmiddleBracket{ W_z^z(\mu,\nu) } \label{Subeq:003} \\
= & 2^z \cdot \sum_{l=1}^m\omega_l\cdot W_z^z(\mu^l,\nu).
\end{aligned}
\end{equation*}
\end{proof}

\begin{proof}(\textbf{of \Cref{Lem:approx solution}})
W.L.O.G., we assume that $\widehat{\wxcalQ} = \wxbigBracket{\hat{\mu}^l}_{l\in[m]}$ is the optimal noiseless measure set induced by the optimal solution $\nu^*$ for \wxfreeRWB \ on $\wxcalQ$;
that is,
\begin{equation*}
 \begin{aligned}
 \wxOPT = \wxcalRWB(\wxcalQ,\nu^*) = WB(\widehat{\wxcalQ},\nu^*) = \frac{1}{\omega(\wxcalQ)}\cdot\sum_{l=1}^m \omega_l \cdot W_z^z(\hat{\mu}^l,\nu^*).
 \end{aligned}
\end{equation*}
It is obvious that $\wxOPT$ and $\nu^*$ are also optimal value and optimal solution for Wasserstein barycenter problem on $\widehat{\wxcalQ}$.
By combining \Cref{Lem:approx of WB} and Markov's inequality, if we take $t$ samples $\wxbigBracket{ \hat{\mu}^{q1},\ldots,\hat{\mu}^{qt} }$ from $\widehat{\wxcalQ}$, then there exists an $2^z\alpha$-approximate solution $\hat{\mu}$ in $\wxbigBracket{ \hat{\mu}^{q1},\ldots,\hat{\mu}^{qt} }$ for Wasserstein barycenter problem on $\widehat{\wxcalQ}$ with probability at least $1-\alpha^{-t}$. 

Then, we have 
\begin{equation*}
 \begin{aligned}
 \wxcalRWB(\wxcalQ,\hat{\mu}) \leq WB(\widehat{\wxcalQ},\hat{\mu}) \leq 2^z\alpha\cdot WB(\widehat{\wxcalQ}, \nu^*)
 = 2^z\alpha \cdot \wxOPT,
 \end{aligned}
\end{equation*}
where the first inequality comes from that $\widehat{\wxcalQ}$ is the optimal noiseless measure set induced by $\nu^*$ instead of $\hat{\mu}$. ($\widehat{\wxcalQ}$ is the feasible noiseless measure set for $\hat{\mu}$, and is necessary the optimal one.)
Thus, $\hat{\mu}$ is a $2^z\alpha$-approximate solution for \wxfreeRWB \ on $\wxcalQ$.

However, $\hat{\mu}$ is unknown for us. Suppose $\widehat{\wxbfP}^l$ is the optimal coupling induced by Wasserstein distance between $\hat{\mu}$ and $\hat{\mu}^l$. 
Then, $\wxbigBracket{\widehat{\wxbfP}^l}_{l\in[m]}$ is a feasible coupling set for \wxfixRWB\ $\wxcalRWB_{\widehat{X}}(\wxcalQ,\hat{\mu})$, where $\widehat{X}$ is the locations of $\hat{\mu}$. 
Suppose $\hat{\nu} = \min_{\nu}\wxcalRWB_{\widehat{X}}(\wxcalQ,\nu)$, we have $\wxcalRWB_{\widehat{X}}(\wxcalQ,\hat{\nu}) \leq \wxcalRWB_{\widehat{X}}(\wxcalQ,\hat{\mu})$.
For the fixed $\hat{\mu}$ and $\hat{\nu}$, their locations are both $\widehat{X}$, thus 
$\wxcalRWB_{\widehat{X}}(\wxcalQ,\hat{\mu}) = \wxcalRWB(\wxcalQ,\hat{\mu})$ and $\wxcalRWB_{\widehat{X}}(\wxcalQ,\hat{\nu}) = \wxcalRWB(\wxcalQ,\hat{\nu})$. 

Then, we have
\begin{equation*}
 \wxcalRWB(\wxcalQ,\hat{\nu}) \leq \wxcalRWB(\wxcalQ,\hat{\mu}) \leq 2^z\alpha\cdot \wxOPT,
\end{equation*}
which implies that $\hat{\nu}$ is a $2^z\alpha$-approximate solution for \wxfreeRWB \ on $\wxcalQ$. Finally, according to the definition of $\wxnuTilde$, $\wxnuTilde$ must be an $2^z\alpha$-approximate solution.
\end{proof}

\begin{proof}(\textbf{of \Cref{Lem:bounded by r}})
    Let $\mu^0,\mu^1$ be the noiseless probability measures induced by $\wxcalRWDz(\mu,\nu)$ and $\wxcalRWDz(\mu,\wxnuTilde)$ respectively; that is, $\wxcalRWDz(\mu,\nu) = W_z(\mu^0,\nu)$ and $\wxcalRWDz(\mu,\wxnuTilde) = W_z(\mu^1,\wxnuTilde)$. Then, we have
    
    \begin{equation*}
        \wxcalRWDz^z(\mu,\nu) = W_z^z(\mu^0,\nu) \\
        \leq W_z^z(\mu^1,\nu) 
    \end{equation*}
    By using the generalized triangle inequality in \Cref{Lem:Generalized triangle inequality}, we obtain
    \begin{equation*}
    	\begin{aligned}
        & \wxcalRWDz^z(\mu,\nu) \leq (1+s)^{z-1} \cdot W_z^z(\mu^1,\wxnuTilde) + (1+\frac{1}{s})^{z-1} \cdot W_z^z(\nu,\wxnuTilde) \\
        \leq & (1+s)^{z-1} \cdot \wxcalRWDz^z(\mu,\wxnuTilde) + (1+\frac{1}{s})^{z-1} \cdot r^z,
        \end{aligned}
    \end{equation*}
 where the last inequality comes from the definition of $\wxcalD_{\wxnuTilde}$.
\end{proof}

\begin{proof}(\textbf{of \Cref{Lem:fix a solution}})
According to Hoeffding's inequality, if we set $\Gamma = \frac{\log1/\eta}{\epsilon^2} $, then the following inequality holds with probability at least $1-\eta$ for $0\leq k\leq K$.
 \begin{equation}\label{Eq:003}
 \wxVert{ \sum_{\mu\in\wxcalS_k} \tau(\mu) \cdot \wxcalRWDz^z(\mu,\nu) - \sum_{\mu\in\wxcalQ_k} \omega(\mu) \cdot \wxcalRWDz^z(\mu,\nu) } \leq \wxcalO(\epsilon) \cdot \omega(\wxcalQ_k) \cdot (2^k H)^z
 \end{equation}

We set $r=H$ in \Cref{Def:coreset}, then we have $W_z(\wxnuTilde,\nu) \leq H$ for $\nu\in\wxcalD_{\wxnuTilde}$. 
For any $\mu$ in the outermost layer $\wxcalQ_{K+1}$, by combining generalized triangle inequality and the relationship in \wxRefEq{Eq:partition layers}, we have $\wxcalRWDz^z(\mu,\nu) \in (1+\wxcalO(\epsilon)) \cdot \wxcalRWDz^z(\mu,\wxnuTilde)$ for any $\nu\in\wxcalD_{\wxnuTilde}$. 
Then, by combining \wxRefEq{Eq:muMaxMin_weights}, we have 
 \wxRefEq{Eq:003} holds for $k = K+1$.

Then, we can obtain
\begin{equation*}
 \begin{aligned}
 \wxVert{ \wxcalRWB(\wxcalQ, \nu) - \wxcalRWB(\wxcalS,\nu) } = \frac{1}{\omega(\wxcalQ)} \wxVert{ \sum_{\mu\in\wxcalQ} \omega(\mu) \cdot\wxcalRWDz^z(\mu,\nu) - \sum_{\mu\in\wxcalS} \tau(\mu) \cdot \wxcalRWDz^z(\mu,\nu) } \\
 \leq \frac{1}{\omega(\wxcalQ)} \sum_{k=0}^{K+1} \wxVert{ \sum_{\mu\in\wxcalQ_k} \omega(\mu) \cdot\wxcalRWDz^z(\mu,\nu) - \sum_{\mu\in\wxcalS_k} \tau(\mu) \cdot \wxcalRWDz^z(\mu,\nu) } \\
 \leq \frac{\wxcalO(\epsilon)}{\omega(\wxcalQ)} \cdot \sum_{k=0}^{K+1} \omega(\wxcalQ_k) \cdot (2^k H)^z = \wxcalO(\epsilon)\cdot\wxcalRWB(\wxcalQ,\wxnuTilde) \\
 \end{aligned}
\end{equation*}
where the first inequality follows from the partition in \wxRefEq{Eq:partition layers}, the second inequality comes from \wxRefEq{Eq:003}, and the last inequality is due to that the partition \wxRefEq{Eq:partition layers} is anchored at $\wxnuTilde$.

Since $\wxnuTilde$ is an $\wxcalO(1)$-approximate solution, we have $\wxcalRWB(\wxcalQ, \wxnuTilde)\leq \wxcalO(1)\cdot \wxOPT$. Moreover, we know $\wxOPT \leq \wxcalRWB(\wxcalQ, \nu)$, thus $ \wxVert{ \wxcalRWB(\wxcalQ, \nu) - \wxcalRWB(\wxcalS,\nu) } \leq \wxcalO(\epsilon) \cdot \wxcalRWB(\wxcalQ, \nu)$ holds.
\end{proof}

\bibliographystyle{siam}

\end{document}